\renewcommand{\epsilon}{\varepsilon}
\newtheorem{theorem}{Theorem}[section]
\newtheorem{lemma}[theorem]{Lemma}
\newtheorem{proposition}[theorem]{Proposition}
\newtheorem{corollary}[theorem]{Corollary}
\newtheorem{assumption}[theorem]{Assumption}
\def\FullBox{\hbox{\vrule width 6pt height 6pt depth 0pt}}
\def\qed{\ifmmode\qquad\FullBox\else{\unskip\nobreak\hfil
\penalty50\hskip1em\null\nobreak\hfil\FullBox
\parfillskip=0pt\finalhyphendemerits=0\endgraf}\fi}
\title{\bf Efficient Diffusion Models for Symmetric Manifolds}
 \author{Oren Mangoubi\\ Worcester Polytechnic Institute \and Neil He \\ Yale University \and Nisheeth K. Vishnoi \\ Yale University}
\begin{document}
\date{}

\maketitle

\begin{abstract}
We introduce a framework for designing efficient diffusion models for $d$-dimensional symmetric-space Riemannian manifolds, including the torus, sphere,  special orthogonal group and unitary group. Existing manifold diffusion models often depend on heat kernels, which lack closed-form expressions and require either $d$ gradient evaluations or exponential-in-$d$ arithmetic operations per training step. We introduce a new diffusion model for symmetric manifolds with a spatially-varying covariance, allowing us to leverage a projection of Euclidean Brownian motion to bypass heat kernel computations. Our training algorithm minimizes a novel efficient objective derived via It\^o's Lemma, allowing each step to run in $O(1)$ gradient evaluations and nearly-linear-in-$d$ ($O(d^{1.19})$) {\em arithmetic} operations, reducing the gap between diffusions on symmetric manifolds and Euclidean space. Manifold symmetries ensure the diffusion satisfies an ``average-case'' Lipschitz condition, enabling accurate and efficient sample generation. Empirically, our model outperforms prior methods in training speed and improves sample quality on synthetic datasets on the torus, special orthogonal group, and unitary group.
\end{abstract}


\newpage
\tableofcontents

\newpage

\section{Introduction}

Recently, denoising diffusion-based methods have achieved significant success in generating synthetic data, including highly realistic images and videos \cite{OpenAI2023VideoGen}. 
Given a dataset $D$ sampled from an unknown probability distribution $\pi$, a diffusion generative model aims to learn a distribution $\nu$ that approximates $\pi$ and generates new samples from $\nu$. 
While most diffusion models operate in Euclidean space $\mathbb{R}^d$ \cite{ho2020denoising, rombach2022high}, several applications require data constrained to a $d$-dimensional non-Euclidean manifold $\mathcal{M}$, such as robotics \cite{feiten2013rigid}, drug discovery \cite{cheng2021molecular}, and quantum physics \cite{cranmer2023advances},  where configurations are often represented on symmetric-space manifolds like the torus, sphere, special orthogonal group $\mathrm{SO}(n)$, or unitary group $\mathrm{U}(n)$ where $d \approx n^2$.
A common approach enforces manifold constraints by mapping samples from Euclidean space $\mathbb{R}^d$ to $\mathcal{M}$, but this often degrades sample quality due to distortions introduced by the mapping (see Appendix \ref{sec_Euclidean_counterexamples} for details).

To address this, several works have developed diffusion models constrained to non-Euclidean Riemannian manifolds \cite{de2022riemannian, huang2022riemannian, lou2023scaling, zhu2024trivialized, yim2023se}. However, a significant gap remains between the runtime and sampling guarantees of Euclidean and manifold-based diffusion models.  For instance, while Euclidean models have a per-iteration runtime of $O(d)$ arithmetic operations and $O(1)$ evaluations of the model's gradient,  %
objectives of manifold diffusion models often require exponential-in-$d$ arithmetic operations, or evaluating Riemannian divergence operators which require  $O(d)$ {\em gradient evaluations}. 
 Reducing this gap, particularly for symmetric manifolds, remains an open challenge.

To understand the technical difficulty, first consider the Euclidean case. A diffusion model consists of two components: a forward process that adds noise over time $T>0$ until the data is nearly Gaussian, and a reverse process that starts from a Gaussian sample and gradually removes the noise to generate samples approximating the original distribution $\pi$.
A discrete-time Gaussian latent variable model is used to approximate the reverse diffusion. In the manifold case, the forward process corresponds to standard Brownian motion on the manifold, and the reverse diffusion is its time-reversal. However, Gaussians are not generally defined on manifolds.
To address this, previous works move to continuous time, where infinitesimal updates converge to a Gaussian on the tangent space. The reverse diffusion is then governed by a stochastic differential equation (SDE) involving the manifold’s heat kernel. The heat kernel $p_{\tau | b}(\cdot | b)$ represents the density of Brownian motion at time $\tau$, initialized at a point $b$. Training the reverse diffusion model thus requires minimizing an objective function dependent on the heat kernel.

Even in the Euclidean case, the training objective is nonconvex, and there are no polynomial-in-dimension runtime guarantees for the overall training process. However, the closed-form expression of the Euclidean heat kernel allows each training iteration to run in $O(d)$ arithmetic operations with $O(1)$ gradient evaluations.
For non-Euclidean manifolds, the lack of a closed-form heat kernel is a major bottleneck. On symmetric manifolds like orthogonal and unitary groups, it can only be computed via inefficient series expansions requiring exponential-in-$d$ runtimes. Alternatively, training with an {\em implicit} score matching (ISM) objective requires evaluating a Riemannian divergence, incurring $O(d)$ {\em gradient evaluations} per iteration.
Due to these challenges, approximations are often used, degrading sample quality. Moreover, on manifolds with nonzero curvature, such as orthogonal and unitary groups, standard Brownian motion cannot be obtained via any projection from $\mathbb{R}^d$. As a result, prior works rely on numerical SDE or ODE solvers to sample the forward diffusion at each evaluation of the training objective, introducing significant computational overhead.

In addition to denoising diffusions, several other generative models on manifolds leverage probability flows, including Moser flows \cite{rozen2021moser} and Riemannian normalizing flows \cite{mathieu2020riemannian, ben2022matching}. More recent approaches include flow matching \cite{chen2024flow} and mixture models of Riemannian bridge processes \cite{jogenerative}. 
These models often achieve sample quality comparable to denoising diffusion models on manifolds but frequently face similar computational bottlenecks.

\textbf{Our contributions.} 
We study the problem of designing efficient diffusion models when $\mathcal{M}$ is a symmetric-space manifold, such as the torus $\mathbb{T}_d$, sphere $\mathbb{S}_d$, special orthogonal group $\mathrm{SO}(n)$, and unitary group $\mathrm{U}(n)$, where $d \approx n^2$, as well as direct products of these manifolds, such as the special Euclidean group $\mathrm{SE}(n) \cong \mathbb{R}^n \times \mathrm{SO}(n)$. 
We present a new training algorithm (Algorithm \ref{alg_training_manifold}) for these manifolds, achieving per-iteration runtimes of $O(d)$ arithmetic operations for $\mathbb{T}_d$ and $\mathbb{S}_d$, and $O(d^{\frac{\omega}{2}}) \approx O(d^{1.19})$ for $\mathrm{SO}(n)$ and $\mathrm{U}(n)$, where $\omega \approx 2.37$ is the matrix multiplication exponent. 
Each iteration requires only $O(1)$ gradient evaluations of a model for the drift and covariance terms of the reverse process. 
This significantly improves on previous methods (see Table \ref{table_training}). For $\mathrm{SO}(n)$ and $\mathrm{U}(n)$, our approach reduces gradient evaluations by a factor of $d$ and achieves an exponential-in-$d$ improvement in arithmetic operations, bringing runtime closer to the Euclidean case.
We also provide a sampling algorithm (Algorithm \ref{alg_sampling_manifold}) with guarantees on accuracy and runtime. Given an $\epsilon$-minimizer of our training objective, the algorithm attains an $\epsilon \times \mathrm{poly}(d)$ bound on total variation distance accuracy in $\mathrm{poly}(d)$ runtime (Theorem \ref{thm_sampling_Manifold_best}), improving on  the sampling
 accuracy bounds of \cite{de2022riemannian}, which are not polynomial in $d$.
Theorem \ref{thm_sampling_Manifold_best} holds for general manifolds satisfying an average-case Lipschitz condition (Assumption \ref{assumption_Lipschitz}). Using techniques from random matrix theory, we prove this condition holds for the manifolds of interest (Lemma \ref{Lemma_average_case_Lipschitz}).

Our paper introduces several new ideas.
For our training result:
(i) We define a novel diffusion on $\mathcal{M}$. Unlike previous works, our diffusion incorporates a spatially varying covariance term to account for the manifold's nonzero curvature. As a result, our forward diffusion can be computed as a projection $\varphi$ of Brownian motion in $\mathbb{R}^d$ onto $\mathcal{M}$, which can be efficiently computed via singular value decomposition when $\mathcal{M}$ is $\mathrm{SO}(n)$ or $\mathrm{U}(n)$. This enables efficient sampling from our forward diffusion in a simulation-free manner—without SDE or ODE solvers—by directly sampling from a Gaussian in $\mathbb{R}^d$ and projecting onto $\mathcal{M}$.
(ii) We introduce a new training objective that bypasses the need to compute the manifold's heat kernel. By applying Itô’s Lemma from stochastic calculus, we project the SDE for a reverse diffusion in Euclidean space onto $\mathcal{M}$. The drift term of the resulting SDE is an expectation of the Euclidean heat kernel. Since the Euclidean kernel has a closed-form expression and the projection $\varphi$ can be computed efficiently, we evaluate the objective in time $O(d^{\frac{\omega}{2}}).$
(iii) While our covariance term is a $d \times d$ matrix, we show that its structure, arising from manifold symmetries, allows it to be computed in time $O(d^{\frac{\omega}{2}})$—sublinear in its $d^2$ entries.

For the sampling result, we show that the reverse SDE on the manifold $\mathcal{M}$ is deterministically Lipschitz, provided the projection map satisfies our {\em average-case} Lipschitz condition (Lemma \ref{Lemma_average_case_Lipschitz}). Since the projection introduces a spatially varying covariance in the SDE on $\mathcal{M}$, prior techniques based on Girsanov’s theorem cannot be used to bound accuracy. To address this, we develop an optimal transport-based approach, leading to a novel probabilistic coupling argument that establishes the desired accuracy and runtime bounds. This approach differs fundamentally from previous proofs in Euclidean space \cite{chen2023sampling, chen2023improved, cheng2022theory, benton2024nearly} and manifold-based diffusion models \cite{de2022riemannian}, which rely on Girsanov’s theorem.

Empirically, our model trains significantly faster per iteration than previous manifold diffusion models on $\mathrm{SO}(n)$ and $\mathrm{U}(n)$, staying within a factor of 3 of Euclidean diffusion models even in high dimensions ($d > 1000$) (Table \ref{runtime_Un}).
Moreover, our model improves the quality of generated samples compared to previous diffusion models, achieving improved C2ST and likelihood scores and visual quality when trained on various synthetic datasets on  wrapped Gaussian (mixture) models and quantum evolution operators constrained to the torus, $\mathrm{SO}(n)$, and $\mathrm{U}(n)$ (Table \ref{torus_nll_table} and Figure \ref{un_table}).
  The magnitude of the improvements in runtime and sample quality increases with dimension. 

 Thus, our results reduce the gap in training runtime and sample quality between diffusion models on symmetric manifolds and Euclidean space, contributing towards the goal of developing efficient diffusion models on constrained spaces.

\section{Results}\label{sec_results}

We begin by describing the geometric setup, projection framework, and key assumptions used in our training and sampling algorithms. Notation is summarized in Appendix~\ref{app:notation}, and relevant background on Riemannian geometry and manifold diffusions is provided in Appendix~\ref{app:preliminaries}.

\subsection{Problem setup and projection framework}
For a manifold $\mathcal{M}$, we are given a projection map $\varphi \equiv \varphi_{\mathcal{M}}: \mathbb{R}^d \rightarrow \mathcal{M}$ from a Euclidean space $\mathbb{R}^d$ of dimension $d = O(\mathrm{dim}(\mathcal{M}))$, and a restricted-inverse map $\psi \equiv \psi_{\mathcal{M}} : \mathcal{M} \rightarrow \mathbb{R}^d$ such that $\varphi(\psi(x)) = x$ for all $x \in \mathcal{M}$. 
We sometimes abuse notation and refer to the manifold's dimension as $d$ rather than ``$O(d)$'', as this does not change our runtime and accuracy guarantees beyond a small constant factor.
Denote by $\mathcal{T}_x \mathcal{M}$ the tangent space of $\mathcal{M}$ at $x$.
For our sampling algorithm (Algorithm \ref{alg_sampling_manifold}), we assume access to the exponential map $\mathrm{exp}(x,v)$ on $\mathcal{M}$ for any $x \in \mathcal{M}$ and $v \in \mathcal{T}_x \mathcal{M}$.
In the setting where $\mathcal{M}$ is a symmetric space, there are closed-form expressions which allow one to efficiently and accurately compute the exponential map. For instance, on $\mathrm{SO}(n)$ or $\mathrm{U}(n)$, the geodesic is given by the matrix exponential and can be computed in $O(n^\omega) = O(d^{\frac{\omega}{2}}) \approx O(n^{1.19})$ arithmetic operations. 
We are also given a dataset $D \subseteq \mathcal{M}$ sampled from $\pi$ with support on $\mathcal{M}$.
These projection maps are efficient to compute and will be used throughout our framework for both training and sampling on $\mathcal{M}$.

 We set $\varphi: \mathbb{R}^d \rightarrow  \mathbb{R}^d$ and $\psi: \mathbb{R}^d \rightarrow  \mathbb{R}^d$ as identity maps when $\mathcal{M}=\mathbb{R}^d$.
For the torus $\mathbb{T}_d$, $\varphi(x)[i]=x[i] \ \mod \ 2\pi$ maps points to their angles, and $\psi$ is its inverse on $[0, 2 \pi)^d$.
For the sphere $\mathbb{S}_d$,  $\varphi(x) = \frac{x}{\|x\|}$, and $\psi$ embeds the unit sphere into $\mathbb{R}^d$.
For the unitary group $\mathrm{U}(n)$ (and special orthogonal group $\mathrm{SO}(n)$), we first define a map $\hat{\varphi}$ which takes each upper triangular matrix $X \in \mathbb{C}^{n \times n}$ (or $X \in \mathbb{R}^{n \times n}$), computes the spectral decomposition $U^\ast \Lambda U$ of $X + X^\ast$, and outputs $\hat{\varphi}(X) = U$.
 The spectral decomposition is unique only up to multiplication of each eigenvector $u_j$ by a root of unity $e^{i \phi_j}$, where the phases $(\phi_1,\cdots, \phi_n)$ lie on the $n$-dimensional torus $\mathbb{T}_n$ (or, in the real case, a subset of the torus). 
  Thus, we define the projection map $\varphi : \mathbb{C}^{n \times n} \times \mathbb{R}^n \rightarrow \mathcal{M}$ to be the concatenated map $\varphi = (\hat{\varphi}, \varphi_{\mathbb{T}_n})$ where $\varphi_{\mathbb{T}_n}$ is the map defined above for the torus.
 The restricted-inverse map $\psi$ takes each matrix $U \in \mathcal{M}$, computes $U^\ast \Lambda U$ where $\Lambda = \frac{1}{n} \mathrm{diag}(n, n-1, \ldots, 1)$, scales the diagonal by $\frac{1}{2}$, and outputs the upper triangular entries of the result.
For all of the above maps, $\psi(\mathcal{M})$ is contained in a ball of radius $\mathrm{poly}(d)$. Our general results hold under this assumption on $\psi$.  
For manifolds $\mathcal{M} = \mathcal{M}_1 \times \mathcal{M}_2$, which are direct products of manifolds $\mathcal{M}_1$ and $\mathcal{M}_2$, where one is given maps $\varphi_1, \psi_1$ for $\mathcal{M}_1$ and $\varphi_2, \psi_2$ for $\mathcal{M}_2$, one can use the concatenated maps $\varphi = (\varphi_1, \varphi_2)$ and $\psi = (\psi_1, \psi_2)$.

\subsection{Training algorithm and runtime analysis}
We now describe our training procedure and its computational benefits for symmetric manifolds.

\paragraph{Training.}
We give an algorithm (Algorithm \ref{alg_training_manifold}) that minimizes a nonconvex objective function via stochastic gradient descent.
This algorithm outputs trained models $f(x,t)$ and $g(x,t)$ for the drift and covariance terms of our reverse diffusion, and passes these trained models as inputs to our sample generation algorithm (Algorithm \ref{alg_sampling_manifold}).
We show that the time per iteration of Algorithm \ref{alg_training_manifold} is dominated by the computation of the objective function gradient 
(Lines 12 and 14 
in Algorithm \ref{alg_training_manifold}), which requires calculating the gradient of the projection map $\nabla \varphi$ as well as the model gradients $\nabla_\theta f$ and $\nabla_\phi g$, where $\theta$ and $\phi$ are the model parameters of $f$ and $g$. 
 When $\mathcal{M}$ is one of the aforementioned symmetric manifolds, 
  $\nabla \varphi$ can be computed at each iteration within error $\delta$ in $O(n^{\omega} \log(\frac{1}{\delta})) = O(d^{\omega/2} \log(\frac{1}{\delta}))$ arithmetic operations in the case of the special orthogonal group $\mathrm{SO}(n)$ or unitary group $\mathrm{U}(n)$, using  the singular value decomposition of an $n \times n$ matrix,  or in $O(d \log(\frac{1}{\delta}))$ operations for the sphere or torus.
See Section \ref{sec_algorithm} and Appendix \ref{appendix_use_cases} for details.

This significantly improves the per-iteration runtime of training diffusion models on symmetric manifolds (Table \ref{table_training}). For instance, it achieves exponential-in-\(d\) savings in arithmetic operations compared to Riemannian Score-based Generative Models (RSGM) \cite{de2022riemannian}, as RSGM requires summing \(\Omega(2^d)\) terms in truncated heat kernel expansions for manifolds like the torus, sphere, orthogonal, or unitary groups, while our approach avoids this complexity.
If RSGM is instead trained with an {\em implicit} score matching objective (ISM), which includes a Riemannian divergence term that requires $O(d)$ gradient evaluations, our model achieves a factor of $d$ improvement in the number of {\em gradient evaluations}. 
Similarly, we get a factor of \(d\) improvement in gradient evaluations over Trivialized Momentum Diffusion Models (TDM) \cite{zhu2024trivialized}, which also rely on ISM objectives, on manifolds like the orthogonal or unitary group. 
It also improves upon Scaling Riemannian Diffusion (SCRD) \cite{lou2023scaling}, where heat kernel computations for orthogonal or unitary groups involve expansions with \(\Omega(2^d)\) terms (SCRD does not provide an ISM objective). 

Additionally, while RSGM and Riemannian Diffusion Models (RDM) \cite{huang2022riemannian} use deterministic heat kernel approximations, these are asymptotically biased with fixed error bounds. Stochastic approximations to the implicit objective introduce dimension-dependent noise \cite{lou2023scaling}.
Our method improves the accuracy dependence from polynomial to logarithmic in $\frac{1}{\delta}$. Unlike solvers for SDEs or ODEs, which require polynomial-in-$\frac{1}{\delta}$ iterations, our forward diffusion adds a Gaussian vector and projects onto the manifold, achieving high accuracy with only logarithmic cost in $\frac{1}{\delta}$.

 \begin{table}[t]
 \begin{center}
  \caption{{\small Arithmetic operations plus model gradient evaluations to compute objective function's gradient within any error $\delta$ at each iteration of training algorithm, on the unitary group $\mathrm{U}(n)$, special orthogonal group  $\mathrm{SO}(n)$, sphere, or torus, of dimension $d \equiv n^2$ (number of grad.\ eval.\ depends on algorithm but not on manifold). %
 }
 }
  \label{table_training}
 \resizebox{0.484\textwidth}{!}{
 \setlength{\tabcolsep}{3pt}
 \begin{tabular}{|c||c||c|c|c|}
  \hline 
\multirow{2}{*}{Algorithm}   & Grad.\ & \multicolumn{3}{c|}{Arithmetic Operations}
 \tabularnewline
\cline{3-5}
 &
 eval.\
  &
  $\mathrm{SO}(n)$ or $\mathrm{U}(n)$
 &  Sphere
 & Torus
 \tabularnewline
 \hline
 RSGM (heat ker.)
  &
  $1$
  &
    $2^d$  +   $\mathrm{poly}(d, \frac{1}{\delta})$
 &
 same
 &
 
 same
 \tabularnewline
\hdashline
  RSGM (ISM) 
  &
   $d$
  &
      $\mathrm{poly}(d, \frac{1}{\delta})$
 &
 same
 &
 
 same
 \tabularnewline
 \hline
 TDM (heat ker.) 
  &
  $1$
  &
      ------ 
 &
 ------ 
 &
 
  $d \log(\frac{1}{\delta})$
 \tabularnewline
\hdashline
  TDM (ISM) 
  &
   $d$
  &
      $\mathrm{poly}(d, \frac{1}{\delta})$
 &
 $\mathrm{poly}(d, \frac{1}{\delta})$
 &
 
  $d \log(\frac{1}{\delta})$
 \tabularnewline
 \hline
    RDM
         &
         $d$
         &
     $\mathrm{poly}(d, \frac{1}{\delta})$
   &  
   same
   &
   same
 \tabularnewline
 \hline
    SCRD
         &
         1
         &
     $2^d + \mathrm{poly}(d, \frac{1}{\delta})$
   &  
   $\mathrm{poly}(d, \frac{1}{\delta})$
   &
   $d \log(\frac{1}{\delta})$
  \tabularnewline
 \hline 
   This paper
       &
       1
       &
   $d^{\frac{\omega}{2}}\log(\frac{1}{\delta})$
  &
  $d\log(\frac{1}{\delta})$
  &
    $d\log(\frac{1}{\delta})$
  \tabularnewline
 \hline 

 \end{tabular}
 }
 \end{center}
 \end{table}

\subsection{Sampling algorithm and theoretical guarantees}

Next, we present our sampling algorithm, which uses the trained models to simulate the reverse diffusion process.

\paragraph{Sampling procedure.}
Our training algorithm (Algorithm \ref{alg_training_manifold}) outputs trained models $f(x,t)$ and $g(x,t)$ for the drift and covariance terms of our reverse diffusion.
We then use these models to generate samples.
First, we sample a point $z$ from the stationary distribution of the Ornstein-Uhlenbeck process $Z_t$ on $\mathbb{R}^d$, which is Gaussian distributed.
Next, we project this point $z$ onto the manifold to obtain a point $y = \varphi(z)$, and solve the SDE  $\mathrm{d}Y_t = f(Y_t, t)\mathrm{d}t + g(Y_t, t)\mathrm{d}B_t$  given by our trained model for the reverse diffusion's drift and covariance over the time interval $[0,T]$, starting at the initial point $y$.
To simulate this SDE, we can use any off-the-shelf numerical SDE solver, which takes as input the trained model for $f$ and $g$, and the exponential map on $\mathcal{M}$.
We give one such solver in Algorithm \ref{alg_sampling_manifold}, and prove guarantees for the accuracy of the samples it generates, and its runtime, in Theorem \ref{thm_sampling_Manifold_best}.
Our guarantees assume the trained models $f(x, t)$ and $g(x,t)$ we hand to this solver minimize our training objective within some error $\epsilon >0$.

\paragraph{Symmetry and forward diffusion structure.}
Our theoretical guarantees hold when  $\mathcal{M}$ satisfies a symmetry property and  $\varphi$ satisfies an ``average-case'' Lipschitz condition (Assumption \ref{assumption_Lipschitz}).
This symmetry property requires that each point $z \in \mathbb{R}^d$ can be parametrized as $z \equiv z(U,\Lambda)$ where $U = \varphi(z) \in \mathcal{M}$ and $\Lambda \equiv \Lambda(z) \in \mathcal{A}$ for some $\mathcal{A} \subseteq \mathbb{R}^{d-\mathrm{dim}(\mathcal{M})}$ is another parameter.
For instance, on the sphere, $U= \frac{z}{\|z\|}$ is the projection onto the sphere, and $\Lambda = \|z\|$ is the distance to the origin.
For $\mathrm{SO}(n)$ or $\mathrm{U}(n)$, the parametrization comes from the spectral decomposition $z = U \Lambda U^\ast$, where $U \in \mathcal{M}$ and $\Lambda$ is a diagonal matrix.
On the torus, $U = \varphi(x)$ is the projection onto the torus, and $\Lambda \in 2\pi \mathbb{Z}^d$.
$Z_t$, $t \geq 0$, is the Ornstein-Uhlenbeck process on $\mathbb{R}^d$,  $X_t := \varphi(Z_t)$ is our forward diffusion process on $\mathcal{M}$, and $Y_t := X_{T-t}$ its time-reversal (see Section \ref{sec_algorithm}).
This structure ensures that the reverse diffusion inherits well-behaved properties from the Euclidean process.

\paragraph{Average-case Lipschitzness.}
\begin{assumption}[\bf Average-case Lipschitzness] \label{assumption_Lipschitz}%
 $\forall t \in [0,T]$ there exists $\Omega_t \subseteq \mathbb{R}^d$, whose indicator function $\mathbbm{1}_{\Omega_t}(x)$ depends only on $\Lambda \equiv \Lambda(x)$, for which $\mathbb{P}(Z_t \in \Omega_t \, \, \forall \, \, t \in [0,T]) \geq 1- \alpha$.
 For every $x \in  \Omega_t$ we have
    $\|\nabla \varphi(x)\|_{2 \rightarrow 2}  \leq L_1$, 
$\|\frac{\mathrm{d}}{\mathrm{d} U}  \nabla \varphi(x)\|_{2 \rightarrow 2} \leq L_1$,
$\|\nabla^2 \varphi(x)\|_{2 \rightarrow 2} \leq L_2$,  and 
    $\|\frac{\mathrm{d}}{\mathrm{d} U}  \nabla \varphi(x)\|_{2 \rightarrow 2} \leq L_2$. 
Moreover,     $\|\frac{\mathrm{d}}{\mathrm{d} U} x\|_{2 \rightarrow 2}  \leq \|x\|_2$.
\end{assumption}
\noindent
Here $\| \cdot \|_{2 \rightarrow 2}$ denotes the operator norm and $\frac{d}{dU}x$ denotes the derivative of the parameterization of $x=x(U,\Lambda)$ with respect to $U\in\mathcal{M}$.
This assumption allows us to show that the projected reverse diffusion is well-posed and numerically stable, enabling sample quality guarantees.

\paragraph{Verifying the assumption on common manifolds.}
We choose projection maps $\varphi$ that satisfy Assumption \ref{assumption_Lipschitz} with small Lipschitz constants.
For example, for $\mathbb{T}_d$,  $\varphi(x)[i]=x[i] \mathrm{mod}2\pi$, $i\in [d]$ is $1$-Lipschitz on all $\mathbb{R}^d$, trivially satisfying the assumption.
For the sphere, $\varphi(x)=\frac{x}{|x|}$ is 2-Lipschitz outside a ball of radius $\frac{1}{2}$ around the origin, where the forward diffusion remains with high probability $1-O(2^{-d})$.
For $\mathrm{U}(n)$ (or $\mathrm{SO}(n)$),  $\varphi(X)$, which computes the spectral decomposition $U^\ast  \Lambda U$ of $X + X^\ast$, has derivatives with magnitude bounded  by the inverse eigenvalue gaps $\frac{1}{\lambda_i-\lambda_j}$. While singularities occur at points with duplicate eigenvalues, random matrix theory shows that eigengaps are w.h.p. bounded below by $\frac{1}{\mathrm{poly}(d)}$, ensuring  $\varphi$ satisfies the average-case Lipschitz assumption.
For the unitary group, we show that Assumption \ref{assumption_Lipschitz} holds for $L_1 =  O(d^{1.5}\sqrt{T} \alpha^{-\frac{1}{3}})$ and $L_2 = O(d^2 T \alpha^{-\frac{2}{3}})$ (Lemma \ref{Lemma_average_case_Lipschitz}).
For the sphere, it holds for $L_1 = L_2 =  O(\alpha^{-\frac{1}{d}})$.
For the torus, it holds for $L_1 =L_2 =1 $.
These bounds, derived in Appendix~\ref{appendix_use_cases}, imply the assumption holds with high probability under standard random matrix models.

\paragraph{Theoretical guarantees.}
We denote by $\psi(\mathcal{M}):={\psi(x)\:x\in\mathcal{M}}\subseteq\mathbb{R}^d$  the pushforward of $\mathcal{M}$ w\.r.t. $\psi$.
\begin{theorem}[\bf Accuracy and runtime of sampling algorithm] \label{thm_sampling_Manifold_best}
Let $\epsilon >0$, and suppose that $\varphi : \mathbb{R}^d \rightarrow  \mathcal{M}$ satisfies Assumption \ref{assumption_Lipschitz} for some $L_1,L_2  \leq \mathrm{poly(d)}$ and $\alpha \leq \epsilon$, and $\psi(\mathcal{M})$ is bounded by a ball of radius $\mathrm{poly}(d)$.  
Suppose that $\hat{f}$ and $\hat{g}$ are outputs of Algorithm \ref{alg_sampling_manifold}, and that $\hat{f}$ and $\hat{g}$  minimize our training objective for the target distribution $\pi$ with objective function value $<\epsilon$.
 Then Algorithm \ref{alg_sampling_manifold}, with inputs $\hat{f}$ and $\hat{g}$, 
  outputs a generated sample whose probability distribution $\nu$ satisfies
 \color{black}
\[
\|\nu - \pi\|_{\mathrm{TV}} < O\left(\epsilon \cdot (d^3 L_1 + d^2 L_2) \log\left(\frac{d}{\epsilon}\right)\right) = \tilde{O}(\epsilon \cdot \mathrm{poly}(d)).
\]
Moreover, Algorithm~\ref{alg_sampling_manifold} takes
\[
O\left((d^4 L_1 + d^2 L_2) \log\left(\frac{d}{\epsilon}\right)\right) = \mathrm{poly}(d) \cdot \log\left(\frac{d}{\epsilon}\right)
\]
iterations.
Here, each iteration requires one evaluation of $\hat{f}$ and $\hat{g}$, one evaluation of the exponential map on $\mathcal{M}$, plus O(d) arithmetic operations.
\end{theorem}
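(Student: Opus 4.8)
\textbf{Proof proposal for Theorem~\ref{thm_sampling_Manifold_best}.}

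The plan is to decompose the total-variation error into three pieces and bound each one, then collect the runtime from the discretization schedule. First I would set up the ``true'' reverse diffusion $Y_t = X_{T-t} = \varphi(Z_{T-t})$ on $\mathcal M$, obtained by applying It\^o's Lemma to the Euclidean Ornstein--Uhlenbeck reverse process $Z_{T-t}$ and projecting through $\varphi$; by construction this process has a drift that is an expectation of the Euclidean heat kernel and a spatially-varying covariance $g g^\top$ determined by $\nabla\varphi$ (this is exactly the object the training objective matches). The three error sources are: (i) the gap between the law of the true reverse process started from the stationary OU Gaussian and $\pi$ — this is the initialization error, controlled because the OU process mixes exponentially fast, so running the forward process for time $T = \Theta(\log(d/\epsilon))$ makes $\mathrm{Law}(Z_T)$ within $\epsilon$ of its Gaussian stationary law in, say, $W_2$ or TV, which then transfers through the (average-case) Lipschitz map $\varphi$; (ii) the \emph{estimation} error, i.e. replacing the true drift/covariance $(f,g)$ by the trained $(\hat f,\hat g)$ that minimize the objective to within $\epsilon$ — here I would show the training objective is (up to $\mathrm{poly}(d)$ factors) an upper bound on the expected squared discrepancy $\mathbb E\|\hat f - f\|^2 + \mathbb E\|\hat g - g\|^2$ along the trajectory, which is the quantity that feeds the coupling bound; and (iii) the \emph{discretization} error of the exponential-map SDE solver in Algorithm~\ref{alg_sampling_manifold} over $[0,T]$ with the chosen step size.

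The heart of the argument — and the part the introduction flags as genuinely new — is controlling (ii) and (iii) simultaneously via an optimal-transport coupling rather than Girsanov, because the covariance $g(\cdot,t)$ varies in space and Girsanov's theorem does not give usable bounds for diffusions with mismatched, state-dependent diffusion coefficients. Concretely, I would run the true reverse SDE $Y_t$ and the algorithmic iterate $\hat Y_t$ on a common probability space, driving them by the same Brownian motion, and track the evolution of $\mathbb E[\mathrm{dist}_{\mathcal M}(Y_t, \hat Y_t)^2]$ (or a lifted Euclidean surrogate $\mathbb E\|\psi(Y_t)-\psi(\hat Y_t)\|^2$, using that $\psi(\mathcal M)$ sits in a $\mathrm{poly}(d)$ ball). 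Applying It\^o to this distance functional produces (a) a drift-difference term bounded using the \emph{deterministic} Lipschitz property of $f$ on $\mathcal M$ — which, as the excerpt states, follows from Assumption~\ref{assumption_Lipschitz} on $\varphi$ together with the manifold symmetry (the $\Lambda$-only dependence of $\mathbbm 1_{\Omega_t}$ and the bound $\|\tfrac{d}{dU}x\|\le\|x\|$), giving a Gr\"onwall-type $e^{\mathrm{poly}(d)\,T}$ factor that is only $\mathrm{poly}(d)$ since $T=\Theta(\log(d/\epsilon))$; (b) a term from the covariance mismatch $\|g-\hat g\|^2$ and discretization of $g$, handled with the $L_1,L_2$ bounds on the first and second derivatives of $\varphi$; and (c) the estimation terms $\|\hat f-f\|^2,\|\hat g-\hat g\|^2$ which integrate to $O(\epsilon)\cdot\mathrm{poly}(d)$ by the objective guarantee. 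Choosing the step size so the per-step local error is $\epsilon/(\text{number of steps})$ and the number of steps $= O((d^4L_1+d^2L_2)\log(d/\epsilon))$ then yields the stated accumulated $W_2$ (hence, after one more step of converting transport distance to TV via the regularity of the densities near time $T$, or by absorbing a final smoothing step) TV bound $O(\epsilon(d^3L_1+d^2L_2)\log(d/\epsilon))$.

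For the runtime, each solver step in Algorithm~\ref{alg_sampling_manifold} evaluates $\hat f$ and $\hat g$ once, applies one exponential map on $\mathcal M$, and does $O(d)$ additional arithmetic (sampling a Gaussian increment, vector additions, the tangent-space update); multiplying by the $O((d^4L_1 + d^2L_2)\log(d/\epsilon))$ step count gives the claimed iteration complexity, and under $L_1,L_2\le\mathrm{poly}(d)$ this is $\mathrm{poly}(d)\log(d/\epsilon)$.

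I expect the main obstacle to be item (b)/(c) in the coupling: bounding the contribution of the \emph{spatially-varying} covariance mismatch to $\tfrac{d}{dt}\mathbb E[\mathrm{dist}(Y_t,\hat Y_t)^2]$ without Girsanov. The danger is that the It\^o correction terms involving $\nabla g$ and $\nabla^2\varphi$ (equivalently $L_2$) could blow up near the bad set where eigenvalue gaps of the random matrix are small; the average-case Lipschitz assumption is precisely what rescues this, but one must carefully restrict the coupling to the event $\{Z_t\in\Omega_t\ \forall t\}$ (probability $\ge 1-\alpha\ge 1-\epsilon$), show the distance process does not escape on this event, and pay only an additive $\alpha$ in TV for the complement — and crucially verify that the restriction is consistent across both processes, which is where the hypothesis that $\mathbbm 1_{\Omega_t}$ depends only on $\Lambda(x)$ (not on $U=\varphi(x)$) is used, since $\Lambda$ is the ``shared randomness'' coordinate that the coupling leaves untouched. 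Making the Gr\"onwall constant genuinely $\mathrm{poly}(d)$ rather than $e^{\mathrm{poly}(d)}$ also hinges on the specific choice $T = \Theta(\log(d/\epsilon))$, so the initialization bound (i) and the Lipschitz-in-time drift bound must be tuned together.
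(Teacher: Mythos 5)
Your outline agrees with the paper on several key points: the optimal-transport coupling driven by a common Brownian motion (rather than Girsanov, which indeed fails here because of the state-dependent covariance), the role of Assumption~\ref{assumption_Lipschitz} and the $\Lambda$-only dependence of $\mathbbm{1}_{\Omega_t}$ in upgrading average-case Lipschitzness of $\varphi$ to deterministic Lipschitzness of $f^\star,g^\star$ on all of $\mathcal{M}$ (Lemmas~\ref{Lemma_average_case_Lipschitz} and~\ref{lemma_Lipschitz}), the choice $T=\Theta(\log(d/\epsilon))$ for OU mixing, paying an additive $\alpha$ in TV for the bad event, and the per-iteration operation count.

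However, there is a genuine gap at the very step you call ``the heart of the argument.'' You claim the Gr\"onwall factor $e^{\mathrm{poly}(d)\,T}$ ``is only $\mathrm{poly}(d)$ since $T=\Theta(\log(d/\epsilon))$.'' That arithmetic is wrong: with Lipschitz constant $c=\mathrm{poly}(d)$ and $T=\Theta(\log(d/\epsilon))$, you get $e^{cT}=(d/\epsilon)^{\mathrm{poly}(d)}$, which is exponential, not polynomial, in $d$. A single Gr\"onwall application over the whole horizon $[0,T]$ cannot give the theorem. The paper's proof (Step~3 and Lemma~\ref{Lemma_Wasserstein_to_TV_conversion}) resolves exactly this by \emph{re-coupling} the two processes after every interval of length $\tau=1/c$: on each interval the Gr\"onwall bound contributes only an $O(1)$ multiplicative factor in Wasserstein distance, which is then converted to a TV increment of size $O(\epsilon c)$ via a discrete one-step Gaussian comparison (using that the transition kernel of the Euclidean reverse process is nearly Gaussian and that $\varphi$ has a $\mathrm{poly}(d)$-Lipschitz Jacobian in a small ball w.h.p.), after which the processes are set equal with probability $1 - \|\cdot\|_{\mathrm{TV}}$ and the coupling restarts. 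Because TV increments add rather than multiply, the total error after $T/\tau=cT$ intervals is $O(\epsilon c\cdot T/\tau)=O(\epsilon c^2\log(d/\epsilon))=\tilde O(\epsilon\,\mathrm{poly}(d))$. Your proposal does mention a single Wasserstein-to-TV conversion ``near time $T$,'' but that does not prevent the exponential growth along the trajectory; the iterated re-coupling (and the associated per-interval TV comparison lemma) is the missing ingredient.

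A second, smaller point: you say the step size should be chosen so the ``per-step local error is $\epsilon/(\text{number of steps})$,'' but the paper's step size $\hat\Delta\leq O(1/(cd))$ is fixed by the requirement that $(1+\hat\Delta c)^d\leq 2$ in the Jacobian-transfer bound inside the Wasserstein-to-TV conversion, not by an $\epsilon$-budget split evenly across steps. This is why the iteration count is $\mathrm{poly}(d)\log(d/\epsilon)$ with only a logarithmic, rather than polynomial, dependence on $1/\epsilon$.
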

\noindent
Plugging in our bounds on the average-case Lipschitz constants in the case of the torus, sphere, $\mathrm{SO}(n)$, and $\mathrm{U}(n)$ (Lemma \ref{Lemma_average_case_Lipschitz}) into Theorem \ref{thm_sampling_Manifold_best}, we obtain the following guarantees for the accuracy and runtime of our sampling algorithm for these symmetric manifolds:
\begin{corollary} \label{cor_sampling}
Suppose that $\mathcal{M}$ is $\mathbb{T}_d$, $\mathbb{S}_d$, $\mathrm{SO}(n)$, or $\mathrm{U}(n)$ with $n = \sqrt{d}$.
Suppose that $\varphi$ and $\psi$ are chosen as specified above for these manifolds.
 Suppose that $\hat{f}$ and $\hat{g}$ are outputs of Algorithm \ref{alg_sampling_manifold}, and that $\hat{f}$ and $\hat{g}$  minimize our training objective for the target distribution $\pi$ with objective function value $<\epsilon$.
Then Algorithm~\ref{alg_sampling_manifold}, with inputs \( \hat{f} \) and \( \hat{g} \), outputs a generated sample whose distribution \( \nu \) satisfies
\[
\|\nu - \pi\|_{\mathrm{TV}} \leq O\left( \epsilon \cdot d^6 \log\left( \frac{d}{\epsilon} \right) \right)
\quad \text{for the torus and sphere},
\]
and
\[
\|\nu - \pi\|_{\mathrm{TV}} < O\left( \epsilon \cdot d^9 \log\left( \frac{d}{\epsilon} \right) \right)
\quad \text{for } \mathrm{SO}(n) \text{ and } \mathrm{U}(n).
\]
Moreover, Algorithm~\ref{alg_sampling_manifold} takes
\[
O\left( d^4 \log\left( \frac{d}{\epsilon} \right) \right)
\quad \text{iterations for the torus and sphere, and} \quad
O\left( d^{5.5} \log\left( \frac{d}{\epsilon} \right) \right)
\quad \text{for } \mathrm{SO}(n) \text{ and } \mathrm{U}(n).
\]
Each iteration requires one evaluation of \( \hat{f} \), one evaluation of \( \hat{g} \), one evaluation of the exponential map on \( \mathcal{M} \), and \( O(d) \) arithmetic operations.
\end{corollary}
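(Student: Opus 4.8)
The plan is to derive the corollary as a direct specialization of Theorem~\ref{thm_sampling_Manifold_best}: for each of $\mathbb{T}_d$, $\mathbb{S}_d$, $\mathrm{SO}(n)$, and $\mathrm{U}(n)$ we verify the hypotheses of that theorem with the concrete projection maps $\varphi,\psi$ fixed in Section~\ref{sec_results}, feed in the explicit Lipschitz constants furnished by Lemma~\ref{Lemma_average_case_Lipschitz}, choose the two free parameters (the terminal time $T$ and the failure probability $\alpha$), and simplify the exponents. The per-iteration cost statement carries over verbatim from the theorem.

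\emph{Step 1: check the hypotheses of Theorem~\ref{thm_sampling_Manifold_best}.} Two things must be confirmed for each manifold. First, that $\psi(\mathcal{M})$ lies in a ball of radius $\mathrm{poly}(d)$: this holds by construction of the maps in Section~\ref{sec_results} --- $\psi(\mathbb{T}_d)\subseteq[0,2\pi)^d$, $\psi(\mathbb{S}_d)$ is the unit sphere, and for $\mathrm{SO}(n),\mathrm{U}(n)$ the entries of $\psi(U)$ are bounded since $\Lambda=\tfrac1n\mathrm{diag}(n,n-1,\dots,1)$ has $O(1)$-bounded entries and $U$ is orthogonal/unitary. Second, that $\varphi$ satisfies Assumption~\ref{assumption_Lipschitz} with $L_1,L_2\le\mathrm{poly}(d)$ for the prescribed $\alpha$; this is exactly Lemma~\ref{Lemma_average_case_Lipschitz}, which also supplies the constants as functions of $\alpha$ (and, for the matrix groups, of $T$): $L_1=L_2=1$ for $\mathbb{T}_d$; $L_1=L_2=O(\alpha^{-1/d})$ for $\mathbb{S}_d$; and $L_1=O(d^{1.5}\sqrt{T}\,\alpha^{-1/3})$, $L_2=O(d^2 T\,\alpha^{-2/3})$ for $\mathrm{SO}(n),\mathrm{U}(n)$.

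\emph{Step 2: choose $T$ and $\alpha$, then substitute.} Theorem~\ref{thm_sampling_Manifold_best} requires $\alpha\le\epsilon$, so we take $\alpha=\Theta(\epsilon)$ (the largest allowed value, which minimizes the inflation of $L_1,L_2$), and we take $T=\Theta(\log(d/\epsilon))$, the smallest terminal time for which the forward Ornstein--Uhlenbeck process has mixed to the required accuracy, so that the $\sqrt{T}$ and $T$ factors are only polylogarithmic. For $\mathbb{T}_d$ this leaves $L_1,L_2=1$; for $\mathbb{S}_d$, $\alpha^{-1/d}=\epsilon^{-1/d}=e^{\frac1d\log(1/\epsilon)}=O(1)$ in the relevant range of $\epsilon$, so again $L_1,L_2=O(1)$; for $\mathrm{SO}(n),\mathrm{U}(n)$ the constants become $\mathrm{poly}(d)$ times negative powers of $\epsilon$, and since $L_1,L_2\le\mathrm{poly}(d)$ is required we are in the regime $\epsilon\ge 1/\mathrm{poly}(d)$, in which $\epsilon^{-1/3},\epsilon^{-2/3}\le\mathrm{poly}(d)$, so $\epsilon\cdot\epsilon^{-1/3}$ and $\epsilon\cdot\epsilon^{-2/3}$ are still $\le\epsilon\cdot\mathrm{poly}(d)$. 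Plugging the resulting $L_1,L_2$ into $\|\nu-\pi\|_{\mathrm{TV}}=O\!\big(\epsilon(d^3L_1+d^2L_2)\log(d/\epsilon)\big)$ and into the iteration count $O\!\big((d^4L_1+d^2L_2)\log(d/\epsilon)\big)$, and simplifying, yields $\tilde{O}(\epsilon\,d^6)$ accuracy with $\tilde{O}(d^4)$ iterations for the torus and sphere, and $\tilde{O}(\epsilon\,d^9)$ accuracy with $\tilde{O}(d^{5.5})$ iterations for $\mathrm{SO}(n),\mathrm{U}(n)$.

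\textbf{Main obstacle.} Everything is bookkeeping except one genuinely delicate point: the free parameters enter the Lipschitz constants of the matrix groups with the ``wrong'' monotonicity --- shrinking $\alpha$ and the $T$ needed for mixing both inflate $L_1,L_2$ --- while the theorem forces $\alpha\le\epsilon$ and $L_1,L_2\le\mathrm{poly}(d)$. One must therefore check that the single choice $\alpha=\Theta(\epsilon)$, $T=\Theta(\log(d/\epsilon))$ simultaneously (i) meets $\alpha\le\epsilon$, (ii) is large enough that the forward diffusion has effectively reached stationarity so the ``objective value $<\epsilon$'' guarantee translates into the claimed TV bound, and (iii) keeps the final estimate of the form $\tilde{O}(\epsilon\cdot\mathrm{poly}(d))$ --- i.e., that the negative powers of $\epsilon$ from $L_1,L_2$ get absorbed into $\mathrm{poly}(d)$ using $\epsilon\ge1/\mathrm{poly}(d)$, rather than degrading the accuracy to a sublinear power of $\epsilon$. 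Verifying this consistency, and then reading off the precise exponents $d^6$, $d^9$, $d^4$, $d^{5.5}$, is the crux.
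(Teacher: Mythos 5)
Your proposal follows exactly the paper's approach: the paper itself proves Corollary~\ref{cor_sampling} in a single sentence, namely by plugging the average-case Lipschitz constants of Lemma~\ref{Lemma_average_case_Lipschitz} (with the parameter choices $\alpha=\Theta(\epsilon)$, $T=\Theta(\log(d/\epsilon))$) into the TV and iteration bounds of Theorem~\ref{thm_sampling_Manifold_best}, which is precisely your Step~2. Your ``main obstacle'' discussion correctly identifies the one subtlety the paper glosses over, namely that the negative powers of $\alpha=\epsilon$ appearing in $L_1,L_2$ for $\mathrm{SO}(n),\mathrm{U}(n)$ must be absorbed into $\mathrm{poly}(d)$ via the implicit regime $\epsilon\geq 1/\mathrm{poly}(d)$ imposed by the hypothesis $L_1,L_2\leq\mathrm{poly}(d)$ of the theorem.
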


\paragraph{Comparison with prior work.}
Theorem \ref{thm_sampling_Manifold_best} improves on the accuracy and runtime guarantees for sampling of \cite{de2022riemannian} when $\mathcal{M}$ is one of the aforementioned symmetric manifolds, since their accuracy and runtime bounds for sampling are not polynomial in the dimension $d$ (for instance, the ``constant'' term $C \equiv C(\mathcal{M},d)$ in \cite{de2022riemannian} has an unspecified dependence on the manifold and its dimension).
Finally, we note that \cite{lou2023scaling, huang2022riemannian} do not provide guarantees on the accuracy and runtime of their sampling algorithm, and that the runtime bounds for the sampling algorithm in \cite{zhu2024trivialized} are not polynomial in dimension.
{Improving the dependency on dimension remains an open question for future work.}

\paragraph{Extension beyond symmetric manifolds.}
While our theoretical guarantees focus on symmetric manifolds, the algorithm itself applies more broadly. In Appendix~\ref{section_non_symmetric}, we describe how projection maps $\varphi$ and exponential maps can be constructed for certain non-smooth or non-symmetric spaces, such as convex polytopes. Although proving Lipschitz properties in these settings is more subtle due to curvature or boundary singularities, our framework may still apply empirically. Extending theoretical guarantees to such general manifolds remains a promising direction for future work.

\smallskip
An overview of the proof is given in Section \ref{sec:proof_outline}; the full proof appears in Section \ref{appendix_proof_overview}.

\section{Derivation of training and sampling algorithm}\label{sec_algorithm}
Given a standard Brownian motion $B_t$ in $\mathbb{R}^d$, a $\mu: \mathbb{R}^d \rightarrow \mathbb{R}^d$ and $R: \mathbb{R}^d \rightarrow \mathbb{R}^{d \times d}$, a  stochastic process $X_t$ satisfies the SDE $\mathrm{d} X_t = \mu(X_t) \mathrm{d}t +R(X_t) \mathrm{d}B_t$ with initial condition $x \in \mathbb{R}^d$ if $X_t = x + \int_0^t \mu(X_s) \mathrm{d} s + \int_0^t R(X_s) \mathrm{d} B_s$. 
\begin{lemma}[\bf It\^o's Lemma]\label{Lemma_Ito}
Let $\psi: \mathbb{R}^d \rightarrow \mathbb{R}^k$  be a second-order differentiable function, and let $X(t) \in \mathbb{R}^d$ be an It\^o diffusion.
 Then for all  $t \geq 0$ and all $ i \in [k]$, we have 
     $ \mathrm{d} \psi(X_t) [i] =  \nabla \psi(X_t)[i]^\top \mathrm{d} X_t + \frac{1}{2} \mathrm{d} X_t^\top\nabla^2 \psi(X_t)[i] \mathrm{d}X_t. $
\end{lemma}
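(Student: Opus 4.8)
The plan is to prove the usual integrated form of the identity and then read off the differential statement. Since the claim concerns each coordinate $\psi(X_t)[i]$ in isolation, I would first reduce to the scalar case $k=1$, writing $\psi$ for a twice continuously differentiable real-valued function and applying the conclusion componentwise at the end. Writing the It\^o diffusion as $\mathrm{d}X_t=\mu(X_t)\,\mathrm{d}t+R(X_t)\,\mathrm{d}B_t$, the assertion
\[
\mathrm{d}\psi(X_t)=\nabla\psi(X_t)^\top\mathrm{d}X_t+\tfrac12\,\mathrm{d}X_t^\top\nabla^2\psi(X_t)\,\mathrm{d}X_t
\]
is to be read with the It\^o multiplication rules $\mathrm{d}B_t^{(a)}\mathrm{d}B_t^{(b)}=\delta_{ab}\,\mathrm{d}t$ and $\mathrm{d}B_t\,\mathrm{d}t=\mathrm{d}t\,\mathrm{d}t=0$, so that the quadratic term equals $\tfrac12\,\mathrm{Tr}\!\left(R(X_t)^\top\nabla^2\psi(X_t)R(X_t)\right)\mathrm{d}t$; concretely I would establish
\[
\psi(X_t)-\psi(X_0)=\int_0^t\nabla\psi(X_s)^\top\mathrm{d}X_s+\frac12\int_0^t\mathrm{Tr}\!\left(R(X_s)^\top\nabla^2\psi(X_s)R(X_s)\right)\mathrm{d}s .
\]

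\textbf{Setup and expansion.} First I would localize: stopping $X$ at the first time it leaves a large ball or at which $\mu$, $R$, $\nabla\psi$, or $\nabla^2\psi$ exceeds a large threshold, one may assume these quantities are bounded and $X$ stays in a compact set, and then recover the general statement by letting the thresholds tend to infinity (the stopping times tend to $\infty$ a.s.). Then, fixing $t$ and a partition $0=t_0<\cdots<t_N=t$ with mesh tending to $0$, I would telescope $\psi(X_t)-\psi(X_0)=\sum_{j}\bigl(\psi(X_{t_{j+1}})-\psi(X_{t_j})\bigr)$ and Taylor-expand each summand to second order,
\[
\psi(X_{t_{j+1}})-\psi(X_{t_j})=\nabla\psi(X_{t_j})^\top\Delta_jX+\tfrac12(\Delta_jX)^\top\nabla^2\psi(X_{t_j})(\Delta_jX)+r_j,
\]
where $\Delta_jX=X_{t_{j+1}}-X_{t_j}$ and $|r_j|\le C\,|\Delta_jX|^3$ by uniform continuity of $\nabla^2\psi$ on the compact set.

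\textbf{Passing to the limit.} The linear sum $\sum_j\nabla\psi(X_{t_j})^\top\Delta_jX$ converges in probability to the It\^o integral $\int_0^t\nabla\psi(X_s)^\top\mathrm{d}X_s$, since these are exactly the left-endpoint Riemann sums defining that integral for the adapted, a.s.-continuous integrand $\nabla\psi(X_s)$. For the quadratic sum I would substitute $\Delta_jX\approx\mu(X_{t_j})\Delta_jt+R(X_{t_j})\Delta_jB$ and split into three pieces: the $\Delta_jt\,\Delta_jt$ piece is $O\!\left(\sum_j(\Delta_jt)^2\right)\to0$; the mixed $\Delta_jt\,\Delta_jB$ piece is bounded by $\bigl(\max_j|\Delta_jB|\bigr)\sum_j\Delta_jt\to0$ a.s.\ by path continuity of $B$; and the $\Delta_jB\,\Delta_jB$ piece $\sum_j(\Delta_jB)^\top R(X_{t_j})^\top\nabla^2\psi(X_{t_j})R(X_{t_j})(\Delta_jB)$ converges in $L^2$ to $\int_0^t\mathrm{Tr}\!\left(R(X_s)^\top\nabla^2\psi(X_s)R(X_s)\right)\mathrm{d}s$, using $\mathbb{E}[\Delta_jB^{(a)}\Delta_jB^{(b)}\mid\mathcal{F}_{t_j}]=\delta_{ab}\Delta_jt$ together with a variance estimate for the fluctuations of these increments about their conditional means. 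Finally $\sum_j|r_j|\le C\,\bigl(\max_j|\Delta_jX|\bigr)\sum_j|\Delta_jX|^2\to0$ in probability, since $\sum_j|\Delta_jX|^2$ stays bounded in probability while $\max_j|\Delta_jX|\to0$. Assembling these limits and removing the localization yields the integrated identity, hence the claim.

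\textbf{Main obstacle.} The heart of the argument — and the step I expect to require the most care — is the $\Delta_jB\,\Delta_jB$ term: one must show the random bilinear Riemann sum converges to an integral $\int_0^t\mathrm{Tr}(\cdots)\mathrm{d}s$ that is deterministic in form rather than to some genuinely random limit, which is precisely where the quadratic variation of Brownian motion and the conditional independence of $\Delta_jB$ from $\mathcal{F}_{t_j}$ enter and where ordinary Taylor-expansion intuition breaks down. Uniform control of the cubic remainder $\sum_j|r_j|$ is the other technical point; both become routine once localization has reduced everything to bounded coefficients on a compact set.
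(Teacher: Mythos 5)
The paper does not prove Lemma~\ref{Lemma_Ito}: it is stated without proof as a standard fact from stochastic calculus (the classical multidimensional It\^o formula, implicitly cited via references such as \cite{hsu2002stochastic} and the surrounding background). Your proposal is a correct outline of the standard textbook proof of that formula — localization, second-order Taylor expansion over a shrinking partition, identification of the linear sum with the It\^o integral, and the quadratic-variation argument that collapses the $\Delta_j B\,\Delta_j B$ term to a trace integral while the $(\Delta_j t)^2$, $\Delta_j t\,\Delta_j B$, and cubic-remainder terms vanish. Since there is no paper proof to compare against, the only thing worth flagging is a mild mismatch in hypotheses: the lemma as stated assumes $\psi$ is merely second-order differentiable, whereas your uniform-continuity bound on the cubic remainder, $|r_j|\le C|\Delta_j X|^3$, implicitly uses $C^2$ regularity (continuous Hessian); the standard statement of It\^o's formula does require $C^2$, so this is a gap in the lemma's phrasing rather than in your argument, but it is worth noting that your proof, correctly, needs the stronger hypothesis.
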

\noindent
 The {\em transition kernel} $p_{t|\tau}(y|x)$ is the probability (density) that $X$ takes the value $y$ at time $t$ conditional on $X$ taking the value $x$ at time $\tau$.
 Given an initial distribution $\pi$, the probability density at time $t$ is  $p_t(x) = \int_{\mathcal{M}} p_{t|0}(x|z) \pi(z) \mathrm{d}z$.
For any diffusion $X_t$,  $t \in[0,T]$,  its {\em time-reversal} $Y_t$ is the stochastic process such that $Y_t = X_{T-t}$ for  $t \in [0,T]$.
$Y_t$ is also a diffusion, governed by an SDE.
 In the special case where $X_t$ has identity covariance, $\mathrm{d} X_t = b(X_t)\mathrm{d}t + \mathrm{d}B_t$, the reverse diffusion satisfies \cite{anderson1982reverse}
\begin{eqnarray}
    \label{eq_reverse_diffusion2}
 \mathrm{d}Y_t  = -b(Y_t)\mathrm{d}t  + \nabla \log p_{t}(Y_t) \mathrm{d}t + \mathrm{d}B_t.
\end{eqnarray}
%
%
 One can also define diffusions on Riemannian manifolds, in which case $\mathrm{d}B_t$ is the derivative of Brownian motion on the tangent space (see \cite{hsu2002stochastic}).
Below we show the key steps in deriving our diffusion model, training algorithm (Algorithm \ref{alg_training_manifold}), and sampling algorithm (Algorithm \ref{alg_sampling_manifold}).

\smallskip
\noindent
\textbf{Forward diffusion.} 
Let $Z_t$ 
be a diffusion on $\mathbb{R}^d$ initialized at $q_0 = \psi(\pi)$. 
We choose $Z_t$ to be the Ornstein-Uhlenbeck process,  $\mathrm{d}Z_t = -\frac{1}{2}Z_t \mathrm{d}t + \mathrm{d}B_t$, whose stationary distribution is $N(0,I_d)$.
$Z_t$ is easy to sample as it has a closed-form Gaussian transition kernel $q_{t|\tau}$.
 Let $X_t := \varphi(Z_t)$, the projection of $Z_t$ onto $\mathcal{M}$.
$X_t$ is our model's forward diffusion.

\smallskip
\noindent
\textbf{Reverse diffusion SDE.}
Let $Y_t$ $:=$ $X_{T-t}$ denote the time-reversal of $X_t$.
$Y_t$ is a diffusion on $\mathcal{M}$, and its distribution at time $T$ equals  the target distribution $\pi$. 
It follows the SDE:
\begin{eqnarray}\label{eq_r1}
\mathrm{d}Y_t = f^{\star}(Y_t, t)\mathrm{d}t + g^{\star}(Y_t, t)\mathrm{d}B_t,
\end{eqnarray}
for some functions $f^\star(x,t): \mathcal{M}\times[0,T] \rightarrow \mathcal{T}_x\mathcal{M}$ and $g^\star(x,t) : \mathcal{M} \rightarrow \mathcal{T}_x\mathcal{M} \times \mathcal{T}_x\mathcal{M}$.
 Here $\mathrm{d}B_t$ is the derivative of standard Brownian motion on $\mathcal{M}$'s tangent space.
We write $\mathrm{d}B_t \equiv \mathrm{d}B_t^x$ when $x \in \mathcal{M}$ is clear from context.

We cannot directly apply \eqref{eq_reverse_diffusion2} to derive a tractable SDE for the reverse diffusion $Y_t$ on $\mathcal{M}$, as the transition kernel $p_{t|\tau}$ of the forward diffusion $X_t$ on $\mathcal{M}$ lacks a closed form expression.
Instead, we first use \eqref{eq_reverse_diffusion2} to obtain an SDE for the reverse diffusion of $Z_t \in \mathbb{R}^d$, 
$\mathrm{d}H_t =(H_t/2 + 2 \nabla \log q_{T-t}(H_t) )\mathrm{d}t + \mathrm{d} B_t$. 
 We use It\^o's Lemma to project this SDE onto $\mathcal{M}$, giving an SDE for $Y_t$  (see  Section \ref{sec_lemma_training_proof}),
\begin{equation}\label{eq_f1}
  \mathrm{d}Y_t \, \textrm{$=$} \,  \mathbb{E}  [(\nabla \varphi(H_t)^\top 
\textrm{$+$} 
\frac{\mathrm{d} H_t^\top}{2} \nabla^2 \varphi(H_t))\mathrm{d}H_t \big | \varphi(H_t)\textrm{$=$}Y_t]. 
\end{equation}

 \smallskip
\noindent
 \textbf{Training algorithm's objective function.} 
From \eqref{eq_f1}, we show one can train a model $f$,  $g$ for $f^\star$, $g^\star$ 
by solving an optimization problem (Lemma \ref{lemma_training}).
{Here, $f,g \in \mathcal{C}(\mathbb{R}^d, \mathbb{R}^d)$  are continuous functions from $\mathbb{R}^d$ to $\mathbb{R}^d$} and $t \sim \mathrm{Unif[0,1]}$, and $J_\varphi$ denotes the Jacobian of $\varphi$.
 \begin{equation}\label{objective_f}
 \min_{f}  \mathbb{E}_{t} \mathbb{E}_{b \sim \pi} [ \|  (\nabla \varphi(Z_{T-t}))^\top \frac{Z_{T-t}-\psi(b)e^{-(T-t)/2}}{e^{-(T-t)}-1}\nonumber\\
 + \frac{1}{2}\mathrm{tr}(\nabla^2 \varphi(Z_{T-t}))  -  f(\varphi(Z_{T-t}), t) \|^2  | Z_0 = \psi(b)]
\end{equation}
and
\begin{equation}\label{objective_g}
 \min_{g}  \mathbb{E}_{t} \mathbb{E}_{b \sim \pi} [ \| J_{\varphi}(Z_{T-t}))^\top J_{\varphi}(Z_{T-t})
 - g(\varphi(Z_{T-t}),t)^2 \|_F^2  | Z_0 = \psi(b)].    
 \end{equation}
\textbf{Sublinear computation of training objective.}
For manifolds with non-zero curvature, such as the sphere, \(\mathrm{SO}(n)\), and \(\mathrm{U}(n)\), our forward and reverse diffusions differ from prior works and incorporate a spatially-varying covariance term to account for curvature. This allows the forward diffusion to be computed as a projection \(\varphi\) of the Ornstein-Uhlenbeck process in \(\mathbb{R}^d \equiv \mathbb{R}^{n \times n}\) (or \(\mathbb{C}^{n \times n}\)) onto the manifold.
For \(\mathrm{SO}(n)\) or \(\mathrm{U}(n)\), \(\varphi\) is computed by one singular value decomposition \(U^\ast \Lambda U\) of the Gaussian matrix \(Z_{T-t} + Z_{T-t}^\ast\), requiring \(O(n^\omega) = O(d^{\frac{\omega}{2}})\) arithmetic operations, where \(d = \Theta(n^2)\) is the manifold dimension. This enables computation of the drift term's gradient \eqref{objective_f} in \(O(d^{\frac{\omega}{2}})\) arithmetic operations  and one gradient evaluation of $f$.

To train the reverse diffusion's SDE, we also need to model the covariance term \eqref{objective_g}, a \(d \times d = n^2 \times n^2\) matrix. To achieve a  per-iteration runtime sublinear in the $d^2 = n^4$ matrix entries, we leverage the special structure of the covariance matrix, which arises from the manifold's symmetries. 
For example, the forward diffusion \(U(t) \in \mathrm{SO}(n)\) (or \(U(t) \in \mathrm{U}(n)\)) is governed by the following system of SDEs:
\begin{eqnarray}\label{eq_DBM_projection}
 \textrm{d} u_i(t) \textrm{$=$} \sum _{j \in [n] \backslash \{i\}}  \left( \alpha_{i j}(t) \textrm{d}B_{ij} u_j(t) \textrm{$-$}  
\frac{\beta_{i j}(t)}{2} u_i(t) \right) \textrm{d}t,
\end{eqnarray}
where $\alpha_{ij}(t) := \mathbb{E}\left [\nicefrac{1}{(\lambda_i(t) - \lambda_j(t))} | \varphi(Z_t) = U(t)\right]$ and  $\beta_{ij}(t) := \mathbb{E}\left[\nicefrac{1}{(\lambda_i(t) - \lambda_j(t))^2} | \varphi(Z_t) = U(t)\right]$
 $\forall i,j \in [n]$.
To train a model for this covariance term with sublinear runtime, we exploit the symmetries of the underlying group. These symmetries ensure the covariance term in \eqref{eq_DBM_projection} is fully determined by \(n^2\) scalar terms \(\alpha_{ij}(t)\) for \(i, j \in [n]\) and the \(n \times n\) matrix \(U\). 
Thus, it suffices to train a model \(\mathcal{A}(U, t) \in \mathbb{R}^{n \times n}\) for these \(n^2\) terms by minimizing the objective $
\|\mathcal{A}(U, t) - A\|_F^2$,
where \(A\) is the \(n \times n\) matrix with entries \(A_{ij} = \nicefrac{1}{(\lambda_i(t) - \lambda_j(t))}\), and \(\lambda_i(t)\) is the \(i\)th diagonal entry of \(\Lambda \equiv \Lambda(t)\). A similar method applies to efficiently train the covariance term for the sphere (see Appendix \ref{appendix_use_cases}).

\smallskip
\noindent
\textbf{Sampling algorithm.}
 To (approximately) sample from $\pi$, we approximate the drift and covariance terms of the 
 reverse diffusion \eqref{eq_r1} via trained models $\hat{f}, \hat{g}$ obtained by solving \eqref{objective_f} (in practice, $\hat{f}, \hat{g}$  are neural networks $\hat{f}_\theta, \hat{g}_\phi$, and $\theta, \phi$  outputs of Algorithm \ref{alg_training_manifold}).\
  We initialize this SDE at $\varphi(N(0,I_d))$, the pushforward of $N(0,I_d)$ onto $\mathcal{M}$ with respect to\  $\varphi$.
 \begin{equation}\label{eq_reverse_SDE3b}
  \mathrm{d}\hat{Y}_t  = \hat{f}(\hat{Y}_t,t) \mathrm{d}t + \hat{g}(\hat{Y}_t,t)\mathrm{d}B_t, \quad \hat{Y}_0 \sim \varphi(N(0,I_d)).
 \end{equation}
 %
%
To generate samples, we numerically simulate the SDE \eqref{eq_reverse_SDE3b} for  $\hat{Y}_T$ by discretizing it with a small time-step $\Delta>0$: 
\begin{equation}\label{eq_finite_difference} 
 \hat{y}_{i+1}  = \mathrm{exp}\left(\hat{y}_i, \, \, \hat{f}(\hat{y}_i,t) \Delta + \hat{g}(\hat{y}_i,t)\sqrt{\Delta}\xi_i \right), 
 \end{equation}
 $i \in \{0,\ldots,\nicefrac{T}{\Delta} \}$,  initialized at  $\hat{y}_0 \sim \varphi(N(0,I_d))$.

      \begin{algorithm}[H]
\caption{Training algorithm} 
\label{alg_training_manifold}

\begin{algorithmic}[1]
\REQUIRE {Oracle for a ``projection'' $\varphi: \mathbb{R}^d \rightarrow \mathcal{M}$, and  its gradient}
\REQUIRE {Oracle for 
map $\psi: \mathcal{M}  \rightarrow \mathbb{R}^d$ s.t. $\varphi(\psi(x)) = x$  $\forall x \in \mathcal{M}$}
\REQUIRE {Dataset $D = \{x_0^1,\ldots,x_0^m\} \subseteq \mathcal{M}$. Hyperparam.\ $T>\,$0}
\REQUIRE {Models $f_{\hat{\theta}}: \mathcal{M} \times [0,T] \rightarrow   \mathcal{T} \mathcal{M}$ and $g_{\hat{\phi}}: \mathcal{M} \times [0,T] \rightarrow \mathcal{T} \mathcal{M}\times \mathcal{T} \mathcal{M}$.  $\hat{\theta} \in \mathbb{R}^{a_1}$, $\hat{\phi} \in \mathbb{R}^{a_2}$ denote trainable parameters}

\REQUIRE {Initial parameters $\theta_0 \in \mathbb{R}^{a_1}$, $\phi_0 \in \mathbb{R}^{a_2}$}

\REQUIRE {Hyperparameters: Number of stochastic gradient descent iterations $r\in \mathbb{N}$. Step size $\eta>0$, batch size  $\mathfrak{b}$}

\STATE Define,  $\forall \hat{\theta} \in \mathbb{R}^{a_1}$ $\hat{z} \in \mathbb{R}^d$, $b, x \in \mathcal{M}$, $\hat{t} \in [0,T]$, the objective function
$F(\hat{\theta}; b, \hat{z},  \hat{x}, \hat{t}):= \|(\nabla \varphi(\hat{z}))^\top  \frac{\hat{z}-\psi(b)e^{-(T-t)/2}}{e^{-(T-t)}-1}   + \frac{1}{2}\mathrm{tr}( \nabla^2 \varphi(\hat{z}))  -  f(\hat{x}, \hat{t}) \|^2$

\STATE Define $\forall \hat{\theta} \in \mathbb{R}^{a_2}$ $\hat{z} \in \mathbb{R}^d$, $b, x \in \mathcal{M}$, $\hat{t} \in [0,T]$, the objective 
$G(\hat{\phi}; b, \hat{z},  \hat{x}, \hat{t}):= \|J_{\varphi}(\hat{z})^\top J_{\varphi}(\hat{z}) - (g_{\hat{\phi}}(\hat{x}, \hat{t}))^2 \|_F^2$

\STATE Set $\theta \leftarrow \theta_0$,  $\phi \leftarrow \phi_0$

\FOR {$i = 1, \ldots, r$}

\STATE Sample a random batch $S \subseteq [m]$ of size $\mathfrak{b}$,

\STATE Sample $t \sim \mathrm{Unif}([0,T])$

\FOR {$j \in S$}

\STATE Sample $\xi \sim N(0, I_d)$

\STATE Set $z_j \leftarrow  \psi(x_0^j) e^{-\frac{1}{2}(T-t)} + \sqrt{1-e^{-(T-t)}} \, \xi$

\STATE Set $x_j \leftarrow \varphi(z_j)$

\ENDFOR

  \STATE Compute $\Gamma \leftarrow \frac{1}{\mathfrak{b}}\sum_{j \in S} \nabla_\theta F(\theta; x_0^j, z_j,  x_j, t)$ \label{line_F}

  \STATE $\theta \leftarrow  \theta - \eta \Gamma$

  \STATE Compute $\Upsilon \leftarrow \frac{1}{\mathfrak{b}}\sum_{j \in S} \nabla_\phi G(\phi; x_0^j, z_j,  x_j, t)$ \label{line_G}

  \STATE $\phi \leftarrow  \phi - \eta \Upsilon$

\ENDFOR

  \STATE {\bf output:} {Parameters $\theta, \phi$ for the models $f_{\theta}$ and $g_{\phi}$}

\end{algorithmic}
\end{algorithm}

        \begin{algorithm}[H]
\caption{Sampling algorithm} \label{alg_sampling_manifold}
\begin{algorithmic}[1]
\REQUIRE {An oracle which returns the value of the exponential map $\mathrm{exp}(x,v)$ on some  manifold $\mathcal{M}$, for any $x \in \mathcal{M}$, $v \in \mathcal{T}_x \mathcal{M}$}

\REQUIRE {An oracle for the ``projection'' map $\varphi: \mathbb{R}^d \rightarrow \mathcal{M}$}
\REQUIRE {Models $f_{\hat{\theta}}: \mathcal{M} \times [0,T] \rightarrow   \mathcal{T} \mathcal{M}$ and $g_{\hat{\phi}}: \mathcal{M} \times [0,T] \rightarrow \mathcal{T} \mathcal{M}\times \mathcal{T} \mathcal{M}$.  $\hat{\theta} \in \mathbb{R}^{a_1}$, $\hat{\phi} \in \mathbb{R}^{a_2}$ denote trainable parameters}

\REQUIRE {Parameters $\theta, \phi$ (from output of Algorithm \ref{alg_training_manifold})}, 
\REQUIRE {$T>0$, $N \in \mathbb{N}$, $\Delta>0$ such that $\nicefrac{T}{\Delta} \in N\mathbb{Z}$.}

\STATE Sample $z_0 \sim N(0,I_d)$, and Set $\hat{y}_0 \leftarrow \varphi(z_0)$

\FOR {$i = 0,1, \ldots, \nicefrac{T}{\Delta} -1$}

\STATE Sample $\xi \sim N(0,I_d)$.

\STATE Set $\hat{y}_{i+1} \leftarrow \mathrm{exp}\left(\hat{y}_i, \, \, \hat{f}(\hat{y}_i,i\Delta) \Delta + \hat{g}(\hat{y}_i,i\Delta)\sqrt{\Delta}\xi_i \right)$

\ENDFOR

\STATE {\bf output} {$\hat{y}_{\nicefrac{T}{\Delta}}$}
\end{algorithmic}
\end{algorithm}

\section{Overview of proof of sampling guarantees (Theorem \ref{thm_sampling_Manifold_best})}
\label{sec:proof_outline}

In this section, we first present the main highlights and novel techniques used to establish our sampling guarantees.
Next, we provide a detailed outline of the proof, explaining how these techniques come together to validate our sampling guarantees (Section \ref{sec_proof_overview}).

Girsanov transformations, used in prior works to bound accuracy, do not apply to our diffusion due to its spatially varying covariance. We adopt an optimal transport approach, selecting an optimal coupling between the ``ideal'' diffusion \(Y_t\), governed by the SDE \(\mathrm{d}Y_t = f^{\star}(Y_t, t)\mathrm{d}t + g^{\star}(Y_t, t)\mathrm{d}B_t\), and the diffusion \(\hat{Y}_t\), where \(f^{\star}, g^{\star}\) are replaced by our trained model \(\hat{f}, \hat{g}\), within an error \(\epsilon\).
 We first construct a simple coupling between $Y_t$ and $\hat{Y}_t$ by setting the underlying $B_t$ in their SDEs equal.  
  Applying comparison theorems for manifolds of non-negative curvature to the coupled SDEs, we prove a generalization of Gronwall's inequality to SDEs on manifolds (Lemma \ref{Gronwall}).
\begin{equation}\label{eq_r2_b} 
W_2(\hat{Y}_t, Y_t)\leq (\rho^2(\hat{Y}_0, Y_0) + \epsilon) e^{c t},
\end{equation}
 where $\rho$ is geodesic distance on $\mathcal{M}$ and $W_2$ the Wasserstein distance.
 \eqref{eq_r2_b} holds if $f^\star, g^\star$ are $c$-Lipschitz on {\em all} of $\mathcal{M}$. 

\smallskip
\noindent
 {\bf  Showing  ``average-case'' Lipschitzness.}
$\varphi$ is not in general Lipschitz.
 E.g., on $\mathrm{SO}(n)$ and $\mathrm{U}(n)$,  $\varphi(Z)$ has singularities at points where the eigengaps of $Z+Z^\ast$ vanish. 
 By using tools from random matrix theory, we instead show $\varphi$ satisfies an ``average-case'' Lipschitzness, on a set $\Omega_t$ on which the diffusion $Z_t \in \mathbb{R}^d$ remains w.h.p. (Lemma \ref{Lemma_average_case_Lipschitz}).
  Next, we show that for $f^\star$, $g^\star$ to be $c$-Lipschitz {\em everywhere} on $\mathcal{M}$, it is sufficient for $\varphi$ to only satisfy average-case Lipschitzness.
  To do this, we express $f^\star$ (and $g^\star$) as an integral over the eigenvalues $\Lambda$ of $Z_t + Z_t^\ast = U \Lambda U^\ast$,
\begin{equation*}
f^\star(U, t) \propto 
 \int \left[\nabla \varphi(Z)^\top \nabla \log q_{T-t|0}(Z) { + } \cdots \right] \mathbbm{1}_{\Omega_t}(Z) \mathrm{d} \Lambda
 \end{equation*}
 Due to the manifold's symmetries, we observe $\Omega_t$ (and the entire integrand) depend only on $\Lambda$, not the eigenvectors $U \in \mathrm{U}(n)$.
   This allows us to show the integral ``smooths out'' the singularities of $\varphi$, and that $f^\star(U, t)$ (and $g^\star$) are $\mathrm{poly}(d)$-Lipschitz at {\em every} $U \in \mathrm{U}(n)$ on the manifold (Lemma \ref{lemma_Lipschitz}).

\smallskip
\noindent
{\bf Improved coupling to obtain $\mathrm{poly}(d)$ bounds.}
 While we have shown our model's SDE is $c=\mathrm{poly(d)}$-Lipschitz on $\mathcal{M}$, after times $\tau > \frac{1}{c} = \frac{1}{\mathrm{poly(d)}},$ our Wasserstein bound \eqref{eq_r2_b} grows exponentially with $d$.
To overcome this, we define a new coupling between $Y_t, \hat{Y}_t$ which we ``reset'' after time intervals of length $\tau = \nicefrac{1}{c}$ by converting our Wasserstein bound into a total variation (TV) bound after each interval. 
   Key to converting the bound is to show that w.h.p. the projection $\varphi$ has $\mathrm{poly}(d)$-Lipschitz Jacobian everywhere in a ball of radius $\nicefrac{1}{\mathrm{poly}(d)}$ around our diffusion. 
 By alternating between Wasserstein and TV bounds, we get error bounds which grow proportional to $\nicefrac{T}{\tau}= \mathrm{poly(d)}$ (Lemma \ref{Lemma_Wasserstein_to_TV_conversion}).

\smallskip
\noindent
{\bf Handling instability on $\mathrm{SO}(n)$.} 
These proof ideas extend easily to the torus and sphere. For \(\mathrm{SO}(n)\), an additional challenge arises: w.h.p., gaps between neighboring eigenvalues become exponentially small in \(d\) over short time intervals due to weaker ``electrical repulsion'' between eigenvalues of real-valued random matrices. During these intervals, the diffusion moves at \(\exp(d)\) velocity.
Despite this, we show that a step size of \(1/\mathrm{poly}(d, \frac{1}{\delta})\) suffices to simulate a random solution to the SDE with a  distribution \(\delta\)-close to the correct one. 
During these intervals, interactions between eigenvectors nearly separate into slow-moving eigenvectors and pairs of fast-moving eigenvectors, with a simple transition kernel from the invariant measure on \(\mathrm{SO}(2)\) (see Section \ref{appendix_SO_sketch}).

\subsection{Proof outline of Theorem \ref{thm_sampling_Manifold_best}}\label{sec_proof_overview}

In the following, for any random variable $X$ we denote its probability distribution by $\mathcal{L}_X$.
As already mentioned, previous works use Girsanov's theorem to bound the accuracy of diffusion methods.  However, Girsanov transformations do not exist for our diffusion as it has a non-constant covariance term which varies with the position $x$.  Thus, we depart from previous works and instead use an optimal transport approach based on a carefully chosen optimal coupling between the ``ideal diffusion'' $Y_t$ and the algorithm's process $\hat{y}_t$
Specifically, denoting by $\mu_t$ the distribution of $Y_t$ and by $\nu_t$ the distribution of $\hat{Y}_t$, the goal is to bound the Wasserstein optimal transport distance $W_2(\mu_t,\nu_t) := \inf_{\kappa \in \mathcal{K}(\mu_t,\nu_t)} \mathbb{E}_{(Y_t, \hat{Y}_t)}[\rho^2(\hat{Y}_t, Y_2)]$ where $\mathcal{K}(\mu,\nu)$ is the collection of all couplings of the distributions $\mu$ and $\nu$, and $\rho$ is the geodesic distance on $\mathcal{M}$. 
Towards this end, we would like to find a coupling $\kappa$ which (approximately) minimizes  $\mathbb{E}_{(Y_t \sim \mu_t, \hat{Y}_t \sim \nu_t)}[\rho^2(\hat{Y}_t, Y_2)]$ at any given time $t$.

 As a first attempt, we consider the simple coupling where we couple the ``ideal'' reverse diffusion $Y_t$,
\begin{equation}
\mathrm{d}Y_t = f^{\star}(Y_t, t)\mathrm{d}t + g^{\star}(Y_t, t)\mathrm{d}B_t,
\end{equation}
and the reverse diffusion $\hat{Y}_t$ given by our trained model $\hat{f}, \hat{g}$,
\begin{equation} 
\mathrm{d}\hat{Y}_t = \hat{f}(\hat{Y}_t, t)\mathrm{d}t + \hat{g}(Y_t, t)\mathrm{d}B_t.
\end{equation}
To couple these two diffusions, we set their Brownian motion terms $\mathrm{d}B_t$ to be equal to each other at every time $t$.
In a similar manner, we can also couple $\hat{Y}_t$ and the discrete-time algorithm $\hat{y}_i$ by setting the Gaussian term $\xi_i$ in the stochastic finite difference equation  \eqref{eq_finite_difference} to be equal to $\xi_i = \frac{1}{\sqrt{\Delta}}\int_{\Delta i}^{\Delta (i+1)}\mathrm{d}B_t \mathrm{d}t$ for every $i$.

\medskip
\noindent
{\bf Step 1: Bounding the Wasserstein distance for everywhere-Lipschitz SDEs.}
To bound the Wasserstein distance $W_2(Y_t, \hat{y}_t) \leq W_2(Y_t, \hat{Y}_t) +  W_2(\hat{Y}_t, \hat{y}_t)$, we first prove a generalization of Gronwall's inequality to Stochastic differential equations on manifolds (Lemma \ref{Gronwall}).
 Gronwall's inequality \cite{gronwall1919note} says that if $R: [0,T] \rightarrow \mathbb{R}$  satisfies the differential inequality $\frac{\mathrm{d}}{\mathrm{d}}R(t) \leq \beta(t) R(t)$ for all $t>0$, where the coefficient $\beta(t): [0,T] \rightarrow \mathbb{R}$ may also be a function of $t$, then the solution to this differential inequality satisfies $R(t) \leq R(0) e^{\int_0^t \beta(s) \mathrm{d}s}.$

 Towards this end, we first couple $Y_t$ and $\hat{Y}_t$ by setting their Brownian motion terms $\mathrm{d}B_t$ equal to each other and then derive an SDE for the squared geodesic distance $\rho^2(\hat{Y}_t, Y_t)$ using It\^o's lemma.
Taking the expectation of this SDE gives an ODE for  $\mathbb{E}[\rho^2(\hat{X}_t, X_t)]$,
\begin{align} 
&\mathrm{d} \mathbb{E}[\rho^2(\hat{X}_t, X_t)] \nonumber\\
&\quad =  \mathbb{E} \left[\nabla \rho^2(\hat{X}_t, X_t)^\top \begin{pmatrix} f^\star(X_t, t)\\  \hat{f}(\hat{X}_t, t)  \end{pmatrix} \,  + 
 \frac{1}{2} \mathrm{Tr} \begin{pmatrix}  g^\star(X_t,t) \, \, 0\\   \hat{g}(X_t,t) \, \, \, \, \, 0  \end{pmatrix}^\top \nabla^2 \rho^2(\hat{X}_t, X_t) \begin{pmatrix}  g^\star(X_t,t) \, \, 0\\   \hat{g}(X_t,t) \, \, \, \, \, 0  \end{pmatrix}  \right]  \mathrm{d}t.
\end{align}
To bound each term on the r.h.s., we first observe that, roughly speaking, due to the non-negative curvature of the manifold, by the Rauch comparison theorem \cite{rauch1951contribution}, each derivative on the r.h.s.\ is no larger than in the Euclidean case $\mathcal{M} = \mathbb{R}^d$ where $\rho^2(\hat{X}_t, X_t) = \|\hat{X}_t- X_t\|_2^2$.  Hence we have that
\begin{align*} \left|\nabla \rho^2(\hat{X}_t, X_t)^\top \begin{pmatrix} f^\star(X_t, t)\\  \hat{f}(\hat{X}_t, t)  \end{pmatrix} \right| \leq 2\|\hat{X}_t - X_t\| \times
\|f^\star(X_t, t)-  \hat{f}(\hat{X}_t, t)\|
\leq 2 \|\hat{X}_t - X_t\| (c\|\hat{X}_t - X_t\| + \epsilon),
\end{align*}
as long as we can show that $f^\star$ is $c$- Lipschitz for some $c>0$ (see Step 2 below).
Bounding the covariance term in a similar manner, and applying Gronwall's lemma to the differential inequality, we get that 
\begin{equation}\label{eq_r2} 
W_2(\hat{Y}_t, Y_t)\leq \mathbb{E}[\rho^2(\hat{Y}_t, Y_t)] \leq (\rho^2(\hat{Y}_0, Y_0) + \epsilon) e^{c t}.
\end{equation}

\smallskip
\noindent
{\bf Step 2: Showing that our diffusion satisfies an ``average-case'' Lipschitz condition.}
To apply \eqref{eq_r2}, we must first show that the drift and diffusion terms $f^\star$ and $g^\star$ are Lipschitz on $\mathcal{M}$. 
Towards this end, we would ideally like to apply bounds on the derivatives of the projection map $\varphi : \mathbb{R}^d \rightarrow \mathcal{M}$ which defines our diffusion $Y_t$.
Unfortunately, in general, $\varphi$ may not be differentiable at every point.
This is the case for the sphere, where the map $\varphi(z) = \frac{z}{\|z\|}$ has a singularity at $z=0$.
 This issue also arises in the case of the unitary group and orthogonal group, since the derivative of the spectral decomposition $\varphi(z) = U^\ast \Lambda U$ has singularities at any matrix $z$ which has an eigenvalue gap $\lambda_{i}-\lambda_{i+1} = 0$.
 
 To tackle this challenge, we show that, for the aforementioned symmetric manifolds, the forward diffusion $Z_t$ in $\mathbb{R}^d$ remains in some set $\Omega_t  \subseteq \mathbb{R}^d$ with high probability $1-\alpha$, on which the map $\varphi(Z_t)$ has derivatives bounded by $\mathrm{poly}(d)$ (Assumption \ref{assumption_Lipschitz} and Lemma \ref{Lemma_average_case_Lipschitz}).
 We then show how to ``remove'' the rare outcomes of our diffusion that do not fall inside $\Omega_t$. 
 As our forward diffusion $X_t$ (and thus the reverse diffusion $Y_t=X_{T-t}$) remains at every $t$  inside $\Omega_t$ with probability $\geq 1-\alpha$, removing these ``bad'' outcomes only adds a cost of $\alpha$ to the total variation error.

\medskip
\noindent
{\em Showing that $\varphi$ has $\mathrm{poly}(d)$ derivatives w.h.p. (showing that Assumption \ref{assumption_Lipschitz} holds).}
 We first consider the sphere, which is the simplest case (aside from the trivial case of the torus, where the derivatives of $\varphi$ are all $O(1)$ at every point).
In the case when data is on the sphere, which we embed as a unit sphere in $\mathbb{R}^d$, one can easily observe that e.g. $\|\nabla \varphi(z)\| \leq O(1)$ for any $z$ outside a ball of radius $r \geq \Omega(1)$ centered at the origin.
As the volume of a ball of radius $r = \alpha$ is $\frac{1}{r^d}$, one can use standard Gaussian concentration inequalities to show that the Brownian motion $X_t$ will remain outside this ball for time $T$ with probability roughly $1-O(\frac{1}{r^d T})$.

 We next show that the Lipschitz property holds for the unitary group $\mathrm{U}(n)$.
  We first recall results from random matrix theory which allow us to bound the eigenvalue gaps of a matrix with Gaussian entries.
 Specifically, these results say that roughly speaking, if $X_0$ is any matrix and $X_t = X_0 + B(t)$, where $B(t)$ is a symmetric matrix with iid $N(0, t)$  entries undergoing Brownian motion, one has that the eigenvalues $\gamma_1(t) \geq \cdots \geq \gamma_n(t)$ of $X_t$ satisfy for all $s \geq 0$ (see e.g. \cite{anderson2010introduction, mangoubi2023private, mangoubi2025private}),
\begin{equation}\label{eq_r3}  
    \mathbb{P} \left( \bigcap_{t \in [t_0, T]} 
    \left\{ \gamma_{i+1}(t) - \gamma_i(t) \leq s \frac{1}{\mathrm{poly}(n) \sqrt{t} } \right\} \right) 
    \leq O\left(s^{\frac{1}{2}}\right).
\end{equation}
Thus, if we define $\Omega_t$ to be the set of outcomes of such that $\gamma_{i+1}(t) - \gamma_i(t) \leq \alpha^2 \frac{1}{\mathrm{poly}(n) \sqrt{t}}$, we have that $\mathbb{P}(X_t \in \Omega_t \, \, \, \forall t \in [t_0,T]) \geq 1- \alpha$.

 Our high-probability bound on $\Omega_t$ allows us to show that $\varphi$ satisfies a Lipschitz property at ``most'' points $\Omega_t$.
 However, if we wish to apply \eqref{eq_r2}, we need to show that drift term $f^\star$ and covariance term $g^\star$ in our diffusion satisfy a Lipschitz property at {\em every} point in $\mathbb{R}^d$.
 Towards this end, we first make a small modification to the objective function which allows us to exclude outcomes $\{X_t\}_{t\in [0,T]}$ of the forward diffusion such that $X_t \notin \Omega_t$ for some $t \in [0, T]$.
Specifically, we multiply the objective function \eqref{objective_f} by the indicator function $\mathbbm{1}_{\Omega_t}(z)$. 
As determining whether a point $z\in \Omega_t$ requires only checking the eigenvalue gaps (when $\mathcal{M}$ is the unitary or orthogonal group), computing $\mathbbm{1}_{\Omega_t}(z)$ can be done efficiently using the singular value decomposition.

\medskip
\noindent
{\em Bounding the Lipschitz constant of  $f^\star$ and $g^\star$.}
Recall that (when, e.g., $\mathcal{M}$ is one of the aforementioned symmetric manifolds) we may decompose any $z \in \mathbb{R}^d$ as $z \equiv z(U, \Lambda)$ where $U \in \mathcal{M}$.
Note that $\mathbbm{1}_{\Omega_t}(z)$ is {\em not} a continuous function of $z$. 
However, we will show that, as $\mathbbm{1}_{\Omega_t}(z(U,\Lambda))$ depends only on $\Lambda$, multiplying our objective function by $\mathbbm{1}_{\Omega_t}$ does not make $f^\star$ and $g^\star$ discontinuous (and thus does not prevent them from being Lipschitz).
This is because $f^\star$ and $g^\star$ are given by conditional expectations conditioned on $U$, and can thus be decomposed as integrals over $\Lambda$.
Towards this end we express $f^\star$ as an integral over the parameter $\Lambda$,
\begin{align*}
 f^\star(U, t) =
  c_U  \int_{\Lambda \in \mathcal{A}} 
\left[ \nabla \varphi(z(U,\Lambda))^\top \nabla \log q_{T-t|0}(z(U,\Lambda)) 
  + \frac{1}{2}\mathrm{tr}\nabla^2 \varphi(z(U,\Lambda))\right] q_{T-t}(z(U,\Lambda))
\mathbbm{1}_{\Omega_t}(\Lambda) \mathrm{d} \Lambda ,
\end{align*}
 where $c_U$ 
 is a normalizing constant.
 Differentiating with respect to\ $U$,
\begin{align}\label{eq_r5}
&\frac{\mathrm{d}}{\mathrm{d}U} f^\star(U, t) = \mathbb{E}_{z(U,\Lambda) \sim q_{T-t}} \bigg[ 
\frac{\mathrm{d}}{\mathrm{d}U}   (\nabla \varphi(z(U,\Lambda))^\top \nabla_U \log q_{T-t|0}(z(U,\Lambda)) \nonumber\\ %
& \qquad \qquad \qquad  \qquad \qquad \qquad \qquad \qquad  + \frac{1}{2}\mathrm{tr}(\nabla^2 \varphi(z(U,\Lambda)))) \mathbbm{1}_{\Omega_t}(\Lambda) \, \, \bigg | \, \, V=U \bigg] + \cdots,
\end{align}
where ``$\cdots$'' includes three other similar terms.
To bound the terms on the r.h.s. of \eqref{eq_r5}, we apply Assumption \ref{assumption_Lipschitz} which says that the operator norms of $\nabla \varphi$, $\nabla^2 \varphi$, $\frac{\mathrm{d}}{\mathrm{d}U}\nabla \varphi$ and $\frac{\mathrm{d}}{\mathrm{d}U}\nabla^2 \varphi$ are all bounded above by $\mathrm{poly}(d)$ whenever $z \in \Omega_t$.
 To bound the term $\nabla_U \log q_{T-t|0}(z(U,\Lambda))$ we note that  $\nabla \log q_{T-t|0}(z(U,\Lambda))$ is the drift term of the reverse diffusion in Euclidean space.
This term was previously shown to be $d C^2$-Lipschitz for all $t \geq \Omega(\frac{1}{d})$ when the support of the data distribution in $\mathbb{R}^d$ lies in a ball of radius $C$ (see, e.g., Proposition 20 of  \cite{chen2023sampling}).
Thus, plugging in the above bounds into \eqref{eq_r5} we have that $\|\frac{\mathrm{d}}{\mathrm{d}U} f^\star(U, t)\|_{2 \rightarrow 2} \leq \mathrm{poly}(d)$.
A similar calculation shows that $\|\frac{\mathrm{d}}{\mathrm{d}U} g^\star(U, t)\|_{2 \rightarrow 2} \leq \mathrm{poly}(d)$.
This immediately implies that $f^\star(U,t)$ and $g^\star(U,t)$ are $\mathrm{poly}(d)$-Lipschitz at {\em every} $U \in \mathcal{M}$.

\medskip
\noindent
{\bf Step 3: Improving the coupling to obtain polynomial-time bounds.}
Now that we have shown that $f^\star$ and $g^\star$ are $\mathrm{poly}(d)$-Lipschitz, we can apply \eqref{eq_r2} to bound the Wasserstein distance:
$W_2(\hat{Y}_{t+\tau}, Y_{t+\tau})\leq (\rho^2(\hat{Y}_t, Y_t) + \epsilon) e^{c \tau} \qquad \forall \tau \geq 0,$
where $c \leq \mathrm{poly}(d)$.

Moreover, with slight abuse of notation, we may define $\hat{y}_{t+\tau}$ to be a continuous-time interpolation of the discrete process $\hat{y}$.
Applying \eqref{eq_r2} to this process we get that, roughly,
$W_2(\hat{Y}_{t+\tau}, \hat{y}_{t+\tau})\leq (\rho^2(\hat{y}_t, Y_t) + \epsilon + \Delta) e^{c \tau}$ for $\tau \geq 0$.
Thus, we get a bound on the Wasserstein error,
\begin{align}\label{eq_r6}
W_2(Y_{t+\tau}, \hat{y}_{t+\tau}) \leq W_2(\hat{Y}_{t+\tau}, Y_{t+\tau}) + W_2(\hat{Y}_{t+\tau}, \hat{y}_{t+\tau}) \leq  (\rho^2(\hat{y}_t, Y_t) + \epsilon + \Delta) e^{c \tau}, \qquad \tau \geq 0.
\end{align}
Unfortunately, after times $\tau > \frac{1}{c} = \frac{1}{\mathrm{poly(d)}},$ this bound grows exponentially with the dimension $d$.
To overcome this challenge, we define a new coupling between $Y_t$ and $\hat{Y}_t$ which we ``reset'' after time intervals of length $\tau = \frac{1}{c}$ by converting our Wasserstein bound into a total variation bound after each time interval.
  Towards this end, we use the fact that if at any time $t$ the total variation distance satisfies $\|\mathcal{L}_{Y_t} - \mathcal{L}_{\hat{y}_t}\|_{\mathrm{TV}}\leq \alpha$, then there exists a coupling such that $Y_t = \hat{Y}_t$ with probability at least $1-\alpha$.
  In other words, w.p. $\geq 1-\alpha$, we have $\rho(\hat{y}_{t+\tau}, Y_{t+\tau}) = 0$, and we can apply inequality \eqref{eq_r6} over the next time interval of $\tau$ without incurring an exponential growth in time.
Repeating this process $\frac{T}{\tau}$ times, we get that $\|\mathcal{L}_{Y_T} - \mathcal{L}_{\hat{y}_T}\| \leq \alpha \times \frac{T}{\tau}$, where the TV error grows only {\em linearly} with $T$.

\medskip
\noindent
{\em Converting Wasserstein bounds on the manifold to TV bounds.}
To complete the proof, we still need to show how to convert the Wasserstein bound into a TV bound (Lemma \ref{Lemma_Wasserstein_to_TV_conversion}).
Towards this end, we begin by showing that the transition kernel $\tilde{p}_{t+\tau+\hat{\Delta} | t+\tau}(\, \cdot \,| H_{t+\tau})$ of the reverse diffusion $H_t$ in $\mathbb{R}^d$ is close to a Gaussian in KL distance:  
\begin{equation*}
D_{\mathrm{KL}} (N(H_{t+\tau} + \hat{\Delta} \nabla \tilde{p}_{T-t-\tau}(H_{t+\tau}), \hat{\Delta} I_d) \, \| \, \tilde{p}_{t+\tau+\hat{\Delta} | t+\tau}(\, \cdot \,| H_{t+\tau} )) \leq \frac{\alpha \tau}{T}.
\end{equation*}
One can do this via Girsanov's theorem, since, unlike the diffusion $Y_t$ on the manifold, the reverse diffusion in Euclidean space $H_t$ {\em does} have a constant diffusion term (see e.g. Theorem 9 of \cite{chen2023sampling}).

Next, we use the fact that with probability at least $1-\alpha \frac{\tau}{T}$ the map $\varphi$ in a ball of radius $\frac{1}{\mathrm{poly}(d)}$ about the point $H_{t+\tau}$ has $c$-Lipschitz Jacobian where $c=\mathrm{poly}(d)$, and that the inverse of the exponential map $\mathrm{exp}(\cdot)$ has $O(1)$-Lipschitz Jacobian, to show that the transition kernel $p_t$ of $Y_t = \varphi(H_t)$ satisfies
\begin{equation*}
D_{\mathrm{KL}} (\nu_1 \, \| \, p_{t+\tau+\hat{\Delta} | t+\tau}(\, \cdot \,| Y_{t+\tau} ) ) \leq (1+\hat{\Delta} c)^d \frac{\alpha \tau}{T} \leq 2\frac{\alpha \tau}{T}
\end{equation*}
 if we choose $\hat{\Delta} \leq O(\frac{1}{c d})$, where $\nu_1:= \mathrm{exp}_{Y_{t+\tau}}(N(Y_{t+\tau} + \hat{\Delta} f^\star(Y_{t+\tau},t+\tau), \, \, \hat{\Delta} g^{\star 2}(Y_{t+\tau},t+\tau) I_d))$.

Next, we plug in our Wasserstein bound $W(Y_{t+\tau}, \hat{y}_{t+\tau}) \leq O(\epsilon)$ 
into the formula for the KL divergence between two Gaussians to bound $\|\mathcal{L}_{Y_{t+\tau+\hat{\Delta}}} - \mathcal{L}_{\hat{y}_{t+\tau+\hat{\Delta}}}\|_{\mathrm{TV}}$.
Specifically, noting that $\mathcal{L}_{\hat{y}_{t+\tau+\hat{\Delta}}|\hat{y}_{t}} = \mathrm{exp}_{\hat{y}_{t+\tau}}(N(\hat{y}_{t+\tau} + \hat{\Delta} f(\hat{y}_{t+\tau},t+\tau), \hat{\Delta} g^2(\hat{y}_{t+\tau},t+\tau) I_d))$, we have that 
\begin{eqnarray*}
    D_{\mathrm{KL}} \big(\nu_1, \mathcal{L}_{\hat{y}_{t+\tau+\hat{\Delta}}|\hat{y}_{t+\tau}} \big) 
    &=& \mathrm{Tr} \left( \big(g^{\star 2}(Y_{t+\tau}, t+\tau) \big)^{-1} g^2(\hat{y}_{t+\tau}, t+\tau) \right) \\
    & &  - d + \log \frac{\det g^{\star 2}(Y_{t+\tau}, t+\tau)}{\det g^2(\hat{y}_{t+\tau}, t+\tau)} + w^\top \big( \hat{\Delta} g^{\star 2}(Y_{t+\tau}, t) \big)^{-1} w.
\end{eqnarray*}
where $w:=Y_{t+\tau} - \hat{y}_{t+\tau} + \hat{\Delta}(f^\star(Y_{t+\tau}, t+\tau) - f(\hat{y}_{t+\tau}, t+\tau))$.
Since with probability $\geq 1-\alpha \frac{\tau}{T}$ we have $g^\star(Y_{t+\tau}) \succeq \mathrm{poly}(d)$, plugging in the error bounds $\|f^\star(Y_{t+\tau},t) - f(Y_{t+\tau},t)\| \leq \epsilon$ and $\|g^\star(Y_{t+\tau},t) - g(Y_{t+\tau},t)\|_F \leq \epsilon$ and the $c$-Lipschitz bounds on $f^\star$ and $g^\star$, where $c = \mathrm{poly}(d)$, (Assumption \ref{assumption_Lipschitz}), we get that $D_{\mathrm{KL}}(\nu_1, \mathcal{L}_{\hat{y}_{t+\tau+\hat{\Delta}}}) \leq O(\epsilon^2 c^2)$.
 Thus, by Pinsker's inequality, we have
\begin{eqnarray}\label{eq_r7}
 \|\mathcal{L}_{Y_{t+\tau+\hat{\Delta}}} - \mathcal{L}_{\hat{y}_{t+\tau+\hat{\Delta}}}\|_{\mathrm{TV}} - \|\mathcal{L}_{Y_{t}} - \mathcal{L}_{\hat{y}_{t}}\|_{\mathrm{TV}} 
&\leq & \sqrt{D_{\mathrm{KL}} (\nu_1 \, \| \, p_{t+\tau+\hat{\Delta} | t+\tau}(\, \cdot \,| Y_{t+\tau} ))} \nonumber\\
& 
 \qquad &  \qquad + \sqrt{D_{\mathrm{KL}}(\nu_1 \| \mathcal{L}_{\hat{y}_{t+\tau+\hat{\Delta}} |\hat{y}_{t} })} \leq O(\epsilon c).
\end{eqnarray}

\smallskip
\noindent
\textbf{Step 4: Bounding the accuracy.}
Recall that $q_t$ is the distribution of the forward diffusion $Z_t$ in Euclidean space after time $t$, which is an Ornstein-Uhlenbeck process.
Standard mixing bounds for the Ornstein-Uhlenbeck process imply that, $\|q_t - N(0,I_d)\|_{TV} \leq O(Ce^{-t})$ for all $t>0$ (see e.g. \cite{bakry2014analysis}), where $C \leq \mathrm{poly}(d)$ is the diameter of the support of $\psi(\pi)$. 
Thus, it is sufficient to choose $T = \log(\frac{C}{\epsilon})$ to ensure $\|\mathcal{L}_{Y_{T}} - \pi \|_{\mathrm{TV}} = \|q_T - N(0,I_d) \|_{\mathrm{TV}} \leq O(\epsilon)$.

As \eqref{eq_r7} holds for all $t \in \tau \mathbb{N}$, the distribution $\nu = \mathcal{L}_{\hat{y}_{T}}$ of our sampling algorithm's output satisfies, since $\tau = \frac{1}{c}$,
\[
\|\pi - \nu\|_{\mathrm{TV}}  
= \|\mathcal{L}_{Y_{T}} - \pi \|_{\mathrm{TV}} 
+ \|\mathcal{L}_{Y_{T}} - \nu\|_{\mathrm{TV}} 
\leq O\left(\epsilon + \epsilon c \frac{T}{\tau} \right) 
= O\left(\epsilon c^2 \log\left(\frac{d C}{\epsilon} \right) \right) 
= \tilde{O}(\epsilon \times \mathrm{poly}(d)).
\]
\noindent
 \textbf{Step 5:  Bounding the runtime.}
 Since our accuracy bound requires $T = \log(\nicefrac{d C}{\epsilon})$, and requires a time-step size of $\Delta = cd \leq \frac{1}{\mathrm{poly}(d)}$, the number of iterations is bounded by 
$\frac{T}{\Delta} = cd T \leq O\left(\mathrm{poly}(d)   \times \log\left(\nicefrac{d C}{\epsilon}\right)\right).$

\medskip
\noindent
{\bf Step 6: Extension of sampling guarantees to special orthogonal group.}
Similar techniques can be used in the case of the special orthogonal group.
   However, in the case of the special orthogonal group we encounter the additional challenge that, with high probability $\Omega(1)$, the gaps between neighboring eigenvalues $\gamma_{i+1}(t) - \gamma_i(t)$ may become exponentially small in $d$, over very short time intervals of length $O(\frac{1}{e^d})$.
   Over these intervals our diffusion moves at $\exp(d)$ velocity. 
  Despite this, we show that a $1/\mathrm{poly}(d, \frac{1}{\delta})$ step size is sufficient to simulate a {\em random} solution to its SDE with {\em probability distribution} $\delta$-close to the correct distribution.
 Specifically, from the matrix calculus formula for $\varphi$ one can show that the SDE for the eigenvectors of the forward diffusion satisfy (these evolution equations, discovered by Dyson, are referred to as Dyson Brownian motion \cite{dyson1962brownian})
\begin{eqnarray}
    \mathrm{d} \gamma_i(t) &=& \mathrm{d}B_{ii}(t) + \sum_{j \neq i} \frac{\mathrm{d}t}{\gamma_i(t) - \gamma_j(t)}, 
    \label{eq_dyson_eigenvalue_overview} \\
    \mathrm{d} u_i(t) &=& \sum_{j \neq i} \frac{\mathrm{d}B_{ij}(t)}{\gamma_i(t)- \gamma_j(t)} u_j(t) 
    - \frac{1}{2} \sum_{j \neq i} \frac{\mathrm{d}t}{(\gamma_i(t) - \gamma_j(t))^2} u_i(t), 
    \quad \forall i \in [n].
    \label{eq_dyson_eigenvector_overview}
\end{eqnarray}
 Roughly speaking, this implies that only the interactions in \eqref{eq_dyson_eigenvector_overview} between eigenvectors with neighboring eigenvalues which fall below $O(\frac{1}{n^{10}})$ are significant, while interactions between eigenvectors with larger eigenvalue gaps are negligible over these short time interval.
 Thus, one can analyze the evolution of the eigenvectors over these short time intervals as a collection of separable two-body problems consisting of interactions between pair(s) of eigenvectors with closed-form transition kernel given by the invariant measure on $\mathrm{SO}(2)$.
  For a detailed sketch of how one can extend our proof to the case of the special orthogonal group, see Section \ref{appendix_SO_sketch}.

\section{Empirical results}\label{sec_empirical}

 We provide proof-of-concept simulations to compare the quality and efficiency of our algorithms to key prior works.

\medskip
\noindent
 {\bf Datasets.}
We evaluate the quality of samples generated by our model on synthetic datasets from the torus \(\mathbb{T}_d\), the special orthogonal group \(\mathrm{SO}(n)\), and the unitary group \(\mathrm{U}(n)\). The torus provides a simple, zero-curvature geometry for initial validation, while \(\mathrm{SO}(n)\) and \(\mathrm{U}(n)\) test the model on more complex geometries. 
Datasets include unimodal wrapped Gaussians on \(\mathbb{T}_d\), multimodal Gaussian mixtures on \(\mathrm{SO}(n)\), and time-evolution operators of a quantum oscillator with random potentials on \(\mathrm{U}(n)\), for varying dimensions \(d = n^2\).
We also separately analyze per-iteration runtime to study scaling across dimensions, requiring only a single training step and limited computational resources.

For the torus $\mathbb{T}_d$, following several works \cite{de2022riemannian, zhu2024trivialized}, we train diffusion models 
on data sampled from wrapped Gaussians on tori of different dimensions $d \in\{2, 10, 50, 100, 1000 \}$,
 with mean $0$ and covariance $0.2 I_d$ (see Appendix \ref{appendix_datasets} for definition of wrapped Gaussian).
 For $\mathrm{U}(n)$,
 following~\cite{zhu2024trivialized}, we use a dataset on $\mathrm{U}(n)$ of 
  unitary matrices representing time-evolution operators $e^{itH}$ of a quantum 
   oscillator.
    In our simulations, we 
 consider a wider range of $n \in \{3,5,9, 12, 15\}$ (corresponding to manifold dimensions $d = \frac{n(n-1)}{2} \in \{3, 10, 36, 66, 105\}$) when evaluating sample quality, and $n \in \{ 3, 5, 10, 30, 50 \}$ ($d \in \{3, 45, 435, 1225\}$) when evaluating runtime.
   Here $t$ is time, $H = \frac{\hbar}{2m}\Delta - V$ is a Hamiltonian, and $\Delta$ is the Laplacian. 
$V$ is a random potential $V(x) = \frac{\omega^2}{2}\| x - x_0 \|^2$ with angular momentum $\omega$ sampled uniformly on $[2, 3]$ and $x_0\sim \mathcal{N}(0, 1)$. 
As $\Delta, V$ are infinite-dimensional operators, matrices in $\mathrm{U}(n)$ are obtained by retaining the  (discretized) top-$n$ eigenvectors of $\Delta, V$.
For $\mathrm{SO}(n)$, following \cite{zhu2024trivialized}, we use datasets sampled from a mixture of a small number $k$ of wrapped Gaussians on $\mathrm{SO}(n)$. We set $k=2$ and use the same values of $n$ as on $\mathrm{U}(n)$.

\medskip
\noindent
\textbf{Algorithm.}  We use Algorithm~\ref{alg_training_manifold}  to train our model, and Algorithm~\ref{alg_sampling_manifold}  to generate samples. Each algorithm takes as input a projection map \(\varphi\) and restricted inverse \(\psi\), with choices for \(\mathbb{T}_d\), \(\mathrm{SO}(n)\), and \(\mathrm{U}(n)\) detailed in Section~\ref{sec_results}. 
For \(\mathrm{SO}(n)\) and \(\mathrm{U}(n)\), we use the projection map \(\hat{\varphi}\) defined in Section~\ref{sec_results}, as it suffices to generate high-quality samples in our simulations.
In both algorithms, the drift function \(\hat{f}(\cdot, \cdot)\) for the reverse diffusion is parameterized by a neural network. Details on the network architecture, training iterations, batch size, and hardware are provided in Appendix~\ref{sec_training_details}.\footnote {Our code can be found at \href{https://github.com/mangoubi/Efficient-Diffusion-Models-for-Symmetric-Manifolds}{github.com/mangoubi/Efficient-Diffusion-Models-for-Symmetric-Manifolds}}

 The function $\hat{g}(\cdot, \cdot)$, which models the covariance in our model's reverse diffusion SDE, vanishes on $\mathbb{T}_d$.
On $\mathrm{SO}(n)$, $\mathrm{U}(n)$, $\hat{g}$ is a $n^2 \times n^2$ matrix.
 This matrix has a special structure (see Section \ref{sec_algorithm}),  
 allowing it to be parameterized by $d = n^2$ numbers.
 We may thus parametrize $\hat{g}(x, t)$ by a neural net with inputs $x$ of dimension $d$,  $t$ of dimension $1$, and output dimension $d$.
  Network architecture is the same as for $\hat{f}(\cdot, \cdot)$.

\medskip
\noindent
{\bf Benchmarks.}
We compare samples generated by our model to those from RSGM \cite{de2022riemannian}, TDM \cite{zhu2024trivialized}, and a ``vanilla'' Euclidean diffusion model. RSGM and TDM are included as they have demonstrated improved sample quality and runtime over previous manifold generative models, such as Moser Flow \cite{rozen2021moser} on \(\mathbb{T}_d\) and \(\mathrm{SO}(3)\). TDM also outperforms RSGM and Flow Matching \cite{chen2024flow} on higher-dimensional torus datasets and shows better sample quality on \(\mathrm{SO}(n)\) and \(\mathrm{U}(n)\), where RSGM and RDM \cite{huang2022riemannian} lack experiments for \(n > 3\).
All models are trained with the same iterations, batch size, and architecture (see Appendix~\ref{sec_training_details}). 

For the Euclidean diffusion model, samples are constrained to the manifold \(\mathcal{M}\) by preprocessing via \(\tilde{\varphi}\) and postprocessing via \(\tilde{\psi}\). For \(\mathcal{M} = \mathbb{T}_d\), \(\tilde{\varphi}(x)[i] = x[i] \mod 2\pi\), and \(\tilde{\psi}\) is its inverse on \([0, 2\pi)^d\). For \(\mathrm{SO}(n)\) and \(\mathrm{U}(n)\), \(\tilde{\varphi}(X)\) computes the \(U\) from the singular value decomposition \(X = U\Sigma V^\ast\), and \(\tilde{\psi}\) is the usual embedding.

RSGM and TDM are trained using their divergence-based ISM objective, which is prohibitively slow for large dimensions. To fully train these models and evaluate their sample quality, we follow their implementation and use a stochastic estimator for the ISM divergence. We do not compare to TDM on the torus, as their specialized heat kernel objective does not generalize beyond the torus or Euclidean space. 

We do not compare to SCRD \cite{lou2023scaling}, as their implementation is limited to efficient heat kernel expansion for \(\mathrm{SO}(n)\) and \(\mathrm{U}(n)\) with \(n = 3\). For \(n > 3\), the cost of their expansion grows exponentially with \(n\), making it infeasible for higher dimensions.

\medskip
\noindent
\textbf{Metrics.} For the torus, as in \cite{de2022riemannian, zhu2024trivialized}), we evaluate the quality of generated samples by computing their log-likelihood.
  For $\mathrm{SO}(n)$ and $\mathrm{U}(n)$, we use the Classifier Two-Sample Test (C2ST) metric \cite{lopez-paz2017revisiting}.
    The C2ST metric was previously used in e.g. \cite{leach2022denoising} to evaluate sample quality of diffusion models on $\mathrm{SO}(n)$ for $n=3$.
  It measures the ability of a classifier neural network to differentiate between generated samples and samples in a test dataset. 
   This metric allows one to evaluate sample quality in settings where computing the likelihood may be intractable, as may be the case for diffusion models on $\mathrm{SO}(n)$ and $\mathrm{U}(n)$ for larger $n$.  
   We use the C2ST metric on $\mathrm{SO}(n)$ and $\mathrm{U}(n)$ instead of log-likelihood, as we found that computing log-likelihood for $n>3$ poses additional computational challenges, while C2ST can be computed efficiently. 
 See Appendix \ref{appendix_evaluation_metrics} for definitions of log-likelihood and C2ST, and additional details.

\medskip
\noindent
\textbf{Results for generated sample quality.} 
 For the torus, we compare our model, a Euclidean diffusion model, and RSGM on a dataset sampled from a wrapped Gaussian on the $d$-dimensional torus for different values of $d$.
 Our model has the lowest negative log-likelihood (NLL) for $d \geq 10$, and its NLL degrades with the dimension at a much slower rate than the Euclidean model and RSGM  (Table \ref{torus_nll_table}; lower NLL indicates better-quality sample generation).

For $\mathrm{U}(n)$, we train our model, a Euclidean diffusion model, RSGM, and TDM on a dataset on $\mathrm{U}(n)$ comprised of discretized quantum oscillator time-evolution operators, for $n \in \{3, 5,  9, 12, 15\}$. 
 For $n \geq 9$, our model achieves the lowest C2ST score; a lower C2ST score indicates higher-quality sample generation
 (Figure \ref{un_table}, top).
  Visually, we observe our model's generated samples more closely resemble the target distribution than benchmark models' for $n \geq 9$ (see Figure \ref{un_table}, bottom, for $n=15$,  Appendix \ref{appendix_visual_results} for other $n$).
Improvements on $\mathrm{SO}(n)$ were similar to those on $\mathrm{U}(n)$  (see Appendix \ref{appendix_visual_results}).

\begin{table}[H]
\begin{center}
\caption{Negative log-likelihood (NLL) 
when training on a wrapped Gaussian dataset on the torus of different dimensions $d$. 
    Lower NLL indicates better-quality sample generation.
    Our model's NLL increases more slowly with $d$, and is lower for $d \geq 10$, than Euclidean and RSGM models.
   }
      \label{torus_nll_table}
\resizebox{0.6\textwidth}{!}{%
\begin{tabular}%
{@{}l@{\hspace{6pt}}lllll@{}}
\toprule
\textbf{Method} & $d = 2$              & $d = 10$      & $d = 50$      & $d = 100$     & $d = 1000$    \\ \midrule
Euclidean       & $0.61\!\pm\!.02$ & $0.61\!\pm\!.02$ & $0.66\!\pm\!.04$ & $8.18\!\pm\!.42$ & $8.7\!\pm\!.37$  \\
RSGM            & $\mathbf{0.42\!\pm\!.03}$  & $0.49\!\pm\!.05$ & $0.62\!\pm\!.06$ & $1.45\!\pm\!.21$ & $2.51\!\pm\!.17$ \\
Ours            & $0.49\!\pm\!.04$  & $\mathbf{0.47\!\pm\!.03}$ & $\mathbf{0.50\!\pm\!.05}$ & $\mathbf{0.52\!\pm\!.09}$ & $\mathbf{0.97\!\pm\!.19}$ \\ \bottomrule
\end{tabular}
} 
\end{center}

\end{table}

\paragraph{Results for runtime.} We evaluate the per-iteration training runtime on $\mathrm{U}(n)$ on a wider range of  $n = 3, 5, 10, 30, 50$ (corresponding to manifold dimensions of $d = \frac{n(n-1)}{2} \in \{3, 10, 45, 435, 1225\}$).
  %
  Our model's per-iteration runtime remains within a factor of $3$ of the Euclidean model's for all $n$. 
    Per-iteration runtimes of TDM and RSGM, with ISM objective, 
    increase more rapidly with dimension and are, respectively, 45  and 57 times greater than the Euclidean model's for $n=50$ (Table \ref{runtime_Un}).
     Similar runtime improvements were observed on $\mathrm{SO}(n)$ (Table \ref{runtime_SOn} in Appendix \ref{sec_training_details}).

\paragraph {Summary.}
 We find that, as predicted by our theoretical training runtime bounds in Table \ref{table_training}, the per-iteration training runtime of our model is significantly faster, and grows more slowly with dimension, than previous manifold diffusion models on $\mathrm{U}(n)$ (similar improvements were observed for $\mathrm{SO}(n)$).
 Our algorithm's runtime remains within a small constant factor of the per-iteration runtime of the Euclidean diffusion model, at least for $n \leq 50$ (corresponding to a manifold dimension of $d \leq 1225$), nearly closing the gap with the per-iteration runtime of the Euclidean model.

Moreover, we find that (except in very low dimensions) our model is capable of improving on the quality of samples generated by previous diffusion models, when trained on different synthetic datasets on the torus, $\mathrm{SO}(n)$ and $\mathrm{U}(n)$.  
 The magnitude of the improvement increases with dimension. 

\begin{figure}[H]
\centering
\resizebox{0.6\textwidth}{!}{%
\begin{tabular}{@{}l@{\hspace{6pt}}lllll@{}}
\hline
\textbf{Method} & $n = 3$         & $n = 5$        & $n = 9$        & $n = 12$       & $n = 15$       \\ \midrule
Euclidean       & $\mathbf{.69\pm\!.03}$  & $\mathbf{.75\pm\!.04}$ & $.87\pm\!.04$ & $\, \, \, .97\pm\!.02$ & $1.00\pm\!.01$ \\
RSGM            & $.79\pm\!.04$  & $.92\pm\!.04$ & $.97\pm\!.03$ & $1.00\pm\!.02$ & $1.00\pm\!.01$ \\
TDM             & $.73\pm\!.04$  & $.91\pm\!.02$ & $.99\pm\!.02$ & $1.00\pm\!.01$ & $1.00\pm\!.01$ \\
Ours            & $.75\pm\!.05$ & $.80\pm\!.04$ & $\mathbf{.84\pm\!.04}$ & $\, \, \, \mathbf{.88\pm\!.05}$ & $\, \, \,  \mathbf{.90\pm\!.04}$ \\ \bottomrule
\end{tabular}%
}
  \includegraphics[width=\linewidth]{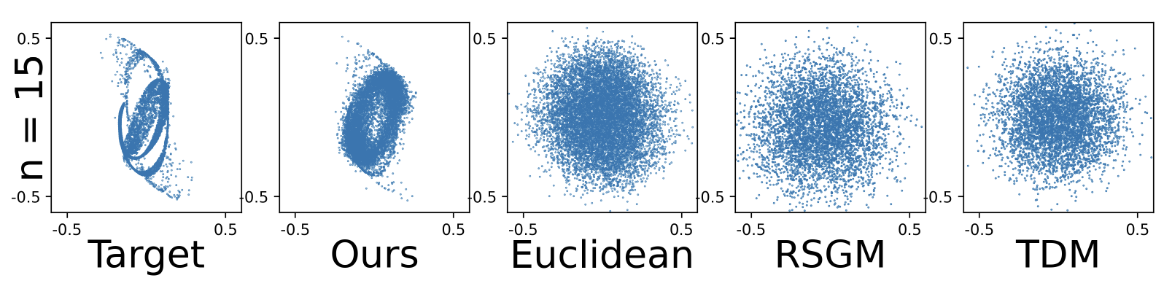}
  \caption{C2ST scores when training on datasets of quantum evolution operators on $\mathrm{U}(n)$ (top).
 Lower scores indicate better-quality generated samples  (range is $[0.5,1]$).\
For $n \geq 9$, our model has the best C2ST scores.
Generated samples are plotted for $n=15$ (bottom); axes are two matrix entries.
}
\label{un_table}
\end{figure}

  \begin{table}[]
\centering
\caption{Per-iteration training runtime in seconds on $\mathrm{U}(n)$. 
    The fastest manifold-constrained diffusion model is in bold; the Euclidean model is in gray for comparison.
      Our model's runtime remains within a factor of $3$ of the Euclidean model's for all $n$. Runtimes of TDM, RSGM increase more rapidly with dimension $d = \Theta(n^2)$ and are, respectively, 45 and 57 times greater than the Euclidean model's for $n=50$ ($d=1225$).} 
       \label{runtime_Un}
\resizebox{0.6\textwidth}{!}{%
\begin{tabular}{@{}l@{\hspace{6pt}}llllll@{}}
\hline
\textbf{Method} & $n = 3$        & $n = 5$       & $n = 10$       & $n = 30$ & $n = 50$  \\ \hline
\color{gray} Euclidean       & $0.19\pm\!.01$ & $0.19\pm\!.01$ & $0.20 \pm\!.01$ & $0.20\pm\!.01$ & \color{gray} $0.21\pm\!.01$ \color{black} \color{black}  \\
\hdashline
RSGM            & $1.03\pm\!.02$ & $1.22\pm\!.08$ & $1.51\pm\!.03$ & $3.98\pm\!.19$ & $11.55\pm\!.31$\\
TDM       & $0.91\pm\!.08$ &$1.07\pm\!.06$  & $2.46\pm\!.09$ &$3.77\pm\!.17$  & $9.43\pm\!.23$\\
\textbf{Ours}            & $\mathbf{0.36\pm\!.00}$ & $\mathbf{0.36\pm\!.00}$ & $\mathbf{0.36\pm\!.00}$ & $\mathbf{0.46\pm\!.01}$ & $\mathbf{0.60\pm\!.01}$\\ \hline
\end{tabular}%
}
\end{table}

\section{Full proof of Theorem \ref{thm_sampling_Manifold_best}}\label{appendix_proof_overview}

In the following, we denote by $\rho(x,y)$ the geodesic distance between $x,y \in \mathcal{M}$, and by $\Gamma_{x \rightarrow y}(v)$ the parallel transport of a vector $v \in \mathcal{T}_x$ from $x$ to $y$.
For convenience, we denote $\varphi_i(\cdot) := \varphi(\cdot)[i]$.
Recall that we have assumed that $\psi(\mathcal{M})$ is contained in a ball of radius $C = \mathrm{poly}(d)$.
We will prove our results under the more general assumption (Assumption \ref{assumption_Concentration}($\psi, \pi, C$)), which is satisfied whenever $\psi(\mathcal{M}) \leq C$.

\medskip
\noindent
\begin{assumption}[\bf Bounded Support ($\psi, \pi, C$)]\label{assumption_Concentration}
The pushforward of $\psi(\pi)$ of $\pi$ with respect to the map $\psi: \mathcal{M} \rightarrow \mathbb{R}^d$ has support on a ball of radius $C$ centered at $0$.
\end{assumption}

\subsection{Correctness of the training objective functions}  \label{sec_lemma_training_proof}

\begin{lemma}\label{lemma_training}
 $f^\star$ and $g^\star$ are solutions to the following optimization problems:
 \begin{align*}
  &\min_{f \in \mathcal{C}(\mathbb{R}^d, \mathbb{R}^d)}  \mathbb{E}_{t \sim \mathrm{Unif}([0,1])} \mathbb{E}_{b \sim \pi} \bigg[ \bigg\|  (\nabla \varphi(Z_{T-t}))^\top \frac{Z_{T-t}-\psi(b)e^{-\frac{1}{2}(T-t)}}{e^{-(T-t)}-1}\nonumber\\
&\qquad \qquad \qquad \qquad \qquad \qquad \qquad + \frac{1}{2}\mathrm{tr}(\nabla^2 \varphi(Z_{T-t}))  -  f(\varphi(Z_{T-t}), t) \bigg\|^2 \bigg | Z_0 = \psi(b)\bigg],\\
 &\min_{g \in \mathcal{C}(\mathbb{R}^d, \mathbb{R}^{d\times d})}  \mathbb{E}_{t \sim \mathrm{Unif}([0,1])} \mathbb{E}_{b \sim \pi} \left[ \left\| ( J_{\varphi}(Z_{T-t}))^\top J_{\varphi}(Z_{T-t}) - (g(\varphi(Z_{T-t}),t))^2 \right\|_F^2 \bigg | Z_0 = \psi(b)\right],
 \end{align*}
 where $J_{\varphi}$ denotes the Jacobian of $\varphi$.
\end{lemma}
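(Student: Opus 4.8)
The plan is to read both minimization problems as $L^2$ least-squares (denoising) regressions, identify their minimizers as conditional expectations, and match those conditional expectations to the coefficients that \emph{define} the reverse SDE~\eqref{eq_r1}. The one elementary fact used throughout is that for a square-integrable target $\mathcal T$ and a sub-$\sigma$-algebra $\mathcal G$, the functional $h\mapsto\mathbb E\|\mathcal T-h\|^2$ is minimized over $\mathcal G$-measurable $h$ by $h=\mathbb E[\mathcal T\mid\mathcal G]$. In the $f$-problem the outer randomness is $t\sim\mathrm{Unif}[0,1]$, $b\sim\pi$, and the Ornstein--Uhlenbeck bridge carrying $Z_0=\psi(b)$ to $Z_{T-t}$; the competitors are functions of $(\varphi(Z_{T-t}),t)$; and the target is $\mathcal T=(\nabla\varphi(Z_{T-t}))^\top\frac{Z_{T-t}-\psi(b)e^{-(T-t)/2}}{e^{-(T-t)}-1}+\frac12\mathrm{tr}(\nabla^2\varphi(Z_{T-t}))$. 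Hence the minimizer is $f(y,t)=\mathbb E[\mathcal T\mid\varphi(Z_{T-t})=y]$, and the lemma reduces to showing this equals $f^\star(y,t)$ (and similarly for $g$); since $f^\star$ is continuous (being a smooth conditional expectation) the continuity constraint $f\in\mathcal C(\mathbb R^d,\mathbb R^d)$ is not binding.

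First I would make the Ornstein--Uhlenbeck kernel explicit: $q_{s\mid0}(\cdot\mid z_0)=N(z_0e^{-s/2},(1-e^{-s})I_d)$, so $\nabla_z\log q_{s\mid0}(z\mid z_0)=\frac{z-z_0e^{-s/2}}{e^{-s}-1}$, i.e.\ the fraction in $\mathcal T$ is exactly the denoising score $\nabla\log q_{T-t\mid0}(Z_{T-t}\mid\psi(b))$. Conditioning first on $Z_{T-t}$ alone and using the classical denoising identity $\mathbb E[\nabla\log q_{s\mid0}(z\mid Z_0)\mid Z_s=z]=\nabla\log q_s(z)$ (obtained by differentiating $q_s(z)=\int q_{s\mid0}(z\mid z_0)q_0(z_0)\,dz_0$ under the integral, with $Z_0\sim\psi(\pi)$), then conditioning on $\varphi(Z_{T-t})=y$, the minimizer simplifies to $f(y,t)=\mathbb E[(\nabla\varphi(Z_{T-t}))^\top\nabla\log q_{T-t}(Z_{T-t})+\frac12\mathrm{tr}(\nabla^2\varphi(Z_{T-t}))\mid\varphi(Z_{T-t})=y]$.

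Then I would identify this with $f^\star$ via~\eqref{eq_f1}. Writing $Y_t=X_{T-t}=\varphi(Z_{T-t})=\varphi(H_t)$ with $H_t$ the time-reversed Ornstein--Uhlenbeck process (identity diffusion coefficient, drift $\mu_H$ affine in $H_t$ plus the marginal score), It\^o's Lemma (Lemma~\ref{Lemma_Ito}) applied componentwise to $\varphi$ along $H_t$ collapses the quadratic-variation term of the $i$-th coordinate to $\frac12\mathrm{tr}(\nabla^2\varphi(H_t)[i])\,\mathrm dt$, since $\mathrm dH_t\,\mathrm dH_t^\top=I_d\,\mathrm dt$; this yields~\eqref{eq_f1}, and passing to a mimicking diffusion with the same time-marginals (a manifold-valued Markovian projection) gives~\eqref{eq_r1} with $f^\star(y,t)=\mathbb E[(\nabla\varphi(H_t))^\top\mu_H(H_t,t)+\frac12\mathrm{tr}(\nabla^2\varphi(H_t))\mid\varphi(H_t)=y]$ and $g^\star(y,t)^2$ equal to the conditional expectation of the instantaneous covariance of $\varphi(H_t)$, which by It\^o is the Gram matrix of the rows of $J_\varphi(H_t)$ appearing in the $g$-objective. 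Comparing with the simplified minimizer above, the two agree provided the affine (non-score) part of $\mu_H$ drops out of the conditional expectation; this is where the geometry enters, since on the sphere, $\mathrm{SO}(n)$ and $\mathrm U(n)$ the map $\varphi$ is invariant under positive rescalings of its argument, so Euler's identity forces $(\nabla\varphi(z))^\top z=0$ pointwise, while on the torus $\nabla^2\varphi\equiv0$ together with the covering-map structure plays the same role (one also verifies the constant multiplying the score matches~\eqref{eq_f1}). The $g$-problem then follows at once: its target $J_\varphi(Z_{T-t})^\top J_\varphi(Z_{T-t})$ is symmetric positive semidefinite, so the constrained least-squares minimizer over $g^2$ is the positive-semidefinite matrix $\mathbb E[J_\varphi^\top J_\varphi\mid\varphi(Z_{T-t})=y]$, which equals $g^\star(y,t)^2$; taking the positive-semidefinite square root gives $g^\star$.

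\textbf{The main obstacle} is the manifold-valued Markovian-projection step: the instantaneous drift and covariance of $\varphi(H_t)$ genuinely depend on $H_t$, not only on $\varphi(H_t)$, so one must justify that $\varphi(H_t)$ is mimicked by an \emph{autonomous} $\mathcal M$-valued SDE whose coefficients are exactly those conditional expectations, and verify the integrability and regularity hypotheses this requires, together with those needed for differentiating under the integral and for applying It\^o's Lemma to $\varphi$ (which is only almost-everywhere twice differentiable). Checking that the affine-drift contribution vanishes uniformly over all four manifolds — and tracking constants so that the stated objective matches~\eqref{eq_f1} — is the other point requiring care; the least-squares and denoising-identity ingredients are standard.
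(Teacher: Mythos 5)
Your core strategy is the same as the paper's: read both problems as $L^2$ denoising regressions whose minimizers are conditional expectations on $\varphi(Z_{T-t})$, use the denoising identity $\mathbb{E}[\nabla\log q_{s\mid 0}(z\mid Z_0)\mid Z_s=z]=\nabla\log q_s(z)$, and then identify those conditional expectations with $f^\star,g^\star$ by applying It\^o's Lemma to $\varphi(H_t)$ and separating drift and diffusion contributions. The paper carries out exactly these steps (it writes \eqref{eq_reverse_diffusion3} for the denoising identity, applies It\^o to obtain \eqref{eq_Ito2} and \eqref{eq_Ito7d}--\eqref{eq_diffusion_term2}, and reads off the two objectives). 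Your treatment of the $g$-problem via PSD Gram matrices also matches \eqref{eq_diffusion_term3}.

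Where you depart from the paper is the Euler-identity step: you observe that projecting the Euclidean reverse drift $\tfrac12 H_t + \nabla\log q_{T-t}(H_t)$ through $\nabla\varphi$ produces an extra affine term $(\nabla\varphi(H_t))^\top\tfrac12 H_t$ not present in the stated objective, and you argue it vanishes because $\varphi$ is $0$-homogeneous on the sphere, $\mathrm{SO}(n)$ and $\mathrm{U}(n)$, so $(\nabla\varphi(z))^\top z=0$ pointwise. The paper never performs this reconciliation; its intermediate formulas \eqref{eq_opt_manifold3}--\eqref{eq_opt_manifold4} actually retain the affine term (with shifting coefficients), and the stated Lemma drops it silently, so your observation is probing a genuine loose end. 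The Euler argument is correct and is a nice clean fix for the scale-invariant $\varphi$'s, but it does \emph{not} apply to the torus: there $\nabla\varphi=I$, so $(\nabla\varphi(z))^\top z = z\neq 0$, and $\mathbb{E}[\tfrac12 H_t \mid \varphi(H_t)=y]$ is generically nonzero (it averages over the lattice representatives $y+2\pi\mathbb{Z}^d$). The ``covering-map structure plays the same role'' sentence is a hand-wave, not an argument, and it is the one place where your proposal has a real gap relative to the lemma's full scope.

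Two further remarks. First, you flag the ``autonomous $\mathcal{M}$-valued SDE'' (Markovian projection) step as the main obstacle; the paper also takes this for granted — it posits $\mathrm{d}Y_t = f^\star\,\mathrm{d}t + g^\star\,\mathrm{d}B_t$ and identifies the coefficients via conditioning, without verifying that $\varphi(H_t)$ is itself Markov or that the mimicking diffusion has the same law. So you are not missing anything the paper supplies; you are just being more honest about an assumption both arguments make. Second, the constant-tracking you promise to do is genuinely necessary: the paper's \eqref{eq_reverse_diffusion3b_2} carries a factor $2$ on the score that an application of the standard time-reversal formula to $\mathrm{d}Z_t=-\tfrac12 Z_t\mathrm{d}t+\mathrm{d}B_t$ does not produce, and neither \eqref{eq_opt_manifold3} nor \eqref{eq_opt_manifold4} matches the coefficients in the lemma statement. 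Your instinct to make the OU kernel explicit and match $\frac{z-\psi(b)e^{-s/2}}{e^{-s}-1}=\nabla\log q_{s\mid 0}(z\mid\psi(b))$ is the right way to settle this, and it is the cleaner route compared to the paper's intermediate expressions.
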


\begin{proof}

{\bf Step 1:  Obtaining an expression for the reverse diffusion SDE in $\mathbb{R}^d$.}
We cannot in general directly apply \eqref{eq_reverse_diffusion2} to obtain a tractable expression for the SDE of the reverse diffusion $Y_t$ in $\mathcal{M}$, since we do not have a tractable formula for the transition kernel $p_t$ of the forward diffusion $X_t$ on $\mathcal{M}$.
Instead, we will first obtain an SDE for the reverse diffusion of $Z_t$ in $\mathbb{R}^d$, and then ``project'' this SDE onto $\mathcal{M}$.
Let $H_t := Z_{T-t}$ denote the time-reversed diffusion of $Z_t$.
$H_t$ is a diffusion in $\mathbb{R}^d$.
From \eqref{eq_reverse_diffusion2}, we have that the SDE for the reverse diffusion $H_t$ on $\mathbb{R}^d$ is given by the following formula:
\begin{equation}\label{eq_reverse_diffusion3b_2}
\mathrm{d}H_t =\left(\frac{1}{2}H_t + 2 \nabla \log q_{T-t}(H_t) \right)\mathrm{d}t + \mathrm{d} W_t,
\end{equation}
 where $W_t$ is a standard Brownian motion on $\mathbb{R}^d$.
Equation \eqref{eq_reverse_diffusion3b_2} can be re-written as
\begin{equation}\label{eq_reverse_diffusion3}
\mathrm{d}H_t =  \left(\frac{1}{2}H_t + 2 \mathbb{E}_{b \sim q_{0|t}(\cdot|H_t)}[\nabla \log q_{T-t|0}(H_t|b) ]\right)\mathrm{d}t + \mathrm{d} W_t.
\end{equation}
The r.h.s. of \eqref{eq_reverse_diffusion3} is tractable since we have a tractable expression for the transition kernel $q_{T-t|0}$ (it is just a time re-scaling of a Gaussian kernel, the transition kernel of Brownian motion).

\smallskip
\noindent
{\bf Step 2:  Obtaining an expression for the reverse diffusion SDE in $\mathcal{M}$.}
Note that there exists a coupling between $Z_t$ and $H_t$ such that $H_t = Z_{T-t}$ and that $Y_t = X_{T-t}$ for all $t\in [0,T]$.
Thus, under this choice of coupling, we have that $Y_t = X_{T-t} = \varphi(Z_{T-t}) = \varphi(H_{t})$ for all $t\in [0,T]$.
In the special case when there is only one datapoint $x_0$, the SDE for the reverse diffusion $Y_t$ on $\mathcal{M}$  can be obtained by applying It\^o's lemma (Lemma \ref{Lemma_Ito}) to $Y_t = \varphi(H_t)$:
\begin{equation}\label{eq_Ito}
\mathrm{d}Y_t[i] = \nabla \varphi_i(H_t)^\top \mathrm{d}H_t + \frac{1}{2}(\mathrm{d} H_t)^\top (\nabla^2 \varphi_i(H_t)) \mathrm{d}H_t \qquad \qquad \forall i \in [d].
\end{equation}
In the following, to simplify notation, we drop the ``$i$'' index from the notation $\varphi_i$ and $\mathrm{d}Y_t[i]$.
Unfortunately, the r.h.s. of \eqref{eq_Ito} is not a (deterministic) function  of $Y_t = \varphi(H_t)$, since $\varphi$ is not an invertible map.
 To solve this problem, we can take the conditional expectation of \eqref{eq_Ito} with respect to $Y_t=\varphi(H_t)$:
 
 \begin{equation}\label{eq_Ito2}
\mathrm{d}Y_t = E[\mathrm{d}Y_t | Y_t] =  E[\mathrm{d}Y_t | \varphi(H_t)] = E\left[\nabla \varphi(H_t)^\top \mathrm{d}H_t + \frac{1}{2}(\mathrm{d} H_t)^\top (\nabla^2 \varphi(H_t)) \mathrm{d}H_t \bigg | \varphi(H_t)\right].
\end{equation}
  The drift term on the r.h.s. of \eqref{eq_Ito2}  is a deterministic function of $Y_t$.
 Denote this function by $f^\star:  \mathcal{M} \times [0,T] \rightarrow \mathcal{T} \mathcal{M}$ for any input $x \in \mathcal{M}$ and output in the tangent space $\mathcal{T}_x \mathcal{M}$ of $\mathcal{M}$ at $x$.

Moreover, by \eqref{eq_reverse_diffusion2}, the covariance term on the r.h.s. of \eqref{eq_Ito2} must be the same as the covariance term for the forward diffusion $Y_t$ on $\mathcal{M}$.
This covariance term can be obtained from the covariance term $\mathrm{d}W_t$ on $\mathbb{R}^d$, via It\^o's lemma, implying that the covariance term is  $\mathbb{E}[\nabla \varphi(H_t)[i]^\top\mathrm{d}W_t |\varphi(H_t)]$, $i \in [d]$.
 Using the notation for the Jacobian, this covariance term can be written more concisely as 
  $\mathbb{E}[J_\varphi(H_t)^\top\mathrm{d}W_t |\varphi(H_t)]$.
 Thus, the diffusion term is also a deterministic function $g^\star$ of $Y_t = \varphi(H_t)$, where  $g^\star(Y_t)$ is a symmetric $d\times d$ matrix,
\begin{equation}\label{eq_diffusion_term1}
E[J_{\varphi}(H_t)\mathrm{d}W_t|\varphi(H_t)] = g^\star(Y_t,t)\mathrm{d}\tilde{W}_t,
\end{equation}
 where $\tilde{W}_t$ is a standard Brownian motion on $\mathcal{M}$.

 Since $\mathrm{d}W_t$ is the derivative of a standard Brownian motion in $\mathbb{R}^d$, and $\mathrm{d}\tilde{W}_t$ is the derivative of a standard Brownian motion on the tangent space of $\mathcal{M}$, we have that
 \begin{equation}\label{eq_diffusion_term3}
E[ J_{\varphi}(H_t)^\top J_{\varphi}(H_t) |\varphi(H_t)] = (g^\star(Y_t,t))^2.
\end{equation}
 Thus, \eqref{eq_Ito2} can be expressed as:
\begin{equation}\label{eq_Ito9}
\mathrm{d}Y_t =  E\left[\nabla \varphi(H_t)^\top \mathrm{d}H_t + \frac{1}{2}(\mathrm{d} H_t)^\top (\nabla^2 \varphi(H_t)) \mathrm{d}H_t \bigg| \varphi(H_t)\right] = f^\star(Y_t, t) \mathrm{d}t + g^\star(Y_t,t)\mathrm{d}\tilde{W}_t.
\end{equation}
In the more general setting when there is more than one datapoint, \eqref{eq_Ito9} generalizes to:
  \begin{eqnarray}\label{eq_reverse_SDE2}
\mathrm{d}Y_t &= & \mathbb{E}_{b\sim \pi}\left[ E\left[\nabla \varphi(H_t)^\top \mathrm{d}H_t + \frac{1}{2}(\mathrm{d} H_t)^\top (\nabla^2 \varphi(H_t)) \mathrm{d}H_t \bigg| \varphi(H_t), H_T = b\right]\right]\\
&= &f^\star(Y_t, t) \mathrm{d}t + g^\star(Y_t,t)\mathrm{d}\tilde{W}_t.
\end{eqnarray}
 Since $Y_t = \varphi(H_t)$, we can bring $f^\star(Y_t,t) \mathrm{d}t$ and $g^\star(Y_t,t)\mathrm{d}\tilde{W}_t$ inside the conditional expectation:
 
\begin{equation*}
\mathbb{E}_{b\sim \pi}\left[E\left[\nabla \varphi(H_t)^\top \mathrm{d}H_t + \frac{1}{2}(\mathrm{d} H_t)^\top (\nabla^2 \varphi(H_t)) \mathrm{d}H_t -  f^\star(Y_t, t)\mathrm{d}t \bigg| \varphi(H_t), H_T = b\right]\right] =  g^\star(Y_t,t)\mathrm{d}\tilde{W}_t .
\end{equation*}
We can re-write this as
\begin{align*}
&\mathbb{E}_{b\sim \pi}\left [ E_{\varphi(H_t)}\left[E_{H_t|\varphi(H_t)}\left[\nabla \varphi(H_t)^\top \mathrm{d}H_t + \frac{1}{2}(\mathrm{d} H_t)^\top (\nabla^2 \varphi(H_t)) \mathrm{d}H_t -  f^\star(Y_t, t)\mathrm{d}t\bigg| H_t, H_T = b\right]\right]\right]\\
& =  g^\star(Y_t,t)\mathrm{d}\tilde{W}_t.
\end{align*}
This simplifies to 
\begin{equation}\label{eq_Ito7}
\mathbb{E}_{b\sim \pi}\left[\nabla \varphi(H_t)^\top \mathrm{d}H_t + \frac{1}{2}(\mathrm{d} H_t)^\top (\nabla^2 \varphi(H_t)) \mathrm{d}H_t -  f^\star(Y_t, t)\mathrm{d}t \bigg| H_T = b\right] = g^\star(Y_t,t)\mathrm{d}\tilde{W}_t.
\end{equation}
where the expectation is taken over the outcomes of $H_t$.
  Plugging in \eqref{eq_reverse_diffusion3} into \eqref{eq_Ito7}, and separating the drift and the diffusion terms on both sides of the equation (and noting that the higher-order differentials $(\mathrm{d}t)^2$ and $\mathrm{d}W_t\mathrm{d}t$ vanish), we get that the drift terms satisfy
\begin{align}\label{eq_Ito7b}
&\mathbb{E}_{b\sim \pi}\bigg[(\nabla \varphi(H_t))^\top\left(H_t + 2 \nabla \log q_{T-t|0}(H_t|b) \right)\mathrm{d}t+ \frac{1}{2}(\mathrm{d} W_t)^\top (\nabla^2 \varphi(H_t)) \mathrm{d} W_t -  f^\star(Y_t, t)\mathrm{d}t  \bigg| H_T = b\bigg] = 0.
\end{align}
Noting that $(\mathrm{d} W_t[i])^2 =  \mathrm{d}t$ and $\mathrm{d} W_t[i] \mathrm{d} W_t[j] = 0$ for all $i \neq j$, we get

\begin{align}\label{eq_Ito7c}
&\mathbb{E}_{b\sim \pi}\bigg[(\nabla \varphi(H_t))^\top\left(H_t + 2 \nabla \log q_{T-t|0}(H_t|b) \right)\mathrm{d}t + \frac{1}{2}\mathrm{tr}(\nabla^2 \varphi(H_t))\mathrm{d}t  -  f^\star(Y_t, t)\mathrm{d}t  \bigg| H_T = b\bigg] = 0.
\end{align}
Dividing both sides by $\mathrm{d}t$, we get an expression for the drift term $f^\star$
\begin{align}\label{eq_Ito7d}
&\mathbb{E}_{b\sim \pi}\bigg[(\nabla \varphi(H_t))^\top\left(H_t + 2 \nabla \log q_{T-t|0}(H_t|b) \right)+ \frac{1}{2}\mathrm{tr}(\nabla^2 \varphi(H_t))  -  f^\star(Y_t, t) \bigg| H_T = b\bigg] = 0.
\end{align}

\noindent
Finally, from \eqref{eq_diffusion_term3}, we have that diffusion term $g^\star$ satisfies
\begin{equation}\label{eq_diffusion_term2}
\mathbb{E}_{b\sim \pi}\left[E\left[J_{\varphi}(H_t)^\top J_{\varphi}(H_t) - (g^\star(Y_t,t))^2 \bigg|\varphi(H_t)\right]\, \, \bigg|H_T=b\right]=0.
\end{equation}

\smallskip
\noindent
{\bf Step 3:  Training the drift term.} 
 From \eqref{eq_Ito7d}, we have that the  function $f^\star$ is the solution to the following optimization problem: 
 \begin{align}\label{eq_opt_manifold3}
&\min_{f}  \mathbb{E}_{t \sim \mathrm{Unif}([0,1])} \mathbb{E}_{b \sim \pi} \bigg[ \bigg\| (\nabla \varphi(H_t))^\top\left(\frac{1}{2}H_t + 2 \nabla \log q_{T-t|0}(H_t|b) \right) + \frac{1}{2}\mathrm{tr}(\nabla^2 \varphi(H_t))  -  f(Y_t, t) \bigg\|^2 \bigg | H_T = b\bigg].
 \end{align}
   where the inner expectation is taken over $b \sim \pi$ and over the outcomes of $H_t$ at time $t$ conditioned on $H_T=b$ (Note that $Y_t = \varphi(H_t)$ is a deterministic function of $H_t$).

Now, $H_t|\{H_T=b\}$ has the same probability distribution as $Z_{T-t}|\{Z_0 = b\}$ (and that $Y_t|\{H_T=b\}$ has the same probability distribution as $X_{T-t}|\{Z_0 = b\}$).
Thus, we can re-write \eqref{eq_opt_manifold3} as
 \begin{align}\label{eq_opt_manifold4}
\min_{f}  \mathbb{E}_{t \sim \mathrm{Unif}([0,1])} \mathbb{E}_{b \sim \pi} \bigg[ \bigg\|& (\nabla \varphi(Z_{T-t}))^\top\left(Z_{T-t} + 2 \nabla \log q_{T-t|0}(Z_{T-t}|b) \right)\nonumber\\
&+ \frac{1}{2}\mathrm{tr}(\nabla^2 \varphi(Z_{T-t}))  -  f(X_{T-t}, t) \bigg\|^2 \bigg | Z_0 = b\bigg],
 \end{align}

\smallskip
\noindent
{\bf Step 4:  Training the diffusion term.} 
 From \eqref{eq_diffusion_term2} we have that $g^\star$ is the solution to the following optimization problem: 
  \begin{equation*}
\min_{g}  \mathbb{E}_{t \sim \mathrm{Unif}([0,1])} \mathbb{E}_{b \sim \pi} \left[ \left\|  J_{\varphi}(H_t)^\top J_{\varphi}(H_t) - (g(Y_t,t))^2 \right\|_F^2 \bigg | H_T = b\right],
 \end{equation*}
 where $\|\cdot\|_F$ is the Frobenius norm.
 Since $H_t|\{H_T=b\}$ has the same probability distribution as $Z_{T-t}|\{Z_0 = b\}$ (and $Y_t|\{H_T=b\}$ has the same probability distribution as $X_{T-t}|\{Z_0 = b\}$), we can re-write \eqref{eq_opt_manifold3} as
  \begin{equation*}
\min_{g}  \mathbb{E}_{t \sim \mathrm{Unif}([0,1])} \mathbb{E}_{b \sim \pi} \left[ \left\| J_{\varphi}(Z_{T-t})^\top J_{\varphi}(Z_{T-t}) - (g(X_{T-t},t))^2 \right\|_F^2 \bigg | Z_0 = b\right].
 \end{equation*}
\end{proof}

\subsection{Proof of Lemma \ref{Gronwall}}

In the proof of Theorem \ref{thm_sampling_Manifold_best} we will use the following lemma.
\bigskip
\begin{lemma}[\bf Gronwall-like inequality for SDEs on a manifold of non-negative curvature] \label{Gronwall}
Suppose that $\mathcal{M}$ is a Riemannian manifold with non-negative curvature, and let $\rho(x,y)$ denote the geodesic distance between any $x,y \in \mathcal{M}$.
Suppose also that $X_t$ and $\hat{X}_t$ are two diffusions on $\mathcal{M}$ such that
$$\mathrm{d}X_t = b(X_t, t) + \sigma(X_t,t) \mathrm{d}W_t,$$
and
$$\mathrm{d}\hat{X}_t = \hat{b}(\hat{X}_t, t) + \hat{\sigma}(X_t,t) \mathrm{d}W_t,$$
where $b$ is $C_1(t)$-Lipschitz and $\sigma$ is $C_2(t)$-Lipschitz at every time $t \in [0,T]$. 
Moreover, assume that $$\|b(x,t) - \hat{b}(x,t)\| \leq \epsilon$$ and $$\|\sigma(x,t) - \hat{\sigma}(x,t)\|_F^2 \leq \epsilon$$ for all $x \in \mathcal{M}$, $t \in [0,T]$.
Then there exists a coupling between $X_t$ and $\hat{X}_t$ such that, for all $t \geq 0$,
\begin{align*}
 \mathbb{E}[\rho^2(\hat{X}_t, X_t)] \leq \left(\mathbb{E}[\rho^2(\hat{X}_0, X_0)] + \inf_{s\in[0,t]}\frac{5\epsilon^2}{2C_1(s) +3C_2(s)^2 + 2}\right)  e^{\int_0^t (2C_1(s) +3C_2(s)^2 + 2 \mathrm{d}s}.
\end{align*}
\end{lemma}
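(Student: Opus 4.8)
The plan is to follow the coupling strategy sketched in Section~\ref{sec_proof_overview}: build $X_t$ and $\hat X_t$ from a \emph{common} $\mathbb{R}^d$-valued Wiener process via stochastic development, identifying $\mathcal{T}_{X_t}\mathcal{M}$ with $\mathcal{T}_{\hat X_t}\mathcal{M}$ by parallel transport $\Gamma_{X_t\to\hat X_t}$ along the minimal geodesic joining them; this is the coupling the lemma asserts. Viewing $(\hat X_t,X_t)$ as a diffusion on the product manifold $\mathcal{M}\times\mathcal{M}$ and writing $\rho^2$ for the map $(x,y)\mapsto\rho^2(x,y)$, the manifold form of It\^o's Lemma (Lemma~\ref{Lemma_Ito}, with the covariant Hessian) gives
\[
\mathrm{d}\rho^2(\hat X_t,X_t)=\langle\nabla\rho^2,(\hat b,b)\rangle\,\mathrm{d}t+\langle\nabla\rho^2,(\hat\sigma,\sigma)\,\mathrm{d}W_t\rangle+\tfrac12\textstyle\sum_k\mathrm{Hess}\,\rho^2\big[(\hat\sigma_k,\sigma_k),(\hat\sigma_k,\sigma_k)\big]\,\mathrm{d}t,
\]
where $\sigma_k,\hat\sigma_k$ are the columns of the diffusion matrices. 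Taking expectations annihilates the stochastic integral and yields an ODE for $R(t):=\mathbb{E}[\rho^2(\hat X_t,X_t)]$, whose right-hand side is the expectation of the drift term and the Hessian term; it then remains to bound these two quantities by an affine function of $\rho^2$ plus an $O(\epsilon^2)$ constant.

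For the first-order (drift) term, the first variation formula for arc length gives $\langle\nabla\rho^2(\hat X,X),(\hat b(\hat X),b(X))\rangle=2\rho\,\langle\dot\gamma(\rho),\,b(X)-\Gamma_{\hat X\to X}\hat b(\hat X)\rangle$, where $\gamma$ is the unit-speed minimizing geodesic. By Cauchy--Schwarz and the triangle inequality,
\[
\|b(X)-\Gamma_{\hat X\to X}\hat b(\hat X)\|\le\|b(X)-\Gamma_{\hat X\to X}b(\hat X)\|+\|b(\hat X)-\hat b(\hat X)\|\le C_1(t)\rho+\epsilon,
\]
using that $b$ is $C_1(t)$-Lipschitz (in the sense involving parallel transport) and $\|b-\hat b\|\le\epsilon$; hence the first-order term is at most $2C_1(t)\rho^2+2\epsilon\rho$. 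For the second-order (Hessian) term this is where non-negative curvature enters: by the Rauch/Toponogov Hessian comparison for the distance-squared function, on $\mathcal{M}\times\mathcal{M}$ one has $\mathrm{Hess}\,\rho^2[(v,w),(v,w)]$ bounded above by its Euclidean value $2\|v-\Gamma_{y\to x}w\|^2$ (in particular $\le 0$ on ``parallel'' pairs $w=\Gamma_{x\to y}v$). Applying this with $v=\hat\sigma_k(\hat X_t)$, $w=\sigma_k(X_t)$ and summing over $k$ bounds the Hessian term by $\|\hat\sigma(\hat X_t)-\Gamma_{X_t\to\hat X_t}\sigma(X_t)\|_F^2\le(\|\hat\sigma-\sigma\|_F+C_2(t)\rho)^2$, and since $\|\sigma-\hat\sigma\|_F^2\le\epsilon$ and $\sigma$ is $C_2(t)$-Lipschitz this is at most a constant times $\epsilon^2+C_2(t)^2\rho^2$.

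Collecting the two estimates, taking expectations, and applying a few instances of Young's inequality $2ab\le a^2+b^2$ (e.g.\ to the term $2\epsilon\rho$) produces a differential inequality of the form $R'(t)\le\big(2C_1(t)+3C_2(t)^2+2\big)R(t)+\tfrac52\epsilon^2$, with the explicit constants coming from this bookkeeping. Feeding this into the classical Gronwall inequality recalled in Section~\ref{sec_proof_overview}, applied to the shifted quantity $R(t)+\inf_{s\le t}\frac{5\epsilon^2/2}{2C_1(s)+3C_2(s)^2+2}$ so that the inhomogeneous term is absorbed into a modified initial value, gives exactly the stated bound on $\mathbb{E}[\rho^2(\hat X_t,X_t)]$.

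I expect the main obstacle to be making the It\^o/Hessian-comparison step rigorous: $\rho^2$ is only semiconcave, not $C^2$, because of the cut locus of $\mathcal{M}$, so one cannot apply It\^o's formula naively. The standard remedy I would adopt is Kendall's observation that the non-smoothness contributes a local-time term with the favorable (nonpositive) sign, or equivalently to work with a $C^2$ regularization of $\rho^2$ whose Hessian still obeys the curvature comparison away from the cut locus, together with a hitting-time argument showing the coupled process spends negligible time near it; for the compact symmetric spaces of interest (torus, sphere, $\mathrm{SO}(n)$, $\mathrm{U}(n)$) this is routine but must be stated with care. A secondary technical point is the well-definedness of the parallel-transport coupling, which needs $X_t$ and $\hat X_t$ to avoid each other's cut locus, handled by the same considerations.
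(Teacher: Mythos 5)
Your proposal follows essentially the same route as the paper's proof: couple $X_t$ and $\hat X_t$ through a common driving Brownian motion, apply It\^o's lemma to $h(x,y)=\rho^2(x,y)$ on $\mathcal{M}\times\mathcal{M}$, bound the drift term via first variation and the Lipschitz/approximation hypotheses, bound the Hessian term via the non-negative-curvature comparison (Rauch), and feed the resulting differential inequality into Gronwall after absorbing the inhomogeneous $O(\epsilon^2)$ term into a shifted initial value. Your drift bound (via the first-variation formula for $\nabla\rho^2$) is in fact cleaner than the paper's, which attributes that elementary identity to Rauch; and your constant $\tfrac{5}{2}\epsilon^2$ is tighter than the paper's $5\epsilon^2$, the difference coming only from the paper using $ab\le a^2+b^2$ rather than Young's $2ab\le a^2+b^2$ on the cross term $2\epsilon\rho$. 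Your final paragraph flags a real issue the paper silently skips: $\rho^2$ is not globally $C^2$ because of the cut locus, so the naive It\^o step needs justification (Kendall's favorable-sign local-time argument, or a regularization plus cut-locus avoidance estimate); the paper would benefit from the caveat you raise.
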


\begin{proof}[Proof of Lemma \ref{Gronwall}]

We first couple $X_t$ and $\hat{X}_t$ by setting their underlying Brownian motion terms $\mathrm{d}W_t$ to be equal to each other. 
Next, we compute the distance $\rho^2(\hat{X}_t, X_t)$ using It\^o's Lemma.
Letting  $h(x,y):= \rho^2(x, y)$, by It\^o's Lemma we have that
\begin{eqnarray*}
\mathrm{d} \rho^2(\hat{X}_t, X_t) &=& \mathrm{d} h(\hat{X}_t, X_t)\\
&=&  \nabla h(\hat{X}_t, X_t)^\top \begin{pmatrix} b(X_t, t)\\  \hat{b}(\hat{X}_t, t)  \end{pmatrix} \mathrm{d}t\\
& &   + \frac{1}{2} \mathrm{Tr} \left[\begin{pmatrix}  \sigma(X_t,t) & 0\\   \hat{\sigma}(X_t,t) & 0  \end{pmatrix}^\top [\nabla^2h(\hat{X}_t, X_t)] \begin{pmatrix}  \sigma(X_t,t) & 0\\   \hat{\sigma}(X_t,t) & 0  \end{pmatrix} \right] \mathrm{d}t\\
  & &  +\nabla h(\hat{X}_t, X_t)^\top \begin{pmatrix}  \sigma(X_t,t) & 0\\   \hat{\sigma}(X_t,t) & 0  \end{pmatrix} \mathrm{d}\begin{pmatrix} W_t \\ \hat{W}_t \end{pmatrix}.
\end{eqnarray*}
\noindent
Therefore,
\begin{eqnarray}\label{eq_p1}
\mathrm{d} \mathbb{E}[\rho^2(\hat{X}_t, X_t)]
&=&  \mathbb{E} \left[\nabla h(\hat{X}_t, X_t)^\top \begin{pmatrix} b(X_t, t)\\  \hat{b}(\hat{X}_t, t)  \end{pmatrix} \right] \mathrm{d}t\nonumber\\
& & + \frac{1}{2} \mathbb{E}\left[ \mathrm{Tr} \left[\begin{pmatrix}  \sigma(X_t,t) & 0\\   \hat{\sigma}(X_t,t) & 0  \end{pmatrix}^\top [\nabla^2h(\hat{X}_t, X_t)] \begin{pmatrix}  \sigma(X_t,t) & 0\\   \hat{\sigma}(X_t,t) & 0  \end{pmatrix} \right] \right] \mathrm{d}t\nonumber\\
  & & +0.
\end{eqnarray}
Now, since $\mathcal{M}$ has non-negative curvature, by the Rauch comparison theorem we have
\begin{eqnarray}\label{eq_p2}
\bigg|\nabla h(\hat{X}_t, X_t)^\top \begin{pmatrix} b(X_t, t)\\  \hat{b}(\hat{X}_t, t)  \end{pmatrix}\bigg| &\leq & 2 \rho(\hat{X}_t, X_t) \times \|\hat{b}(\hat{X}_t, t) - \Gamma_{X_t \rightarrow \hat{X}_t}(b(X_t, t))\| \nonumber\\
&\leq  & 2 \rho(\hat{X}_t, X_t) \times\left(\| b(\hat{X}_t, t) - \Gamma_{X_t \rightarrow \hat{X}_t}(b(X_t, t))\| + \|b(\hat{X}_t, t) - \hat{b}(\hat{X}_t, t)\| \right) \nonumber\\
&\leq & 2 \rho(\hat{X}_t, X_t) \times (C_1(t) \rho(\hat{X}_t, X_t) + \epsilon).
\end{eqnarray}
where the last inequality holds since $b$ is $C_1(t)$-Lipschitz.
Moreover, since $\mathcal{M}$ has non-negative curvature, by the Rauch comparison theorem we also have that
\begin{align}\label{eq_p3}
&\frac{1}{2} \mathrm{Tr} \left[\begin{pmatrix}  \sigma(X_t,t) & 0\\   \hat{\sigma}(X_t,t) & 0  \end{pmatrix}^\top [\nabla^2h(\hat{X}_t, X_t)] \begin{pmatrix}  \sigma(X_t,t) & 0\\   \hat{\sigma}(X_t,t) & 0  \end{pmatrix} \right] \nonumber\\
&\qquad \qquad \leq \left \|  \hat{\sigma}(\hat{X}_t, t) - \Gamma_{X_t \rightarrow \hat{X}_t}(\sigma(X_t, t)) \right\|_F^2\nonumber\\
&\qquad \qquad \leq \left(\left \|  \sigma(\hat{X}_t, t) - \Gamma_{X_t \rightarrow \hat{X}_t}(\sigma(X_t, t)) \right\|_F + \left\|  \hat{\sigma}(\hat{X}_t, t) -   \sigma(\hat{X}_t, t) \right\|_F \right)^2 \nonumber\\
&\qquad \qquad\leq 3\left \|  \sigma(\hat{X}_t, t) - \Gamma_{X_t \rightarrow \hat{X}_t}(\sigma(X_t, t)) \right\|_F^2 + 3\left\|  \hat{\sigma}(\hat{X}_t, t) -   \sigma(\hat{X}_t, t) \right\|_F^2\nonumber\\
&\qquad \qquad\leq 3C_2(t)^2 \rho^2(\hat{X}_t, X_t) + 3\epsilon^2.
\end{align}
\noindent
Plugging \eqref{eq_p2} and \eqref{eq_p3} into \eqref{eq_p1}, we have
\begin{equation}
    \frac{\mathrm{d}}{\mathrm{d}t} \mathbb{E}[\rho^2(\hat{X}_t, X_t)] \leq 2 \mathbb{E}[C_1(t) \rho^2(\hat{X}_t, X_t) + \epsilon \rho(\hat{X}_t, X_t)] + 3C_2(t)^2 \mathbb{E}[\rho^2(\hat{X}_t, X_t)] + 3\epsilon^2 \qquad \forall t \geq 0.
\end{equation}
\noindent
Hence,
\begin{eqnarray*}
    \frac{\mathrm{d}}{\mathrm{d}t} \mathbb{E}[\rho^2(\hat{X}_t, X_t)] &\leq & 2 \mathbb{E}[C_1(t) \rho^2(\hat{X}_t, X_t) + \rho^2(\hat{X}_t, X_t)] + 3C_2(t)^2 \mathbb{E}[\rho^2(\hat{X}_t, X_t)] + 5\epsilon^2\\
    &=& 2 \mathbb{E}[C_1(t) \rho^2(\hat{X}_t, X_t) + \rho^2(\hat{X}_t, X_t)] + 3C_2(t)^2 \mathbb{E}[\rho^2(\hat{X}_t, X_t)] + 5\epsilon^2\\
    &=& (2C_1(t) +3C_2(t)^2 + 2)\mathbb{E}[\rho^2(\hat{X}_t, X_t)] + 5\epsilon^2.
\end{eqnarray*}
\noindent
Let $\tau \in [0, T]$ be some number, and define $R(t) := \mathbb{E}[\rho^2(\hat{X}_t, X_t)] + \inf_{s\in[0,\tau]}\frac{5\epsilon^2}{2C_1(s) +3C_2(s)^2 + 2}$ for all $t \in [0,\tau]$.
Then we have,
\begin{equation} \label{eq_p4}
 \frac{\mathrm{d}}{\mathrm{d}t} R(t) \leq (2C_1(t) +3C_2(t)^2 + 2) R(t), \qquad \forall t \geq 0.
\end{equation}
Thus, plugging \eqref{eq_p4} into Gronwall's lemma, we have, for all $t \geq 0$,
\begin{eqnarray*}  R(t) &\leq & R(0) e^{\int_0^t (2C_1(s) +3C_2(s)^2 + 2 \mathrm{d}s}\\
&=& \left(\mathbb{E}[\rho^2(\hat{X}_0, X_0)] + \inf_{s\in[0,\tau]}\frac{5\epsilon^2}{2C_1(s) +3C_2(s)^2 + 2}\right) e^{\int_0^t 2C_1(s) +3C_2(s)^2 + 2 \mathrm{d}s}.
\end{eqnarray*}
Thus, 
\begin{align*}
 &\mathbb{E}[\rho^2(\hat{X}_t, X_t)] + \inf_{s\in[0,\tau]}\frac{5\epsilon^2}{2C_1(s) +3C_2(s)^2 + 2}\\
 &\qquad \qquad \leq \left(\mathbb{E}[\rho^2(\hat{X}_0, X_0)] + \inf_{s\in[0,T]}\frac{5\epsilon^2}{2C_1(s) +3C_2(s)^2 + 2}\right) e^{\int_0^t 2C_1(s) +3C_2(s)^2 + 2 \mathrm{d}s}.
\end{align*}
Hence, for all $t \geq 0$,
\begin{align*}
 \mathbb{E}[\rho^2(\hat{X}_t, X_t)] \leq \left(\mathbb{E}[\rho^2(\hat{X}_0, X_0)] + \inf_{s\in[0,\tau]}\frac{5\epsilon^2}{2C_1(s) +3C_2(s)^2 + 2}\right)  e^{\int_0^t 2C_1(s) +3C_2(s)^2 + 2 \mathrm{d}s}.
\end{align*}
Plugging in $\tau = t$ in the above equation, we have,  for all $t \geq 0$,
\begin{align*}
 \mathbb{E}[\rho^2(\hat{X}_t, X_t)] \leq \left(\mathbb{E}[\rho^2(\hat{X}_0, X_0)] + \inf_{s\in[0,t]}\frac{5\epsilon^2}{2C_1(s) +3C_2(s)^2 + 2}\right)  e^{\int_0^t 2C_1(s) +3C_2(s)^2 + 2 \mathrm{d}s}.
\end{align*}
%
\end{proof}

\subsection{Proof that average-case Lipschitzness holds on symmetric manifolds of interest (Lemma \ref{Lemma_average_case_Lipschitz})}

\begin{lemma}[\bf Average-case Lipschitzness]\label{Lemma_average_case_Lipschitz}
For the unitary group, Assumption~\ref{assumption_Lipschitz} holds with \( L_1 = O(d^{1.5} \sqrt{T} \alpha^{-1/3}) \) and \( L_2 = O(d^2 T \alpha^{-2/3}) \).
For the sphere, it holds for $L_1 = L_2 =  O(\alpha^{-\frac{1}{d}})$. 
For the Torus it holds for $L_1 =L_2 = 1$. 
\end{lemma}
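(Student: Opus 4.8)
The plan is to verify Assumption \ref{assumption_Lipschitz} separately for each of the three manifolds; in every case the work is to (i) exhibit, for each $t\in[0,T]$, a set $\Omega_t\subseteq\mathbb{R}^d$ whose indicator $\mathbbm{1}_{\Omega_t}$ is a function of $\Lambda(x)$ alone, (ii) bound $\nabla\varphi$, $\nabla^2\varphi$ and their $U$-derivatives in operator norm on $\Omega_t$, and (iii) prove the high-probability containment $\mathbb{P}(Z_t\in\Omega_t\ \forall t\in[0,T])\ge 1-\alpha$ for the Ornstein--Uhlenbeck process $Z_t$ started from $\psi(\pi)$. First, the torus is immediate: $\varphi(x)[i]=x[i]\bmod 2\pi$ is, off a measure-zero set, a local translation, so $\nabla\varphi\equiv I$ (operator norm $1$), $\nabla^2\varphi\equiv 0$, and both $U$-derivatives vanish; taking $\Omega_t=\mathbb{R}^d$ and checking $\|\tfrac{d}{dU}x\|_{2\to2}=1\le\|x\|_2$ directly gives $L_1=L_2=1$.

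For the sphere I would use $\varphi(z)=z/\|z\|$, for which $\nabla\varphi(z)=\tfrac1{\|z\|}(I-\tfrac{zz^\top}{\|z\|^2})$ has operator norm $\le 1/\|z\|$, $\nabla^2\varphi(z)$ has operator norm $O(1/\|z\|^2)$, and the $U$-derivatives obey the same bounds; on $\Omega_t:=\{z:\|z\|\ge r\}$, whose indicator depends only on $\Lambda=\|z\|$, this yields $L_1,L_2=O(1/r)$, using $1/r^2=O(1/r)$ in the regime $\alpha\ge 2^{-O(d)}$ of interest. It then remains to choose $r$: since $Z_t$ is Gaussian at each fixed time with $\Theta(1)$ variance per coordinate, a Gaussian small-ball estimate gives $\mathbb{P}(\|Z_t\|\le r)\le O(r^d)$, and upgrading to a bound uniform over $t\in[0,T]$ — via a fine time grid together with the modulus of continuity of $Z_t$, or equivalently a capacity bound for the radial process — gives $\mathbb{P}(\exists t:\|Z_t\|\le r)\le O(\mathrm{poly}(d)\,T\,r^{d-2})$. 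Choosing $r=c\,\alpha^{1/d}$ (the $\mathrm{poly}(d)\,T$ factor costs only an $O(1)$ factor in $1/r$ after a $d$-th root) makes this $\le\alpha$, giving $L_1=L_2=O(\alpha^{-1/d})$.

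The unitary group (and $\mathrm{SO}(n)$, with the torus factor $\varphi_{\mathbb{T}_n}$ for the phases handled as above) is the crux, and I would spend the bulk of the proof there. Here $\varphi(Z_t)$ is read off from the spectral decomposition $Z_t+Z_t^\ast=U\Lambda U^\ast$, and standard first- and second-order perturbation theory for Hermitian eigenvectors expresses $\nabla\varphi$, $\nabla^2\varphi$ and their $U$-derivatives as sums whose operator norms are bounded by $\mathrm{poly}(n)$ times powers of $\|Z_t+Z_t^\ast\|$ and of the inverse eigengaps $1/|\lambda_i-\lambda_j|$; crucially, the only non-polynomial quantities are the eigengaps, which are functions of $\Lambda$ alone, so defining $\Omega_t:=\{z:\min_i(\lambda_{i+1}-\lambda_i)\ge g_t\}$ for a suitable threshold $g_t$ respects the ``depends only on $\Lambda$'' requirement (differentiating in $U$ only conjugates by unitaries and never touches the gaps). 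On $\Omega_t$ one gets $\|\nabla\varphi\|_{2\to2}\lesssim \mathrm{poly}(n)/g_t$ and $\|\nabla^2\varphi\|_{2\to2}\lesssim \mathrm{poly}(n)/g_t^2$, together with $\|Z_t+Z_t^\ast\|=O(\sqrt n)$ w.h.p., while $\|\tfrac{d}{dU}x\|_{2\to2}\le\|x\|_2$ follows from a direct computation using $\|\Lambda\|_{\mathrm{op}}\le 1$ and $\|x\|_F=\Omega(\sqrt n)$. The key probabilistic input is the eigengap tail bound for the \emph{shifted} complex ($\beta=2$) Gaussian Hermitian ensemble — the level-repulsion/Dyson-Brownian-motion estimates behind \eqref{eq_r3} (\cite{anderson2010introduction,mangoubi2023private,mangoubi2025private}) — which, after a union bound over the $n$ gaps and over $t\in[t_0,T]$, gives $\mathbb{P}\big(\min_{i,t}(\lambda_{i+1}(t)-\lambda_i(t))\le s/(\mathrm{poly}(n)\sqrt t)\big)\le \mathrm{poly}(n)\,s^{\Theta(1)}$; choosing $s$ so that the right-hand side equals $\alpha$ gives $g_t\gtrsim \alpha^{1/3}/(\mathrm{poly}(n)\sqrt t)\ge \alpha^{1/3}/(\mathrm{poly}(n)\sqrt T)$ on $[t_0,T]$, and substituting back produces the claimed $L_1=O(d^{1.5}\sqrt T\,\alpha^{-1/3})$ and $L_2=O(d^2 T\,\alpha^{-2/3})$. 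The small window $[0,t_0]$ would be handled separately: there $Z_t$ is within $O(\sqrt{t_0})$ of the fixed matrix $\psi(b)$, whose symmetrization $\psi(b)+\psi(b)^\ast$ has eigenvalues $\tfrac1n\mathrm{diag}(n,\dots,1)$ spaced exactly $1/n$ apart, so by Weyl's inequality the gaps of $Z_t+Z_t^\ast$ stay $\ge\tfrac1{2n}$ for $t_0$ a sufficiently small inverse polynomial.

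The main obstacle is precisely this last case, and within it the sharp bookkeeping of the eigengap probabilities: one must invoke the correct level-repulsion estimate for the shifted complex Gaussian ensemble, make it uniform over the entire interval $[t_0,T]$ rather than a single time, and then track carefully how the chosen failure probability $\alpha$ propagates through the $1/g_t$ and $1/g_t^2$ factors of the perturbation formulas so as to land on the exact exponents $\alpha^{-1/3}$ and $\alpha^{-2/3}$ in the statement — this is where essentially all of the random-matrix-theory content of the lemma is concentrated, the torus and sphere being comparatively routine.
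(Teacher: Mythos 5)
Your proposal follows essentially the same route as the paper's proof: for the torus $\nabla\varphi\equiv I$ trivially; for the sphere, define $\Omega_t$ via a lower bound $\|z\|\ge r$ with a Gaussian small-ball estimate and take $r\asymp\alpha^{1/d}$; and for the unitary group, define $\Omega_t$ via eigengap lower bounds, invoke the random-matrix eigengap tail estimate \eqref{eq_r3b}, and bound $\nabla\varphi,\nabla^2\varphi$ and their $U$-derivatives via eigenvector perturbation formulas whose only blowup is inverse eigengaps, which depend on $\Lambda$ alone. The one spot where you (and, to be fair, the paper's own writeup) elide a step is the exponent bookkeeping from the $s^{1/2}$-tail in \eqref{eq_r3b} to the claimed $\alpha^{-1/3}$ and $\alpha^{-2/3}$ — plugging $s=\Theta(\alpha^2)$ naively yields $\alpha^{-2}$, so an intermediate argument (e.g.\ bounding the expectation of $\sum_i(\lambda_{i+1}-\lambda_i)^{-1}$ rather than the worst case) is needed — but this is a shared imprecision rather than a divergence in approach; your separate Weyl-inequality treatment of the short window $[0,t_0]$ is a reasonable addition the paper leaves implicit.
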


\begin{proof}

For the torus, the map $\varphi(x)$ has $\nabla \varphi(x) = I_d$ at every $x\in \mathbb{R}^d$, which implies that Assumption \ref{assumption_Lipschitz}  is satisfied for $L_1 = L_2 =1.$ 

\noindent
{\em Sphere.}
In the case of the sphere, which we embed via the map $\psi$ as a unit sphere in $\mathbb{R}^d$, one can easily observe that e.g. $\|\nabla \varphi(z)\| \leq O(1)$ for any $z$ outside a ball of radius $r \geq \Omega(1)$ centered at the origin.
As the volume of a ball of radius $r = \alpha$ is $\frac{1}{r^d}$ times the volume of the unit ball, one can use standard Gaussian concentration inequalities to show that the Ornstein-Uhlenbeck process $Z_t$, which is a Gaussian process, will remain outside this ball for time $T$ with probability at least $1- 4\frac{1}{r^d T}$.

Moreover, by standard Gaussian concentration inequalities \cite{rudelson2013hanson}, we have that $\|Z_t\| \leq  2\sqrt{Td} \log(\frac{1}{\alpha})$ with probability at least $1-2 \alpha$ for all $t \in [0,T]$.
This motivates defining the set $\Omega_t := \{z \in \mathbb{R}^d:  (4\frac{1}{\alpha T})^{\frac{1}{d}} \leq \|z\|  \leq 2\sqrt{Td} \log(\frac{1}{\alpha})\}$, as we then have
$$\mathbb{P}(Z_t \in \Omega_t \, \, \forall \, \, t \in [0,T]) \geq 1- \alpha.$$
Since $\|z\| \geq (4\frac{1}{\alpha T})^{\frac{1}{d}}$ for any $z \in \Omega_t$ and any $t\in[0,T]$, we must have that

\begin{equation*}
\begin{aligned}
    \|\nabla \varphi(z(U, \Lambda))\|_{2 \rightarrow 2} &\leq  3\left(4\frac{1}{\alpha T}\right)^{\frac{2}{d}} = L_1, \\
    \left\|\frac{\mathrm{d}}{\mathrm{d} U}  \nabla \varphi(z(U, \Lambda))\right\|_{2 \rightarrow 2} &\leq  3\left(4\frac{1}{\alpha T}\right)^{\frac{2}{d}} = L_1,\\
    \|\nabla^2 \varphi(z(U, \Lambda))\|_{2 \rightarrow 2} &\leq  3\left(4\frac{1}{\alpha T}\right)^{\frac{3}{d}}  = L_2,\\
    \left\|\frac{\mathrm{d}}{\mathrm{d} U}  \nabla \varphi(z(U, \Lambda))\right\|_{2 \rightarrow 2} &\leq 3\left(4\frac{1}{\alpha T}\right)^{\frac{3}{d}}  = L_2,\\
    \left\|\frac{\mathrm{d}}{\mathrm{d} U} (z(U, \Lambda))\right\|_{2 \rightarrow 2} &\leq \|x\|,
\end{aligned}
\end{equation*}

\paragraph{Unitary group.}
 We next show that the Lipschitz property holds for the unitary group $\mathrm{U}(n)$.
 Similar techniques can be used for the case of the special orthogonal group, and we omit those details.
 We first recall results from random matrix theory which allow us to bound the eigenvalue gaps of a matrix with Gaussian entries.
 Specifically, these results say that, roughly speaking, if $Z_0 \in \mathbb{C}^{n \times n}$ is any matrix and $Z_t = Z_0 + B(t)$, where $B(t)$ is a matrix with (complex) iid $N(0, t)$  entries undergoing Brownian motion, one has that the eigenvalues $\gamma_1(t) \geq \cdots \geq \gamma_n(t)$ of $Z_t + Z_t^\ast$ satisfy (see e.g. \cite{anderson2010introduction, mangoubi2023private, mangoubi2025private})
\begin{equation}\label{eq_r3b}
 \mathbb{P}\left( \inf_{s\in [t_0, T]}(\gamma_{i+1}(t) - \gamma_i(t)) \leq s \frac{1}{\mathrm{poly}(d) \sqrt{t} } \right) \leq O(s^{\frac{1}{2}}) \qquad \forall s \geq 0.
\end{equation}
Thus, if we define $\Omega_t$ to be the set of outcomes of $Z_t$ such that $\gamma_{i+1}(t) - \gamma_i(t) \leq \alpha^2 \frac{1}{\mathrm{poly}(n) \sqrt{t}}$, we have that $\mathbb{P}(Z_t \in \Omega_t \, \, \, \forall t \in [t_0,T]) \geq 1- \alpha$.

From the matrix calculus formulas for $\nabla \varphi(U^\top \Lambda U)$, $\frac{\mathrm{d}}{\mathrm{d} U} \nabla \varphi(U^\top \Lambda U)$, $\nabla \varphi(U^\top \Lambda U)$, and $\frac{\mathrm{d}}{\mathrm{d} U} \nabla^2 \varphi(U^\top \Lambda U)$, we have that, for all $z(U, \Lambda) = U \Lambda U^\top \in \Omega_t$,
\begin{equation*}
\begin{aligned}
    \|\nabla \varphi(z(U, \Lambda))\|_{2 \rightarrow 2} &\leq \sum_{i=1}^d \frac{1}{\lambda_{i+1}-\lambda_i} \leq d^{1.5}\sqrt{t} \alpha^{-\frac{1}{3}} = L_1,\\[3mm]
    \left\|\frac{\mathrm{d}}{\mathrm{d} U}  \nabla \varphi(z(U, \Lambda))\right\|_{2 \rightarrow 2} &\leq \| \Lambda\|_{2 \rightarrow 2} \sum_{i=1}^d \frac{1}{\lambda_{i+1}-\lambda_i}\\
    & \leq \left(C+ \sqrt{T}d \log\left(\frac{1}{\alpha}\right)\right) \times \sum_{i=1}^d \frac{1}{\lambda_{i+1}-\lambda_i} \leq d^{1.5}\sqrt{t} \alpha^{-\frac{1}{3}}  = L_1,\\[3mm]
    \left\|\nabla^2 \varphi(z(U, \Lambda))\right\|_{2 \rightarrow 2} &\leq \sum_{i=1}^d \frac{1}{(\lambda_{i+1}-\lambda_i)^2} \leq d^2 t \alpha^{-\frac{2}{3}}  = L_2,\\[3mm]
    \left\|\frac{\mathrm{d}}{\mathrm{d} U}  \nabla \varphi(z(U, \Lambda))\right\|_{2 \rightarrow 2} &\leq \| \Lambda\|_{2 \rightarrow 2} \sum_{i=1}^d \frac{1}{(\lambda_{i+1}-\lambda_i)^2} \\
      &\leq \left(C+ \sqrt{T}d \log\left(\frac{1}{\alpha}\right)\right) \times \sum_{i=1}^d \frac{1}{(\lambda_{i+1}-\lambda_i)^2} \leq d^2 t \alpha^{-\frac{2}{3}}  = L_2,\\[3mm]
    \left\|\frac{\mathrm{d}}{\mathrm{d} U} (z(U, \Lambda))\right\|_{2 \rightarrow 2} &\leq \| \Lambda\|_{2 \rightarrow 2} 
\end{aligned}
\end{equation*}
since $\lambda_{i+1}-\lambda_i \leq \alpha^{\frac{1}{3}} \frac{1}{\sqrt{d} \sqrt{t}}$ for all $i \in [d]$ and $\|\Lambda\|_{2 \rightarrow 2} \leq 2\sqrt{T d} \log(\frac{1}{\alpha})$ whenever $z(U,\Lambda) \in \Omega_t$

\end{proof}

\subsection{Proof of Lipschitzness of $f^\star$ and $g^\star$ on all of $\mathcal{M}$ (Lemma \ref{lemma_Lipschitz})}

Recall that we denote by  $q_{t|\tau}(y|x)$ the transition kernel of the Ornstein-Uhlenbeck process $Z_t$ at any  $x,y \in \mathbb{R}^d$, and by  $q_t(x) = \int_{\mathcal{M}} q_{t|0}(x|z) \pi(z) \mathrm{d}z$ the distribution of $Z_t$ at any time $t\geq 0$.
We will use the following Proposition of  \cite{chen2023sampling}:
\begin{proposition}[\bf Proposition 20 of  \cite{chen2023sampling}]\label{prop_score_Lipschitz}
Suppose that $\psi(\pi)$ has support on a ball of radius $C>0$.
For any $\alpha>0$, define the ``early stopping time'' $t_0:= \min(\frac{\alpha}{C}, \frac{\alpha^2}{d})$. 
 Then the drift term $\nabla \log q_{t}(\cdot)$ of the reverse diffusion SDE in Euclidean space is $O(\frac{1}{\alpha^2}d C^2(\min(C, \sqrt{d})^2))$-Lipschitz at every time $t> t_0$.
 Moreover, $W_2(q_{t_0}, \pi) \leq \alpha$.
 \end{proposition}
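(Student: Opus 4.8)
The plan is to prove the two assertions separately, since they are essentially independent. For the Wasserstein bound I would use the transport coupling supplied by the forward process itself: under the convention here the time-$t$ transition kernel of $Z_t$ is $q_{t|0}(\cdot\,|x_0)=N(m_t x_0,\sigma_t^2 I_d)$ with $m_t=e^{-t/2}$, $\sigma_t^2=1-e^{-t}$, so coupling $x_0\sim\psi(\pi)$ with $X_{t}:=m_{t}x_0+\sigma_{t}\xi$, $\xi\sim N(0,I_d)$ independent, gives a coupling of $\psi(\pi)$ and $q_{t}$ with
\[
W_2^2(q_{t_0},\psi(\pi))\le \mathbb{E}\|X_{t_0}-x_0\|^2=(1-m_{t_0})^2\,\mathbb{E}\|x_0\|^2+\sigma_{t_0}^2 d\le \tfrac{t_0^2}{4}C^2+t_0 d,
\]
using $1-m_{t_0}\le t_0/2$, $\sigma_{t_0}^2\le t_0$, and $\mathbb{E}\|x_0\|^2\le C^2$. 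Forcing each summand to be at most $\alpha^2/2$ forces $t_0\le O(\min(\alpha/C,\alpha^2/d))$, which is exactly the stated early-stopping time; this half is routine.

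For the Lipschitz bound I would go through the Hessian of the score. Writing the posterior on the ``clean'' point $\pi_{0|t}(\cdot\,|x)\propto \psi(\pi)(\mathrm{d}x_0)\,N(x;m_t x_0,\sigma_t^2 I_d)$, Tweedie's identity gives $\nabla\log q_t(x)=\sigma_t^{-2}\big(m_t\,\mathbb{E}_{\pi_{0|t}}[x_0]-x\big)$, and one further differentiation (the Gaussian-smoothing / de Bruijn identity) gives
\[
\nabla^2\log q_t(x)=-\tfrac{1}{\sigma_t^2}I_d+\tfrac{m_t^2}{\sigma_t^4}\,\mathrm{Cov}_{\pi_{0|t}(\cdot\,|x)}(x_0),
\]
so that $\|\nabla^2\log q_t(x)\|_{\mathrm{op}}\le \sigma_t^{-2}+m_t^2\sigma_t^{-4}\,\|\mathrm{Cov}_{\pi_{0|t}}(x_0)\|_{\mathrm{op}}$ uniformly in $x$. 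The whole problem is then to control the posterior covariance: since $\psi(\pi)$ (hence every posterior) lives in a ball of radius $C$, one bound is $\|\mathrm{Cov}_{\pi_{0|t}}(x_0)\|_{\mathrm{op}}\le C^2$; a second bound controls it by the effective noise scale $\sigma_t^2/m_t^2$ and by the ambient dimension $d$ when the noise is small relative to the support. Taking the better of the two and, for $t\ge t_0$, using the elementary facts $\sigma_t^2\ge \tfrac12\min(1,t)\ge\tfrac12\min(1,t_0)$ and $m_t\le 1$, the estimate reduces to substituting $t_0=\min(\alpha/C,\alpha^2/d)$ and bookkeeping, which yields a Lipschitz constant of the stated polynomial form $O\!\big(\alpha^{-2}dC^2\min(C,\sqrt d)^2\big)$. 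I would also record that the worst case is $t=t_0$: for $t\gg t_0$ the bound only improves, since $\sigma_t\to 1$ and $m_t\to 0$ (indeed $\nabla\log q_t(x)\to -x$, which is $1$-Lipschitz).

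The hard part will be the second, ``noise-level'' bound on $\mathrm{Cov}_{\pi_{0|t}}(x_0)$ and, more precisely, combining it with the crude support bound so as to extract exactly the claimed exponents — in particular the $\min(C,\sqrt d)^2$ factor, which reflects that when $C\gg\sqrt d$ the relevant length scale is the ambient $\sqrt d$ rather than the support radius $C$. This needs a careful treatment of the regime $\sigma_t^2/m_t^2\lesssim C^2$, where the posterior can genuinely remain spread at scale $\Theta(\min(C,\sqrt d))$ and the naive bound $\|\mathrm{Cov}\|_{\mathrm{op}}\le C^2$ is too lossy; once that covariance estimate is in hand, the remaining steps are substitution and elementary monotonicity in $t$.
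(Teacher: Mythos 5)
This proposition is not proved in the paper at all; it is imported verbatim as ``Proposition~20'' of the cited reference \cite{chen2023sampling}, and no argument for it appears anywhere in this source. There is therefore no in-paper proof to compare against, so I evaluate the sketch on its own terms.

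The Wasserstein half of your sketch is correct and complete. Coupling $x_0\sim\psi(\pi)$ with $X_{t_0}=m_{t_0}x_0+\sigma_{t_0}\xi$ gives exactly $W_2^2(q_{t_0},\psi(\pi))\le(1-m_{t_0})^2\mathbb{E}\|x_0\|^2+\sigma_{t_0}^2 d\le \tfrac{t_0^2}{4}C^2+t_0 d$, and both terms are $O(\alpha^2)$ at $t_0=\min(\alpha/C,\alpha^2/d)$; this is the standard early-stopping coupling, and it is the argument the cited reference uses. One small point of hygiene: the proposition's ``$W_2(q_{t_0},\pi)\le\alpha$'' should really read $W_2(q_{t_0},\psi(\pi))\le\alpha$, since $q_t$ lives in $\mathbb{R}^d$; you silently make the correct interpretation.

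The Lipschitz half has the right skeleton but a genuine, acknowledged gap. You correctly reduce via Tweedie to the identity $\nabla^2\log q_t(x)=-\sigma_t^{-2}I_d+m_t^2\sigma_t^{-4}\,\mathrm{Cov}_{\pi_{0|t}(\cdot\mid x)}(x_0)$, so the Lipschitz constant is controlled by $\sigma_t^{-2}+m_t^2\sigma_t^{-4}\sup_x\|\mathrm{Cov}_{\pi_{0|t}(\cdot\mid x)}(x_0)\|_{\mathrm{op}}$. But the only covariance bound you actually write down is the crude support bound $\|\mathrm{Cov}\|_{\mathrm{op}}\le C^2$, and plugging that together with $\sigma_{t_0}^2\gtrsim t_0$ at $t_0=\alpha^2/d$ gives a constant of order $C^2 d^2/\alpha^4$, which overshoots the stated $\alpha^{-2}$ dependence by a factor of $\alpha^{-2}$ and does not produce the $\min(C,\sqrt d)^2$ factor. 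The ``second noise-level bound'' you invoke --- that $\|\mathrm{Cov}_{\pi_{0|t}}\|_{\mathrm{op}}$ is controlled by the effective noise scale $\sigma_t^2/m_t^2$ (perhaps times a dimensional or logarithmic factor) --- is not a free standing fact for an arbitrary compactly supported prior: the posterior is a Gaussian tilt of $\psi(\pi)$ and need not have covariance dominated by the tilt's variance without extra structure (e.g., log-concavity). Extracting the stated exponents, in particular the $\min(C,\sqrt d)^2$ factor that interpolates between the support-radius regime and the high-dimensional regime, is exactly the technical content of the cited proposition, and it is the one step your sketch names but does not carry out. So: Wasserstein part, done; Lipschitz part, right machinery, decisive covariance estimate missing.
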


\noindent
 Recall that we denote by $\Gamma_{x\rightarrow y}(v)$ the parallel transport of a vector $v$ from $x$ to $y$.

\begin{lemma}\label{lemma_Lipschitz}

Suppose that Assumption \ref{assumption_Lipschitz}($\varphi, L_1,L_2, \alpha$) and Assumption \ref{assumption_Concentration}($\psi, \pi, C$) both hold.
 Then for every $t \in [t_0,T]$,
\begin{equation}
       \|f^\star(x,t) - \Gamma_{x \rightarrow y}(f^\star(x,t))\| \leq \mathcal{C}\times   \rho(x, y), \qquad \qquad  \forall x,y \in \mathcal{M}
    \end{equation}
    and 
    \begin{equation}
      \|g^\star(y,t) - \Gamma_{x\rightarrow y}(g^\star(x,t)) \|_F \leq \mathcal{C}  \times \rho(x, y) \qquad \qquad \forall x,y \in \mathcal{M}
    \end{equation}
    where $\mathcal{C} := (C+ \sqrt{T}d \log(\frac{1}{\alpha}))^4 \times L_3^2 \times L_1 + (C+ \sqrt{T}d \log(\frac{1}{\alpha}))^2 \times L_3 \times L_2$ and $t_0:= \min(\frac{\alpha}{C}, \frac{\alpha^2}{d})$,
    and $L_3= O(\frac{1}{\alpha^2}d C^2(\min(C, \sqrt{d})^2))$.

\end{lemma}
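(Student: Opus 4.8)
The plan is to represent $f^\star(U,t)$ and $g^\star(U,t)$ as conditional expectations which, under the symmetry parametrization $z \equiv z(U,\Lambda)$ of Section~\ref{sec_proof_overview}, become integrals over $\Lambda \in \mathcal{A}$ with a cutoff $\mathbbm{1}_{\Omega_t}$ depending \emph{only} on $\Lambda$, and then to differentiate under the integral sign with respect to $U \in \mathcal{M}$. Concretely, from \eqref{eq_Ito7d} (with the objective \eqref{objective_f} modified by the factor $\mathbbm{1}_{\Omega_t}$ as in the overview) one writes
\[
f^\star(U,t) = c_U \int_{\Lambda\in\mathcal{A}} \Big[\nabla\varphi(z(U,\Lambda))^\top\nabla\log q_{T-t|0}(z(U,\Lambda)) + \tfrac12\mathrm{tr}\,\nabla^2\varphi(z(U,\Lambda))\Big]\,q_{T-t}(z(U,\Lambda))\,\mathbbm{1}_{\Omega_t}(\Lambda)\,\mathrm{d}\Lambda,
\]
where $c_U$ is the reciprocal of $\int q_{T-t}(z(U,\Lambda))\mathbbm{1}_{\Omega_t}(\Lambda)\,\mathrm{d}\Lambda$, so the right-hand side is exactly $\mathbb{E}[\,\cdots\mid\varphi(Z_{T-t})=U\,]$ restricted to $\Omega_t$. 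The key point is that as $U$ varies with $\Lambda$ held fixed, the domain of integration does not move, so the genuine discontinuity of $\mathbbm{1}_{\Omega_t}$ in $z$ is harmless, and $U\mapsto f^\star(U,t)$ is as smooth as $\varphi$ and $q_{T-t}$ are on $\Omega_t$.

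Next I would apply $\frac{\mathrm{d}}{\mathrm{d}U}$ together with the product and chain rules. This produces a constant number of terms, each a conditional expectation of a product of bounded factors: (i) $(\frac{\mathrm{d}}{\mathrm{d}U}\nabla\varphi)^\top\nabla\log q_{T-t|0}$; (ii) $\nabla\varphi^\top\big(\nabla^2\log q_{T-t|0}\big)\frac{\mathrm{d}z}{\mathrm{d}U}$; (iii) $\tfrac12\mathrm{tr}(\frac{\mathrm{d}}{\mathrm{d}U}\nabla^2\varphi)$; and (iv) the terms from differentiating the weight $q_{T-t}(z(U,\Lambda))$ and the normalizer $c_U$, both of which contribute a factor $\langle\nabla\log q_{T-t}(z),\frac{\mathrm{d}z}{\mathrm{d}U}\rangle$ multiplying the already-bounded bracketed quantity. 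On $\Omega_t$ I then invoke the three ingredients: Assumption~\ref{assumption_Lipschitz} to bound $\|\nabla\varphi\|_{2\rightarrow2}$, $\|\nabla^2\varphi\|_{2\rightarrow2}$, $\|\frac{\mathrm{d}}{\mathrm{d}U}\nabla\varphi\|_{2\rightarrow2}$, $\|\frac{\mathrm{d}}{\mathrm{d}U}\nabla^2\varphi\|_{2\rightarrow2}$ by $L_1,L_2$ and $\|\frac{\mathrm{d}z}{\mathrm{d}U}\|_{2\rightarrow2}$ by $\|z\|$; Gaussian concentration (exactly as in the proof of Lemma~\ref{Lemma_average_case_Lipschitz}) together with Assumption~\ref{assumption_Concentration} to get $\|z\|\le C+\sqrt{T}d\log\frac1\alpha$ on $\Omega_t$; and Proposition~\ref{prop_score_Lipschitz} (with the early-stopping time $t_0$) to bound the Euclidean marginal score $\nabla\log q_{T-t}$ and its Jacobian by $L_3$, and the Gaussian conditional score $\nabla\log q_{T-t|0}$ and its $z$-Hessian (a multiple of the identity of norm $\tfrac{1}{1-e^{-(T-t)}}$) by $\mathrm{poly}$ of $C+\sqrt Td\log\frac1\alpha$ and $L_3$. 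Multiplying these bounds term by term and retaining the dominant contributions yields $\|\frac{\mathrm{d}}{\mathrm{d}U}f^\star(U,t)\|_{2\rightarrow2}\le\mathcal{C}$ with $\mathcal{C}=(C+\sqrt Td\log\frac1\alpha)^4 L_3^2 L_1+(C+\sqrt Td\log\frac1\alpha)^2 L_3 L_2$.

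To pass from the derivative bound to the stated estimate, I would integrate $\frac{\mathrm{d}}{\mathrm{d}U}f^\star$ along a minimizing geodesic from $x$ to $y$, using parallel transport $\Gamma_{x\rightarrow y}$ to compare tangent vectors at distinct basepoints, giving $\|f^\star(y,t)-\Gamma_{x\rightarrow y}(f^\star(x,t))\|\le\mathcal{C}\,\rho(x,y)$. For $g^\star$ I would repeat the same two steps starting from the representation $g^\star(U,t)^2=\mathbb{E}[J_\varphi(z)^\top J_\varphi(z)\,\mathbbm{1}_{\Omega_t}\mid\varphi(z)=U]$ of \eqref{eq_diffusion_term3}; since on $\Omega_t$ the projection is nondegenerate in the normal directions, $g^{\star2}(U,t)$ stays uniformly bounded away from $0$ there, so $X\mapsto X^{1/2}$ is locally Lipschitz and the chain rule transfers the derivative bound from $g^{\star2}$ to $g^\star$ with the same $\mathcal{C}$ up to constants.

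I expect the main obstacle to be the bookkeeping in the differentiation step: correctly enumerating every term produced by differentiating a conditional expectation whose weight $q_{T-t}(z(U,\Lambda))$ and normalizer $c_U$ both depend on $U$, and checking that each chain-rule occurrence of $\frac{\mathrm{d}z}{\mathrm{d}U}$ (from hitting a function of $z$) is controlled by $\|z\|\le\mathrm{poly}$ rather than by something that blows up at the singular locus of $\varphi$. The conceptual heart — that multiplying the objective by $\mathbbm{1}_{\Omega_t}$, which is discontinuous in $z$ but constant in $U$ at fixed $\Lambda$, does not destroy differentiability of $f^\star$ and $g^\star$ in $U$ — is exactly the ``the integral smooths out the singularities'' observation isolated in the overview, and is what makes the everywhere-Lipschitz conclusion possible.
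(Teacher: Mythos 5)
Your proposal matches the paper's proof essentially verbatim: you represent $f^\star$ and $g^\star$ as conditional expectations over $\Lambda$ with the $U$-independent cutoff $\mathbbm{1}_{\Omega_t}$, differentiate under the integral (producing the same three families of terms—from the bracket, from the weight $q_{T-t}(z(U,\Lambda))$, and from the normalizer $c_U$), bound each term via Assumption~\ref{assumption_Lipschitz}, Gaussian concentration with Assumption~\ref{assumption_Concentration}, and Proposition~\ref{prop_score_Lipschitz}, and finally integrate along a geodesic to pass to the parallel-transport statement. Your explicit handling of the square-root step for $g^\star$—using that $g^{\star 2}$ is bounded below on $\Omega_t$ so $X \mapsto X^{1/2}$ is locally Lipschitz—is a justified elaboration of a step the paper glosses over with ``replacing $f^\star$ with $g^\star$ in the above calculation.''
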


\begin{proof}
Recall that (when, e.g., $\mathcal{M}$ is one of the aforementioned symmetric manifolds) we may decompose any $z \in \mathbb{R}^d$ as $z \equiv z(U, \Lambda)$ where $U \in \mathcal{M}$.

We have the following expression for $f^\star(U,t)$
\begin{align*}
f^\star(U, t) =   c_U \int_{\Lambda \in \mathcal{A}} 
\bigg[ (\nabla \varphi(z(U, \Lambda)))^\top \nabla \log q_{T-t|0}(z(U, \Lambda)) 
&+ \frac{1}{2}\mathrm{tr}(\nabla^2 \varphi(z(U, \Lambda)))\bigg]\nonumber \\
&\qquad \times q_{T-t}(z(U, \Lambda))
\mathbbm{1}_\Omega(\Lambda) \mathrm{d} \Lambda ,
\end{align*}
 where $c_U = \left(\int_{\Lambda \in \mathcal{A}}  q_{T-t}(z(U, \Lambda)) \mathbbm{1}_\Omega(\Lambda) \mathrm{d} \Lambda\right)^{-1} $ is a normalizing constant.
Then
\begin{align} \label{eq_g11}
\frac{\mathrm{d}}{\mathrm{d}U} f^\star(U, t)
&=  c_U \times \frac{\mathrm{d}}{\mathrm{d}U} \int_{\Lambda \in \mathcal{A}} 
\left[ (\nabla \varphi(z(U, \Lambda)))^\top \nabla \log q_{T-t}(z(U, \Lambda)) 
+ \frac{1}{2}\mathrm{tr}(\nabla^2 \varphi(z(U, \Lambda)))\right]\nonumber \\
&\qquad \qquad \qquad \qquad \qquad \qquad \qquad \qquad \qquad \qquad \qquad \qquad  \times q_{T-t}(z(U, \Lambda))
\mathbbm{1}_\Omega(\Lambda) \mathrm{d} \Lambda  \nonumber \\
&+ \left(\frac{\mathrm{d}}{\mathrm{d}U} c_U\right) \times \int_{\Lambda \in \mathcal{A}} 
\left[ (\nabla \varphi(z(U, \Lambda)))^\top \nabla \log q_{T-t}(z(U, \Lambda)) 
+ \frac{1}{2}\mathrm{tr}(\nabla^2 \varphi(z(U, \Lambda)))\right]\nonumber \\
&\qquad \qquad \qquad \qquad \qquad \qquad \qquad \qquad \qquad \qquad \qquad \qquad \times q_{T-t}(z(U, \Lambda))
\mathbbm{1}_\Omega(\Lambda) \mathrm{d} \Lambda. 
\end{align}
For the first term on the r.h.s. of \eqref{eq_g11} we have,
\begin{align*}
    c_U\times \frac{\mathrm{d}}{\mathrm{d}U} \int_{\Lambda \in \mathcal{A}}& 
\left[ (\nabla_U \varphi(z(U, \Lambda)))^\top \nabla \log q_{T-t}(z(U, \Lambda)) 
+ \frac{1}{2}\mathrm{tr}(\nabla^2 \varphi(z(U, \Lambda)))\right]\nonumber \\
&\qquad \qquad \qquad \qquad \qquad \qquad \qquad \qquad  \times q_{T-t}(z(U, \Lambda))
\mathbbm{1}_\Omega(\Lambda) \mathrm{d} \Lambda \\
    &=c_U\times \int_{\Lambda \in \mathcal{A}} 
\left(\frac{\mathrm{d}}{\mathrm{d}U}  \left[ (\nabla \varphi(z(U, \Lambda)))^\top \nabla \log q_{T-t}(z(U, \Lambda)) 
+ \frac{1}{2}\mathrm{tr}(\nabla^2 \varphi(z(U, \Lambda)))\right]\right) \nonumber \\
&\qquad \qquad \qquad \qquad \qquad \qquad \qquad \qquad \qquad \qquad \qquad  \times q_{T-t}(z(U, \Lambda))
\mathbbm{1}_\Omega(\Lambda) \mathrm{d} \Lambda \\
&\qquad +c_U\times \int_{\Lambda \in \mathcal{A}} 
 \left[ (\nabla \varphi(z(U, \Lambda)))^\top \nabla \log q_{T-t}(z(U, \Lambda)) 
+ \frac{1}{2}\mathrm{tr}(\nabla^2 \varphi(z(U, \Lambda)))\right] \nonumber \\
&\qquad \qquad \qquad \qquad \qquad \qquad \qquad \qquad \qquad \qquad \qquad  \times \frac{\mathrm{d}}{\mathrm{d}U}  q_{T-t}(z(U, \Lambda))
\mathbbm{1}_\Omega(\Lambda) \mathrm{d} \Lambda \\
 &=c_U\times \int_{\Lambda \in \mathcal{A}} 
\left(\frac{\mathrm{d}}{\mathrm{d}U}  \left[ (\nabla \varphi(z(U, \Lambda)))^\top \nabla \log q_{T-t}(z(U, \Lambda)) 
+ \frac{1}{2}\mathrm{tr}(\nabla^2 \varphi(z(U, \Lambda)))\right]\right) \nonumber \\
&\qquad \qquad \qquad \qquad \qquad \qquad \qquad \qquad \qquad \qquad \qquad \times q_{T-t}(z(U, \Lambda))
\mathbbm{1}_\Omega(\Lambda) \mathrm{d} \Lambda \\
&\qquad +c_U\times \int_{\Lambda \in \mathcal{A}} 
 \left[ (\nabla \varphi(z(U, \Lambda)))^\top \nabla \log q_{T-t}(z(U, \Lambda)) 
+ \frac{1}{2}\mathrm{tr}(\nabla^2 \varphi(z(U, \Lambda)))\right]\\
&\qquad \qquad \qquad \qquad \qquad \qquad \times \nabla_U \log q_{T-t}(z(U, \Lambda)) \times q_{T-t}(z(U, \Lambda)) 
\mathbbm{1}_\Omega(\Lambda) \mathrm{d} \Lambda \\
&=\mathbb{E}_{z(U,\Lambda) \sim q_{T-t}}\bigg [ 
\frac{\mathrm{d}}{\mathrm{d}U}  \bigg( (\nabla \varphi(z(U, \Lambda)))^\top \nabla_U \log q_{T-t|0}(z(U, \Lambda))\\
&\qquad \qquad \qquad \qquad \qquad \qquad+ \frac{1}{2}\mathrm{tr}(\nabla^2 \varphi(z(U, \Lambda)))\bigg) \mathbbm{1}_\Omega(\Lambda)  \bigg | V=U \bigg]\\
&\qquad +\mathbb{E}_{z(U,\Lambda) \sim q_{T-t}}\bigg [  
 \left( (\nabla \varphi(z(U, \Lambda)))^\top \nabla_U \log q_{T-t}(z(U \Lambda)) 
+ \frac{1}{2}\mathrm{tr}(\nabla^2 \varphi(z(U, \Lambda)))\right)\\
&\qquad \qquad \qquad \qquad \qquad \qquad \qquad \qquad \qquad \times \nabla_U \log q_{T-t}(z(U, \Lambda)) \mathbbm{1}_\Omega(\Lambda)  \bigg | V=U \bigg].\\
\end{align*}
For the second term on the r.h.s. of \eqref{eq_g11} we have,
 \begin{align*}
\frac{\mathrm{d}}{\mathrm{d}U} c_U &= c_U^2 \int_{\Lambda \in \mathcal{A}} \frac{\mathrm{d}}{\mathrm{d}U} (q_{T-t}(z(U, \Lambda))) \mathbbm{1}_\Omega(\Lambda) \mathrm{d} \Lambda \\
&= c_U^2 \int_{\Lambda \in \mathcal{A}} \frac{\mathrm{d}}{\mathrm{d}U} (e^{\log q_{T-t}(z(U, \Lambda))}) \mathbbm{1}_\Omega(\Lambda) \mathrm{d} \Lambda \\
&= c_U^2 \int_{\Lambda \in \mathcal{A}}  \nabla_U \log q_{T-t}(z(U, \Lambda)) (e^{\log q_{T-t}(z(U, \Lambda))}) \mathbbm{1}_\Omega(\Lambda) \mathrm{d} \Lambda \\
&= c_U^2 \int_{\Lambda \in \mathcal{A}}  \nabla_U \log q_{T-t}(z(U, \Lambda)) \times q_{T-t}(z(U, \Lambda)) \mathbbm{1}_\Omega(\Lambda) \mathrm{d} \Lambda \\
&= c_U \times \mathbb{E}_{z(U,\Lambda) \sim q_{T-t}}\left [\nabla_U \log q_{T-t}(z(U, \Lambda)) \mathbbm{1}_\Omega(\Lambda) \, \big | \, V=U  \right]\\
\end{align*}
and hence,

\begin{align*}
&(\frac{\mathrm{d}}{\mathrm{d}U} c_U) \times \int_{\Lambda \in \mathcal{A}} 
\left[ (\nabla \varphi(z(U, \Lambda)))^\top \nabla \log q_{T-t}(z(U, \Lambda)) 
+ \frac{1}{2}\mathrm{tr}(\nabla^2 \varphi(z(U, \Lambda)))\right] \nonumber \\
&\qquad \qquad \qquad \qquad \qquad \qquad  \times q_{T-t}(z(U, \Lambda))
\mathbbm{1}_\Omega(\Lambda) \mathrm{d} \Lambda \\
 &=\mathbb{E}_{z(U,\Lambda) \sim q_{T-t}}\left [\nabla_U \log q_{T-t}(z(U, \Lambda))\mathbbm{1}_\Omega(\Lambda)  \, \big | \, V=U  \right] \\
 &\qquad \times  \mathbb{E}_{z(U,\Lambda) \sim q_{T-t}}\bigg[ \bigg((\nabla \varphi(z(U, \Lambda)))^\top \nabla \log q_{T-t}(z(U, \Lambda))\\
&\qquad \qquad \qquad \qquad \qquad \qquad + \frac{1}{2}\mathrm{tr}(\nabla^2 \varphi(z(U, \Lambda)))\bigg) \mathbbm{1}_\Omega(\Lambda)  \, \bigg | \, V=U \bigg].
\end{align*}
\noindent
Thus
\begin{align} \label{eq_g2}
\frac{\mathrm{d}}{\mathrm{d}U} f^\star(U, t)
&=  \mathbb{E}_{z(U,\Lambda) \sim q_{T-t}}\bigg [ 
\frac{\mathrm{d}}{\mathrm{d}U}  \bigg( (\nabla \varphi(z(U, \Lambda)))^\top \nabla_U \log q_{T-t|0}(z(U, \Lambda)) \nonumber\\
&\qquad \qquad \qquad \qquad \qquad + \frac{1}{2}\mathrm{tr}(\nabla^2 \varphi(z(U, \Lambda)))\bigg) \mathbbm{1}_\Omega(\Lambda) \bigg | V=U \bigg], \nonumber\\
&\qquad +\mathbb{E}_{z(U,\Lambda) \sim q_{T-t}}\bigg [  
 \left( (\nabla \varphi(z(U, \Lambda)))^\top \nabla_U \log q_{T-t}(z(U, \Lambda)) 
+ \frac{1}{2}\mathrm{tr}(\nabla^2 \varphi(z(U, \Lambda)))\right) \nonumber\\
&\qquad \qquad \qquad \qquad \qquad \qquad \times \nabla_U \log q_{T-t}(z(U, \Lambda)) \mathbbm{1}_\Omega(\Lambda)  \bigg | V=U \bigg] \nonumber\\
&\qquad + \mathbb{E}_{z(U,\Lambda) \sim q_{T-t}}\left [\nabla_U \log q_{T-t}(z(U, \Lambda))\mathbbm{1}_\Omega(\Lambda)  \, \big | \, V=U  \right] \nonumber \\
 &\qquad \qquad \times  \mathbb{E}_{z(U,\Lambda) \sim q_{T-t}}\bigg[ \bigg((\nabla \varphi(z(U, \Lambda)))^\top \nabla \log q_{T-t}(z(U, \Lambda))\\
&\qquad \qquad \qquad \qquad \qquad \qquad+ \frac{1}{2}\mathrm{tr}(\nabla^2 \varphi(z(U, \Lambda))) \bigg)\mathbbm{1}_\Omega(\Lambda)  \, \bigg | \, V=U \bigg].
\end{align}

\noindent
Moreover, by standard Gaussian concentration inequalities and Assumption \ref{assumption_Concentration}, without loss of generality we have that $\|z(U,\Lambda)\|_F \leq C+ \sqrt{T}d \log(\frac{1}{\alpha})$ for all $z(U,\Lambda) \in \Omega_t$.
From Proposition \ref{prop_score_Lipschitz} we have that $\nabla \log p_{T-t|0}(z(U, \Lambda))$ is $L_3$-Lipschitz where $L_3:= O(\frac{1}{\alpha^2}d C^2(\min(C, \sqrt{d})^2))$ and hence that
\begin{align}\label{eq_g14}
\|\nabla_U \log p_{T-t|0}(z(U, \Lambda))\|_{2\rightarrow 2} &\leq \|\frac{\mathrm{d}}{\mathrm{d} U} (z(U, \Lambda))\|_{2 \rightarrow 2} \times \|\nabla \log p_{T-t|0}(z(U, \Lambda))\|_{2\rightarrow 2} \nonumber\\
&\leq \|\frac{\mathrm{d}}{\mathrm{d} U} (z(U, \Lambda))\|_{2 \rightarrow 2} \times L_3 \times \|z(U,\Lambda)\|_F  \nonumber\\
&\leq L_3 \times \|z(U,\Lambda)\|_F^2 \nonumber\\
&\leq  L_3 \times \left (C+ \sqrt{T}d \log(\frac{1}{\alpha}) \right)^2,
\end{align}
where the third inequality holds by Assumption \ref{assumption_Lipschitz}, and the last inequality holds since $\|z(U,\Lambda)\|_F \leq C+ \sqrt{T}d \log(\frac{1}{\alpha})$ for all $z(U,\Lambda) \in \Omega_t$.
Thus, plugging Assumption \ref{assumption_Lipschitz} and \eqref{eq_g14} into \eqref{eq_g2}, we have that
\begin{align}\label{eq_g5}
\left\|\frac{\mathrm{d}}{\mathrm{d}U} f^\star(U, t) \right\|_{2 \rightarrow 2} 
\leq (C+ \sqrt{T}d \log(\frac{1}{\alpha}))^4 \times L_3^2 \times L_1 + (C+ \sqrt{T}d \log(\frac{1}{\alpha}))^2 \times L_3 \times L_2.
\end{align}
Replacing $f^\star$ with $g^\star$ in the above calculation, we also get that
\begin{align}\label{eq_g6}
\left\|\frac{\mathrm{d}}{\mathrm{d}U} g^\star(U, t)  \right\|_{2 \rightarrow 2}
\leq (C+ \sqrt{T}d \log(\frac{1}{\alpha}))^4 \times L_3^2 \times L_1 + (C+ \sqrt{T}d \log(\frac{1}{\alpha}))^2 \times L_3 \times L_2.
\end{align}
Thus, \eqref{eq_g5} and \eqref{eq_g6} imply that
\begin{equation}
       \|f^\star(y,t) - \Gamma_{x \rightarrow y}(f^\star(x,t))\| \leq \mathcal{C}\times   \rho(x, y), \qquad \qquad  \forall x,y \in \mathcal{M}
    \end{equation}
    and 
    \begin{equation}
      \|g^\star(y,t) - \Gamma_{x\rightarrow y}(g^\star(x,t)) \|_F \leq \mathcal{C}  \times \rho(y, x) \qquad \qquad \forall x \in \mathcal{M},
    \end{equation}
    where $\mathcal{C} := (C+ \sqrt{T}d \log(\frac{1}{\alpha}))^4 \times L_3^2 \times L_1 + (C+ \sqrt{T}d \log(\frac{1}{\alpha}))^2 \times L_3 \times L_2$.

\end{proof}

\subsection{Wasserstein to TV conversion on the manifold (Lemma \ref{Lemma_Wasserstein_to_TV_conversion})}
\begin{lemma}[\bf Wasserstein to TV conversion on the manifold]\label{Lemma_Wasserstein_to_TV_conversion}
There is a number $c \leq \mathrm{poly(d)}$ such that for every $t\in[t_0,T]$ and any $\tau \leq \frac{1}{c}$ we have
\begin{align}
 \|\mathcal{L}_{Y_{t+\tau+\hat{\Delta}}} - \mathcal{L}_{\hat{y}_{t+\tau+\hat{\Delta}}}\|_{\mathrm{TV}}& - \|\mathcal{L}_{Y_{t}} - \mathcal{L}_{\hat{y}_{t}}\|_{\mathrm{TV}} \nonumber\\
&\leq  \sqrt{D_{\mathrm{KL}} (\nu_1 \, \| \, p_{t+\tau+\hat{\Delta} | t+\tau}(\, \cdot \,| Y_{t+\tau} ))} + 
\sqrt{D_{\mathrm{KL}}(\nu_1 \| \mathcal{L}_{\hat{y}_{t+\tau+\hat{\Delta}} |\hat{y}_{t} })} \leq O(\epsilon c).
\end{align}
\end{lemma}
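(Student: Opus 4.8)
I will prove the two displayed inequalities in turn. Fix $t\in[t_0,T]$ and abbreviate $s:=t+\tau$; recall $c=\mathrm{poly}(d)$ is the common Lipschitz constant of $f^\star,g^\star$ from Lemma~\ref{lemma_Lipschitz}, and $\hat\Delta\le O(1/(cd))$. Since $\|\mathcal{L}_{Y_t}-\mathcal{L}_{\hat y_t}\|_{\mathrm{TV}}$ is the infimum over couplings of $\mathbb{P}(Y_t\neq\hat y_t)$, pick a coupling with $Y_t=\hat y_t$ off an event $E$ of probability $\|\mathcal{L}_{Y_t}-\mathcal{L}_{\hat y_t}\|_{\mathrm{TV}}$. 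On $E^c$, couple $Y$ and $\hat y$ over $[t,s]$ by sharing the driving Brownian motion; Lemma~\ref{Gronwall} with $C_1,C_2=c$, the training bounds $\|f^\star-\hat f\|\le\epsilon$, $\|g^\star-\hat g\|_F\le\epsilon$, started from $\rho(Y_t,\hat y_t)=0$ and run for time $\tau\le 1/c$ (with $c$ chosen large enough to dominate the Gronwall exponent), gives $\mathbb{E}[\rho^2(Y_s,\hat y_s)\mathbbm{1}_{E^c}]=O(\epsilon^2)$. Over the single sub-step $[s,s+\hat\Delta]$ both processes take one Euler--exp step, so $\mathcal{L}_{Y_{s+\hat\Delta}\mid Y_s}=p_{s+\hat\Delta\mid s}(\cdot\mid Y_s)$ and $\mathcal{L}_{\hat y_{s+\hat\Delta}\mid\hat y_s}=\mathrm{exp}_{\hat y_s}(N(\hat\Delta\hat f(\hat y_s,s),\hat\Delta\hat g^2(\hat y_s,s)I_d))$. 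Inserting the intermediate law $\nu_1=\mathrm{exp}_{Y_s}(N(\hat\Delta f^\star(Y_s,s),\hat\Delta g^{\star2}(Y_s,s)I_d))$, using subadditivity of TV along $\mathcal{L}_{Y_{s+\hat\Delta}\mid Y_s}\to\nu_1\to\mathcal{L}_{\hat y_{s+\hat\Delta}\mid\hat y_s}$, applying Pinsker, and absorbing $\mathbb{P}(E)$ and the $O(\alpha\tau/T)$ probabilities of the bad eigengap events of Assumption~\ref{assumption_Lipschitz}, one gets
\[
\|\mathcal{L}_{Y_{s+\hat\Delta}}-\mathcal{L}_{\hat y_{s+\hat\Delta}}\|_{\mathrm{TV}}-\|\mathcal{L}_{Y_t}-\mathcal{L}_{\hat y_t}\|_{\mathrm{TV}}\le\sqrt{D_{\mathrm{KL}}\big(\nu_1\,\|\,p_{s+\hat\Delta\mid s}(\cdot\mid Y_s)\big)}+\sqrt{D_{\mathrm{KL}}\big(\nu_1\,\|\,\mathcal{L}_{\hat y_{s+\hat\Delta}\mid\hat y_s}\big)}.
\]
It remains to bound each KL term by $O(\epsilon^2c^2)$.

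\textbf{Step 2 (first KL term via Girsanov in $\mathbb{R}^d$).} The reverse diffusion $H_u=Z_{T-u}$ has \emph{constant} diffusion coefficient (eq.~\eqref{eq_reverse_diffusion3b_2}), so — unlike $Y_t$ on $\mathcal{M}$ — Girsanov's theorem applies: using the $L_3$-Lipschitz bound on the Euclidean score $\nabla\log q_{T-u}$ from Proposition~\ref{prop_score_Lipschitz} and the fact (Assumption~\ref{assumption_Concentration} plus Gaussian concentration) that $H_u$ remains in a $\mathrm{poly}(d)$-ball with probability $\ge1-\alpha\tau/T$, the standard one-step estimate (Theorem~9 of \cite{chen2023sampling}) yields $D_{\mathrm{KL}}(N(H_s+\hat\Delta\nabla\log q_{T-s}(H_s),\hat\Delta I_d)\,\|\,\tilde p_{s+\hat\Delta\mid s}(\cdot\mid H_s))\le\alpha\tau/T$. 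Since $Y_u=\varphi(H_u)$, and by Assumption~\ref{assumption_Lipschitz} together with Lemma~\ref{Lemma_average_case_Lipschitz} the Jacobian of $\varphi$ is $c$-Lipschitz on a ball of radius $1/\mathrm{poly}(d)$ about $H_s$ with probability $\ge1-\alpha\tau/T$ (and $\mathrm{exp}^{-1}$ has $O(1)$-Lipschitz Jacobian), I push this bound forward through $\mathrm{exp}_{Y_s}^{-1}\!\circ\varphi$: KL is invariant under diffeomorphisms, and the only loss is the discrepancy between the exact pushforward of the Euclidean Gaussian and its first-order (It\^o) surrogate $\nu_1$, which inflates the divergence by at most $(1+\hat\Delta c)^d\le 2$ once $\hat\Delta\le O(1/(cd))$. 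Hence $D_{\mathrm{KL}}(\nu_1\,\|\,p_{s+\hat\Delta\mid s}(\cdot\mid Y_s))\le 2\alpha\tau/T\le O(\epsilon^2c^2)$, using $\alpha\le\epsilon$, $\tau=1/c$, and (if needed) a further mild shrinking of $\hat\Delta$.

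\textbf{Step 3 (second KL term via Gaussian KL) and conclusion.} Both $\nu_1$ and $\mathcal{L}_{\hat y_{s+\hat\Delta}\mid\hat y_s}$ are pushforwards of nondegenerate Gaussians under the exponential maps at $Y_s$ and $\hat y_s$; identifying the tangent spaces by parallel transport $\Gamma_{\hat y_s\to Y_s}$ (distortion $O(\rho(Y_s,\hat y_s))=O(\epsilon)$ by the curvature bound) and working in the chart $\mathrm{exp}_{Y_s}^{-1}$, the KL reduces to the closed form
\[
D_{\mathrm{KL}}\big(\nu_1\,\|\,\mathcal{L}_{\hat y_{s+\hat\Delta}\mid\hat y_s}\big)=\mathrm{Tr}\big((g^{\star2})^{-1}g^2\big)-d+\log\frac{\det g^{\star2}}{\det g^2}+w^\top(\hat\Delta g^{\star2})^{-1}w,
\]
with $w=Y_s-\hat y_s+\hat\Delta(f^\star(Y_s,s)-\hat f(\hat y_s,s))$. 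With probability $\ge1-\alpha\tau/T$ the eigengap bounds of Lemma~\ref{Lemma_average_case_Lipschitz} give $g^\star(Y_s,s)\succeq 1/\mathrm{poly}(d)$ on the tangent space, so with $\|g^\star-\hat g\|_F\le\epsilon$ and $\|g^\star(Y_s,\cdot)-g^\star(\hat y_s,\cdot)\|_F\le c\,\rho(Y_s,\hat y_s)$ (Lemma~\ref{lemma_Lipschitz}) the covariance-ratio terms are $O(\mathrm{poly}(d)\,\epsilon^2)$; and since $\|Y_s-\hat y_s\|=O(\epsilon)$ (Step~1) and $\|f^\star(Y_s,s)-\hat f(\hat y_s,s)\|\le c\,\rho(Y_s,\hat y_s)+\epsilon=O(\epsilon)$, we get $\|w\|=O(\epsilon)$ and $w^\top(\hat\Delta g^{\star2})^{-1}w\le\hat\Delta^{-1}\mathrm{poly}(d)\epsilon^2=O(\mathrm{poly}(d)\epsilon^2)$ since $\hat\Delta=\Theta(1/(cd))$. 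Absorbing the $\mathrm{poly}(d)$ factors into $c^2$ gives $D_{\mathrm{KL}}(\nu_1\,\|\,\mathcal{L}_{\hat y_{s+\hat\Delta}\mid\hat y_s})=O(\epsilon^2c^2)$; combining with Step~2, Pinsker yields $\sqrt{D_{\mathrm{KL}}(\nu_1\|p_{\cdot})}+\sqrt{D_{\mathrm{KL}}(\nu_1\|\mathcal{L}_{\cdot})}=O(\epsilon c)$, completing the proof.

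\textbf{Main obstacle.} The delicate part is Step~2: $\varphi$ is only \emph{locally} Lipschitz, singular where eigengaps of $Z+Z^\ast$ collapse, so the pushforward of the KL bound must be carried out on a ball small enough that the $\mathrm{poly}(d)$ bounds on $\nabla\varphi$, $\nabla^2\varphi$ (Assumption~\ref{assumption_Lipschitz}, Lemma~\ref{Lemma_average_case_Lipschitz}) hold throughout, while simultaneously quantifying the error between the exact Gaussian pushforward and the It\^o first-order surrogate $\nu_1$ — this is precisely what forces the scale $\hat\Delta\le O(1/(cd))$ and produces the harmless factor $(1+\hat\Delta c)^d\le 2$. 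The Gaussian-KL computation in Step~3 and the Pinsker/coupling reduction in Step~1 are then routine given the Lipschitz inputs of Lemma~\ref{lemma_Lipschitz} and the Wasserstein estimate of Lemma~\ref{Gronwall}.
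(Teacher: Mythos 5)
Your proof is correct and follows essentially the same route as the paper's: a maximal coupling at time $t$ to peel off the TV distance, the Gronwall-type Lemma~\ref{Gronwall} under the shared-Brownian coupling for the Wasserstein bound over $[t,t+\tau]$, a Girsanov/Euclidean-score argument pushed forward through $\mathrm{exp}^{-1}\circ\varphi$ for the first KL term, the closed-form Gaussian KL for the second, and Pinsker to conclude. The added explicitness in Step~1 (the event $E$ and the explicit triangle inequality through $\nu_1$) is a welcome clarification of the paper's terser presentation, but it is the same argument, not a genuinely different decomposition.
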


\begin{proof}[Proof of Lemma \ref{Lemma_Wasserstein_to_TV_conversion}]
Now that we have shown that $f^\star$ and $g^\star$ are $\mathrm{poly}(d)$-Lipschitz (by Lemmas \ref{Lemma_average_case_Lipschitz} and \ref{lemma_Lipschitz}), we can apply Lemma \ref{Gronwall} to bound the Wasserstein distance:
$W_2(\hat{Y}_{t+\tau}, Y_{t+\tau})\leq (\rho^2(\hat{Y}_t, Y_t) + \epsilon) e^{c \tau} \qquad \forall \tau \geq 0,$
where $c \leq \mathrm{poly}(d)$.

Moreover, with slight abuse of notation, we may define $\hat{y}_{t+\tau}$ to be a continuous-time interpolation of the discrete process $\hat{y}$.
Applying \eqref{eq_r2} to this process we get that, roughly,
$W_2(\hat{Y}_{t+\tau}, \hat{y}_{t+\tau})\leq (\rho^2(\hat{y}_t, Y_t) + \epsilon + \Delta) e^{c \tau}$ for $\tau \geq 0$.
Thus, we get a bound on the Wasserstein error,
\begin{equation}\label{eq_r6b}
W_2(Y_{t+\tau}, \hat{y}_{t+\tau}) \leq W_2(\hat{Y}_{t+\tau}, Y_{t+\tau}) + W_2(\hat{Y}_{t+\tau}, \hat{y}_{t+\tau}) \leq  (\rho^2(\hat{y}_t, Y_t) + \epsilon + \Delta) e^{c \tau} \qquad \tau \geq 0
\end{equation}
Unfortunately, after times $\tau > \frac{1}{c} = \frac{1}{\mathrm{poly(d)}},$ this bound grows exponentially with the dimension $d$.

To overcome this challenge, we define a new coupling between $Y_t$ and $\hat{Y}_t$ which we ``reset'' after time intervals of length $\tau = \frac{1}{c}$ by converting our Wasserstein bound into a total variation bound after each time interval.
  Towards this end, we use the fact that if at any time $t$ the total variation distance satisfies $\|\mathcal{L}_{Y_t} - \mathcal{L}_{\hat{y}_t}\|_{\mathrm{TV}}\leq \alpha$, then there exists a coupling such that $Y_t = \hat{Y}_t$ with probability at least $1-\alpha$.
  In other words, w.p. $\geq 1-\alpha$, we have $\rho(\hat{y}_{t+\tau}, Y_{t+\tau}) = 0$, and we can apply inequality \eqref{eq_r6b} over the next time interval of $\tau$ without incurring an exponential growth in time.
Repeating this process $\frac{T}{\tau}$ times, we get that $\|\mathcal{L}_{Y_T} - \mathcal{L}_{\hat{y}_T}\| \leq \alpha \times \frac{T}{\tau}$, where the TV error grows only {\em linearly} with $T$.

\paragraph{Converting Wasserstein bounds on the manifold to TV bounds.}
To complete the proof, we still need to show how to convert the Wasserstein bound into a TV bound.
Towards this end, we begin by showing that the transition kernel $\tilde{p}_{t+\tau+\hat{\Delta} | t+\tau}(\, \cdot \,| H_{t+\tau})$ of the reverse diffusion $H_t$ in $\mathbb{R}^d$ is close to a Gaussian in KL distance over short time steps $\hat{\Delta}$:  $$D_{\mathrm{KL}} (N(H_{t+\tau} + \hat{\Delta} \nabla \tilde{p}_{T-t-\tau}(H_{t+\tau}), \hat{\Delta} I_d) \, \| \, \tilde{p}_{t+\tau+\hat{\Delta} | t+\tau}(\, \cdot \,| H_{t+\tau} )) \leq \frac{\alpha \tau}{T}.$$
One can do this using Girsanov's theorem, since, unlike the diffusion $Y_t$ on the manifold, the reverse diffusion in Euclidean space $H_t$ {\em does} have a constant diffusion term (see e.g. Theorem 9 of \cite{chen2023sampling}).

Next, we use the fact that with probability at least $1-\alpha \frac{\tau}{T}$ the map $\varphi$ in a ball of radius $\frac{1}{\mathrm{poly}(d)}$ about the point $H_{t+\tau}$ has $c$-Lipschitz Jacobian where $c=\mathrm{poly}(d)$, and that the inverse of the exponential map $\mathrm{exp}(\cdot)$ has $O(1)$-Lipschitz Jacobian, to show that the transition kernel $p_t$ of $Y_t = \varphi(H_t)$ satisfies
$$D_{\mathrm{KL}} (\nu_1 \, \| \, p_{t+\tau+\hat{\Delta} | t+\tau}(\, \cdot \,| Y_{t+\tau} )) \leq (1+\hat{\Delta} c)^d \frac{\alpha \tau}{T} \leq 2\frac{\alpha \tau}{T},$$ if we choose $\hat{\Delta} \leq O(\frac{1}{c d})$, where $\nu_1:= \mathrm{exp}_{Y_{t+\tau}}(N(Y_{t+\tau} + \hat{\Delta} f^\star(Y_{t+\tau},t+\tau), \, \, \hat{\Delta} g^{\star 2}(Y_{t+\tau},t+\tau) I_d))$.

Next, we plug in our Wasserstein bound $W(Y_{t+\tau}, \hat{y}_{t+\tau}) \leq O(\epsilon)$  
into the formula for the KL divergence between two Gaussians to bound $\|\mathcal{L}_{Y_{t+\tau+\hat{\Delta}}} - \mathcal{L}_{\hat{y}_{t+\tau+\hat{\Delta}}}\|_{\mathrm{TV}}$.
Specifically, noting that $\mathcal{L}_{\hat{y}_{t+\tau+\hat{\Delta}}|\hat{y}_{t}} = \mathrm{exp}_{\hat{y}_{t+\tau}}(N(\hat{y}_{t+\tau} + \hat{\Delta} f(\hat{y}_{t+\tau},t+\tau), \hat{\Delta} g^2(\hat{y}_{t+\tau},t+\tau) I_d))$, we have that 
\begin{align*}
    D_{\mathrm{KL}}(\nu_1, \mathcal{L}_{\hat{y}_{t+\tau+\hat{\Delta}}|\hat{y}_{t+\tau}}) &=  \mathrm{Tr}\left((g^{\star 2}(Y_{t+\tau}, t+\tau))^{-1} g^2(\hat{y}_{t+\tau}, t+\tau)\right)\\
    &\qquad - d + \log \frac{\mathrm{det}\, g^{\star 2}(Y_{t+\tau}, t+\tau)}{\mathrm{det} \, g^2(\hat{y}_{t+\tau}, t+\tau)}
 +  w^\top (\hat{\Delta}g^{\star 2}(Y_{t+\tau}, t))^{-1} w,
\end{align*}
where $w:=Y_{t+\tau} - \hat{y}_{t+\tau} + \hat{\Delta}(f^\star(Y_{t+\tau}, t+\tau) - f(\hat{y}_{t+\tau}, t+\tau))$.
Since with probability $\geq 1-\alpha \frac{\tau}{T}$ we have $g^\star(Y_{t+\tau}) \succeq \mathrm{poly}(d)$, plugging in the error bounds $\|f^\star(Y_{t+\tau},t) - f(Y_{t+\tau},t)\| \leq \epsilon$ and $\|g^\star(Y_{t+\tau},t) - g(Y_{t+\tau},t)\|_F \leq \epsilon$ and the $c$-Lipschitz bounds on $f^\star$ and $g^\star$ due to Lemmas \ref{Lemma_average_case_Lipschitz} and \ref{lemma_Lipschitz}, where $c = \mathrm{poly}(d)$, we get that $D_{\mathrm{KL}}(\nu_1, \mathcal{L}_{\hat{y}_{t+\tau+\hat{\Delta}}}) \leq O(\epsilon^2 c^2)$.
 Thus, by Pinsker's inequality, we have
\begin{align}\label{eq_r7b}
 \|\mathcal{L}_{Y_{t+\tau+\hat{\Delta}}} - \mathcal{L}_{\hat{y}_{t+\tau+\hat{\Delta}}}\|_{\mathrm{TV}} & - \|\mathcal{L}_{Y_{t}} - \mathcal{L}_{\hat{y}_{t}}\|_{\mathrm{TV}} \nonumber\\
&\leq  \sqrt{D_{\mathrm{KL}} (\nu_1 \, \| \, p_{t+\tau+\hat{\Delta} | t+\tau}(\, \cdot \,| Y_{t+\tau} ))} + 
\sqrt{D_{\mathrm{KL}}(\nu_1 \| \mathcal{L}_{\hat{y}_{t+\tau+\hat{\Delta}} |\hat{y}_{t} })} \leq O(\epsilon c).
\end{align}
\end{proof}

\subsection{Completing the proof of Theorem \ref{thm_sampling_Manifold_best}}

\paragraph{Bounding the accuracy.}
Recall that $q_t$ is the distribution of the forward diffusion $Z_t$ in Euclidean space after time $t$, which is an Ornstein-Uhlenbeck process.
Standard mixing bounds for Ornstein-Uhlenbeck process imply that $$\|q_t - N(0,I_d)\|_{TV} \leq O(Ce^{-t})$$ for all $t>0$ (see e.g. \cite{bakry2014analysis}). 
Thus, it is sufficient to choose $T = \log(\frac{1}{C\epsilon})$ to ensure that  $$\|\mathcal{L}_{Y_{T}} - \pi \|_{\mathrm{TV}} = \|q_T - N(0,I_d) \|_{\mathrm{TV}} \leq O(\epsilon).$$
As Lemma \ref{Lemma_Wasserstein_to_TV_conversion} holds for all $t \in [t_0,T]$,  the distribution $\nu = \mathcal{L}_{\hat{y}_{T}}$ of our sampling algorithm's output satisfies

$\|\pi- \nu\|_{\mathrm{TV}}  =  \|\mathcal{L}_{Y_{T}} - \pi \|_{\mathrm{TV}} +  \|\mathcal{L}_{Y_{T}} - \nu\|_{\mathrm{TV}} \leq O(\epsilon) + O(\epsilon c \times \frac{T}{\tau}) = O(\epsilon \times \mathrm{poly}(d))$.

\smallskip
\noindent
 \textbf{Bounding the runtime of the sampling algorithm.}
 Since our accuracy bound requires $T = \log(\frac{d}{\epsilon C})$, and requires a time-step size of $\Delta \leq \frac{1}{\mathrm{poly}(d)}$, the number of iterations is bounded by 
$$\frac{T}{\Delta} \leq O\left(\mathrm{poly}(d)   \times \log\left(\frac{d}{\epsilon C}\right)\right).$$

\subsection{Proof sketch for extension of sampling guarantees to special orthogonal group}\label{appendix_SO_sketch}

Similar techniques to those used in the case of the complex unitary group can be used to bound the accuracy and runtime of our sampling algorithm in the case of the real special orthogonal group.
   However, in the case of the special orthogonal group we encounter the additional challenge that, due to weaker ``electrical repulsion'' between eigenvalues of real-valued random matrices, with high probability $\Omega(1)$ the gaps between neighboring eigenvalues $\gamma_{i+1}(t) - \gamma_i(t)$ may become  exponentially small in $d$, over very short time intervals of length $O(\frac{1}{e^d})$.
To overcome this challenge, we first note that, one can show that the gaps between {\em non-neighboring} eigenvalues do satisfy a polynomial lower bound at every time $t$ w.h.p. (see e.g.  \cite{anderson2010introduction, mangoubi2023private, mangoubi2025private}):
\begin{equation}\label{eq_r3c}  
\mathbb{P} \left( \bigcap_{t\in [t_0, T]} 
\left\{ \gamma_{i+2}(t) - \gamma_i(t) \leq s \frac{1}{\sqrt{n} \sqrt{t} } \right\} \right) 
\leq O\left(s^{1.5}\right).
\end{equation}
Moreover, one can also show that, except over at most $O(n^{1.5})$ ``bad'' time intervals $[\tau_j,\tau_j + \Delta_j]$, each of length e.g. $O(\frac{1}{n^5})$, the gaps between all neighboring eigenvalues are at least $\frac{1}{n^{10}}$. 

From the matrix calculus formula for $\varphi$ one can show that the SDE for the eigenvectors of the forward diffusion satisfy (these evolution equations, discovered by Dyson, are referred to as Dyson Brownian motion \cite{dyson1962brownian})
\begin{eqnarray}
\mathrm{d} \gamma_i(t) &=& \mathrm{d}B_{ii}(t) + \sum_{j \neq i} \frac{\mathrm{d}t}{\gamma_i(t) - \gamma_j(t)}, \label{eq_dyson_eigenvalue}\\
\mathrm{d} u_i(t) &=& \sum_{j \neq i} \frac{\mathrm{d}B_{ij}(t)}{\gamma_i(t)- \gamma_j(t)} u_j(t) - \frac{1}{2} \sum_{j \neq i} \frac{\mathrm{d}t}{(\gamma_i(t) - \gamma_j(t))^2} u_i(t) \qquad \forall i \in [n]. \label{eq_dyson_eigenvector}
\end{eqnarray}
From \eqref{eq_r3c}, one can see that over the ``bad'' time intervals  $[a_i,b_i]$, each eigenvalue $\gamma_i(t)$ has at most one neighboring eigenvalue, say $\gamma_{i+1}(t)$, with small gap $\gamma_i(t) - \gamma_{i+1}(t) \leq O(\frac{1}{\sqrt{d}})$ w.h.p.
 Roughly speaking, this implies that only the interactions in \eqref{eq_dyson_eigenvector} between eigenvectors with neighboring eigenvalues which fall below $O(\frac{1}{n^{10}})$ are significant, while interactions between eigenvectors with larger eigenvalue gaps are negligible over these short time intervals.
 Thus, one can analyze the evolution of the eigenvectors \eqref{eq_r3b} over these short time intervals as a collection of separable two-body problems consisting of interactions between pair(s) of eigenvectors.

More precisely, using \eqref{eq_dyson_eigenvalue}, one can show that over the bad time intervals $[\tau_j,\tau_j + \Delta_j]$, any eigenvalue gap which falls below $\frac{1}{n^{10}}$, also remains below $\frac{1}{n^{8}}$ over a time sub-interval of length at least $\Omega(\frac{1}{(n^{8})^2})$ w.h.p.
 This is because eigenvalues $\gamma_i(t)$ and $\gamma_{i+1}(t)$ in \eqref{eq_dyson_eigenvalue} repel with an ``electrical force'' proportional to $\frac{1}{\gamma_i(t) - \gamma_{i+1}(t)}$, which implies that the eigenvalues gaps expand at a rate proportional to $\sqrt{t}$ (the stochastic term $\mathrm{d}B_{ii}(t)$ also leads to the same  $\sqrt{t}$ expansion rate).
Thus, using the evolution equations \eqref{eq_dyson_eigenvector}, one can show that, over the short bad intervals, the distribution of $[u_i(\tau_j + \Delta_j), u_{i+1}(\tau_j + \Delta_j)]  | U(\tau_j)$ is   $\frac{1}{\mathrm{poly}(d)}$-close in Wasserstein distance to the invariant (Haar) measure with respect to  the action of $\mathrm{SO}(2)$ on $[u_i(\tau_j), u_{i+1}(\tau_j)]$.
This is because (by the It\^o isometry) the time-averaged variance in $u_i(t)$ over this time interval is proportional to $\frac{1}{\Delta_j} \int_{\tau_j}^{\tau_j+\Delta_j}\frac{1}{(\gamma_i(t)- \gamma_j(t))^2} \mathrm{d}t \approx \int_{n^{10}}^{n^{8}} {\frac{1}{(\sqrt{t})^2}} \mathrm{dt} = \log(\frac{1}{n^{8}}) - \log(\frac{1}{n^{10}}) = (10 - 8) \log(n) = \Omega(\log(n))$.
But the diameter of the 2-dimensional manifold (which is isomorphic to $\mathrm{SO}(2)$) on which $u_i(t)$ and $u_{i+1}(t)$ (approximately) lie is $O(1)$.
Thus, after the time interval  $[\tau_j,\tau_j + \Delta_j]$, the two neighboring eigenvectors $[u_i(\tau_j + \Delta_j), u_{i+1}(\tau_j + \Delta_j)]  | U(\tau_j)$ are (approximately) distributed according to the Haar measure with respect to the action of $\mathrm{SO}(2)$ on $[u_i(\tau_j), u_{i+1}(\tau_j)]$.
Thus, one can show that as long as one uses a numerical step size $\Delta \leq O(\frac{1}{\mathrm{poly}(d)})$, the transition kernel of both the continuous-time reverse diffusion $Y_t$ and the numerical simulation $\hat{y}_t$ over the time interval  $[\tau_j,\tau_j + \Delta_j]$ will be very close (within Wasserstein distance $\frac{\Delta}{\mathrm{poly}(d)}$) to the Haar measure on the action of $\mathrm{SO}(2)$ on $[u_i(\tau_j), u_{i+1}(\tau_j)]$.
As the Lipschitz property does hold outside the bad intervals $[\tau_j,\tau_j + \Delta_j]$, the remainder of the proof follows in the same way as for the case of $\mathrm{U}(n)$.

\section{Conclusion and future work}

We introduce a new diffusion model with a spatially varying covariance structure, enabling efficient training on symmetric manifolds with non-zero curvature. By leveraging manifold symmetries, we ensure the reverse diffusion satisfies an ``average-case'' Lipschitz condition, which underpins both the accuracy and efficiency of our sampling algorithm.

Our approach improves training runtime and sample quality on symmetric manifolds, significantly narrowing the gap between manifold-based diffusion models and their Euclidean counterparts. Furthermore, the model naturally extends to conditional generation: given a conditioning variable $y$, one can feed $y$ as an additional input to the learned drift and covariance functions, mirroring conditional diffusion models in Euclidean settings.

Several open directions remain. One is to extend our framework to more general manifolds—such as the manifold of positive semi-definite matrices, or other domains admitting a projection oracle satisfying suitable average-case smoothness properties (see Appendix~\ref{section_non_symmetric}). Another direction is to handle distributions supported on a union of manifolds with varying dimensions, such as the GEOM-DRUGS dataset \cite{jing2022torsional}, which lies on a union of tori.

Finally, while our method yields polynomial-in-$d$ bounds on sampling accuracy—improving upon prior works that lacked such guarantees—tightening this dependence remains an important challenge for future research.

\section*{Acknowledgments}
OM was supported in part by a Google Research Scholar award. 
 NV was supported in part by NSF CCF-2112665. 

\newpage

\bibliography{DP}
\bibliographystyle{plain}

\newpage
\appendix
\onecolumn

\section{Additional simulation details}\label{sec_simulation_details}

\subsection{Datasets} \label{appendix_datasets}

 Given a $d$-dimensional Riemannian manifold $\mathcal{M}$, a number of mixture components $k \in \mathbb{N}$, points $m_1,\ldots, m_k \in \mathcal{M}$ and covariance matrices $C_1,\ldots, C_k \in \mathbb{R}^{d \times d}$, we say that a random variable is distributed according to a wrapped Gaussian distribution with means  $m_1,\cdots, m_k$ and covariances $C_1,\cdots, C_k$ (with equal weights on each component) if its distribution is equal to that of a random variable $X$ sampled as follows:

\begin{enumerate}
\item Sample an index $i$ at random from $\{1,\ldots, k\}$.
\item Sample $Z \sim N(0, C_i)$
\item Set  $X = \mathrm{exp}_{m_i}(Z)$,
\end{enumerate}
where $\mathrm{exp}_{x}(\cdot)$ denotes the exponential map at any point $x \in \mathcal{M}$.

 \smallskip
\noindent
{\bf Datasets on the Torus $\mathbb{T}_d$.}
The synthetic dataset is sampled from a single-wrapped Gaussian distribution, with mean at the origin,  $(0, \ldots, 0)^T$ and covariance matrix $0.2 I_d$. A total of 30,000 points were sampled as the training dataset, and 10,000 were sampled as a test dataset to compute the log-likelihood of the generative model outputs.

\smallskip
\noindent
{\bf Datasets on the special orthogonal group $\mathrm{SO}(n)$.}
The dataset is constructed by first picking 2 random means $m_1, m_2 \in \mathrm{SO}(n)$ sampled from the uniform measure on the orthogonal group $\mathrm{SO}(n)$  (i.e., the invariant measure with respect to actions by the orthogonal group). We then sample 40,000 matrices from the wrapped Gaussian mixture distribution on $\mathrm{SO}(n)$ with means $m_1, m_2$ and covariance $0.2 I_d$.
30,000 of these matrices are used for training, and the remaining 10,000 matrices comprise the test dataset used to evaluate the C2ST score of the generative model outputs.

\smallskip
\noindent
{\bf Datasets on the unitary group $\mathrm{U}(n)$.}
 We use a dataset on $\mathrm{U}(n)$ of 
  unitary matrices representing time-evolution operators $e^{itH}$ of a quantum 
   oscillator.
   Here $t$ is time, $H = \frac{\hbar}{2m}\Delta - V$ is a Hamiltonian, and $\Delta$ is the Laplacian. 
$V$ is a random potential $V(x) = \frac{\omega^2}{2}\| x - x_0 \|^2$ with angular momentum $\omega$ sampled uniformly on $[2, 3]$ and $x_0\sim \mathcal{N}(0, 1)$. 
As $\Delta, V$ are infinite-dimensional operators, matrices in $\mathrm{U}(n)$ are obtained by retaining the  (discretized) top-$n$ eigenvectors of $\Delta, V$.

\subsection{Neural Network architecture, Training Hyperparameters, and hardware} \label{sec_training_details}

\textbf{Torus.} 
  In the case of the torus, the neural network architecture consists of a 4-layer MLP with a hidden dimension of $k$, with a $\sin$ activation function.
   We set $k = 512$ for $d < 1000$ and $k = 2048$ for $d = 1000$. 
   
   The models were trained with a batch size of 512, with an appropriate variance scheduler.
   For each model, we trained the neural networks for 50K iterations when $d < 1000$, and for $100K$ iterations when $d = 1000$.

\noindent
  \textbf{Special Orthogonal Group.} 
  In the case of the special orthogonal group, the neural network architecture consists of a 4-layer MLP with a hidden dimension $k = 512$, with a $\sin$ activation function.

   For each model, the neural networks were trained for 100K iterations, with a batch size of 512, and an appropriate variance scheduler.

\noindent
\textbf{Unitary Group.} 
Following TDM~\cite{zhu2024trivialized}, when training each model on the unitary group,  we use a more complicated neural network than on the torus and special orthogonal group, to accommodate the more complicated quantum evolution operator datasets used in our simulations on the unitary group.
For both the drift and diffusion terms, let $(X_i, t_i)$ be inputs into the neural network, where $X_i\in\mathrm{U}(n)$ and $t_i\in [0, T]$ is time.  
 The output of the neural network is then given by \[
\hat{X_i} = \mathrm{MLP}(\mathrm{N_G}(\mathrm{MLP_S}(X_i), \mathrm{Emb}_\mathrm{sin}(t_i)))
\] 
where $\mathrm{MLP}$ is a 2-layer multi-layer perceptron of dimension $D$, $\mathrm{MLP_S}$ is $k$ skip-connected MLP layers of dimension $D$, $\mathrm{Emb_\mathrm{sin}}$ is a sinusoid
embedding of dimension $D$, and $\mathrm{N_G}$ denotes group normalization. In our simulations, we set $k = 8$ and $D = 512$. 
For the drift term $\hat{f}(\cdot, \cdot)$ in each of the models, the final output is given by $X_{\mathrm{drift}} = \mathrm{proj}_{T_{X_i}\mathrm{U}(n)}(\hat{X})$, where $\mathrm{proj}_{T_{X_i}\mathrm{U}(n)}$ is the projection onto the tangent space at $X_i$. 
For the diffusion term $\hat{g}(\cdot, \cdot)$ in our model, the neural network outputs a vector of dimension $d$. 

  For each model, the neural networks were trained for 80K iterations, with a batch size of 512, and an appropriate variance scheduler. 

\smallskip
\noindent
\textbf{Hardware.} Simulations evaluating sample quality on the Torus were run on an Apple M1 chip with 10 cores. Simulations on the special orthogonal group and unitary group were run on a single RTX 3070. All simulations evaluating per-iteration training runtime were run on a single RTX 3070 as well.

\smallskip
\noindent
\subsection{Evaluation metrics}\label{appendix_evaluation_metrics}
In this section, we define the metrics used in our simulations.

\smallskip
\noindent
\textbf{Log-likelihood metric.}
Let $D=\{x_1, x_2, \ldots, x_n\}$ be a synthetic dataset arising from a target distribution with density function $g$. We train the generative model $A$ on the dataset $D$. Next, we generate points $y_1^A,\ldots, y_n^A$ which are outputs of the trained model $A$. Since the points $y_1^A,\ldots, y_n^A$ are generated independently, the likelihood of generating the points $y_1^A,\ldots, y_n^A$  given the target distribution $g$ is given by
\begin{equation}
\prod_{i=1}^n g(y_i^A)
\end{equation}
The (average) log-likelihood of the generated points $y_1^A,\ldots, y_n^A$ with respect to the target density $g$ is therefore
\begin{equation}\label{eq_NLL_2}
\frac{1}{n} \sum_{i=1}^n \log g(y_i^A)
\end{equation}
\textbf{C2ST metric.}
Suppose we have two distributions $P, Q$ and sampled points $S_P, S_Q$ where $S_P\sim P, S_Q\sim Q$.  We denote the number of sampled points by $m:= |S_P| = |S_Q|$. 
One motivation behind the Classifier Two-Sample Test (C2ST) metric \cite{lopez-paz2017revisiting} is to perform a hypothesis test, where one wishes to decide whether to reject or accept the null hypothesis $P = Q$. 
 By reporting the test statistic from this hypothesis test, the  C2ST metric can also be used to evaluate the quality of samples generated by a generative model, we do in our simulations.

To compute the C2ST metric, 
construct a dataset $D$ where \[
\mathcal{D} = \left\{(x_i, 0)\right\}_{i=1}^m \cup \left\{(y_i, 1)\right\}_{i=1}^m := \left\{(z_i, l_i)\right\}_{i=1}^{2m}.
\]
and where $x_i\in S_P$ and $y_i\in S_Q$. Partition $D$ randomly into training and test datasets $D_{tr}$ and $D_{te}$ where $m_{te}:= |D_{te}|$ denotes the number of points in the test dataset. Suppose $f:D\to [0, 1]$ is a binary classifier trained on $D_{tr}$ where $f(z_i) = P(l_i = 1|z_i)$, then the value for the C2ST metric is computed as \begin{equation}
    \hat{t} = \frac{1}{m_{te}} \sum_{(z_i, l_i) \in \mathcal{D}_{te}} \mathbbm{1} \left[ \mathbbm{1} \left( f(z_i) > \frac{1}{2} \right) = l_i \right]\label{c2st_eq}
\end{equation}
where $\mathbbm{1}$ is the indicator function.
The null distribution is  approximately $\mathcal{N}\left(\frac{1}{2}, \frac{1}{4m_{te}}\right)$. Then if $P = Q$ we would have $\hat{t}\to 0.5$. Whether to reject the null hypothesis can be done by performing a $p$-value analysis using \ref{c2st_eq} and the null hypothesis. For the sake of comparison between model performance, we report the value of $\hat{t}$ instead. 

More specifically, as the statistic $\hat{t}$ can take values greater than or smaller than $0.5$, we report the value \begin{equation}
    \left|\hat{t} - \frac{1}{2}\right| + \frac{1}{2}
\end{equation}
where $\hat{t}$ is computed using \ref{c2st_eq}.

\subsection{Additional results}\label{appendix_visual_results}
In this section, we provide additional empirical results.

\smallskip
\noindent
\textbf{Visual results on the torus $\mathbb{T}_d$.}
In \ref{tori} we show points generated on the torus by the Euclidean diffusion model, the RSGM, and our model when trained on a dataset sampled from a wrapped Gaussian distribution. 
  For the 2D torus, we plot the result as a 2D scatter plot. For higher dimensional tori, for any sampled point $x\in \mathbb{T}_d$, the plot shows the first two angle coordinates $(x_0, x_1)$ of each generated point on the torus.
We observe that, for dimensions $d \geq 100$, our model appears to generate points that visually resemble those of the target distribution more closely than the points generated by the Euclidean diffusion model or the RSGM model.

\begin{figure} 
    \centering
       \includegraphics[width=0.8\linewidth]{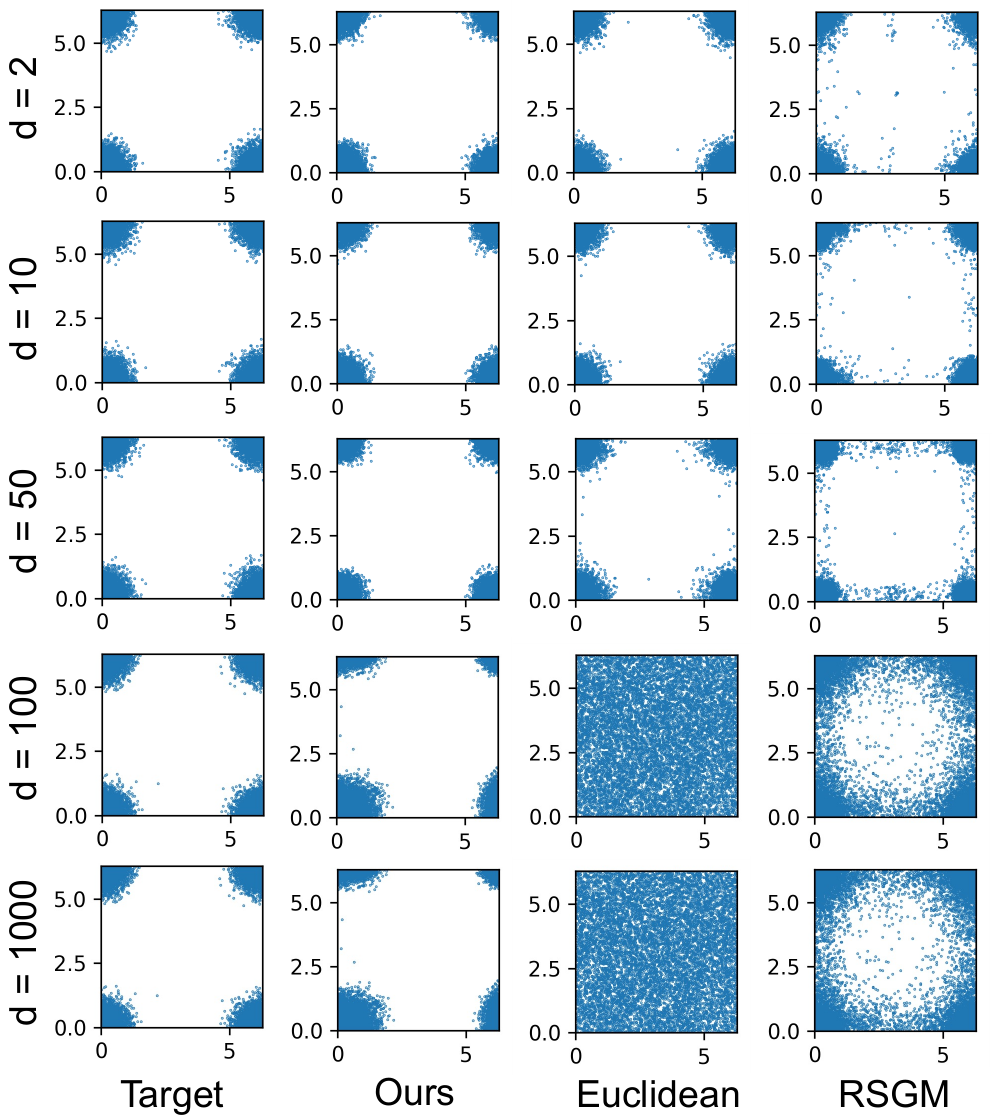}
   \caption{Points generated by different models when training on a dataset sampled from a wrapped Gaussian target distribution on the torus of different dimensions $d \in \{2,10,50,100,1000\}$.}\label{tori}
\end{figure}

\smallskip
\noindent
\textbf{C2ST score and visual results on the special orthogonal group $\mathrm{SO}(n)$.}
We train our model, a Euclidean diffusion model, RSGM, and TDM on a dataset sampled from a mixture of two wrapped Gaussian distributions on $\mathrm{SO}(n)$ for $n \in \{3, 5,  9, 12, 15\}$. 
  For $n \geq 9$, our model achieves the lowest C2ST score; a lower C2ST score indicates higher-quality sample generation (Table \ref{c2st}).

The visual results for our model, the Euclidean model, the RSGM, and TDM are shown in \ref{SOn}.
Here we plot the first and second entries of the first row of each generated matrix in $\mathrm{SO}(n)$ 
 Note that the target distribution is bimodal, yet the two modes appear visually as a single mode as we are only plotting two of the matrix coordinates.

  \begin{table}[]
\centering
\caption{C2ST scores when training on a wrapped Gaussian mixture dataset on $\mathrm{SO}(n)$. 
Lower scores indicate better-quality sample generation (range is $[0.5,1]$, and $0.5$ is optimal).
For $n \geq 9$, our model achieves the best C2ST scores.
}
\vspace{4mm}
\label{c2st}
\begin{tabular}{@{}l@{\hspace{6pt}}lllll@{}}
\hline
\textbf{Method} & $n = 3$        & $n = 5$       & $n = 9$       & $n = 12$ & $n = 15$     \\ \hline
Euclidean       & $\mathbf{.51.\pm\!.01}$ & $\mathbf{.51.\pm\!.01}$ & $.62 .\pm\!.02$ & $.64.\pm\!.02$ & $.72.\pm\!.02$\\
RSGM            & $\mathbf{.51.\pm\!.01}$ & $.57.\pm\!.02$ & $.74.\pm\!.01$ & $.81.\pm\!.02$ & $.90.\pm\!.02$\\
TDM       & $.52.\pm\!.01$ &$.53.\pm\!.01$  & $.69.\pm\!.03$ &$.73.\pm\!.02$  & $.79.\pm\!.03$\\
Ours            & $.55.\pm\!.01$ & $.56.\pm\!.02$ & $\mathbf{.60.\pm\!.03}$ & $\mathbf{.61.\pm\!.02}$ & $\mathbf{.67.\pm\!.03}$\\ \hline
\end{tabular}%
\end{table}

\begin{figure} 
    \centering
    \includegraphics[width=0.8\linewidth]{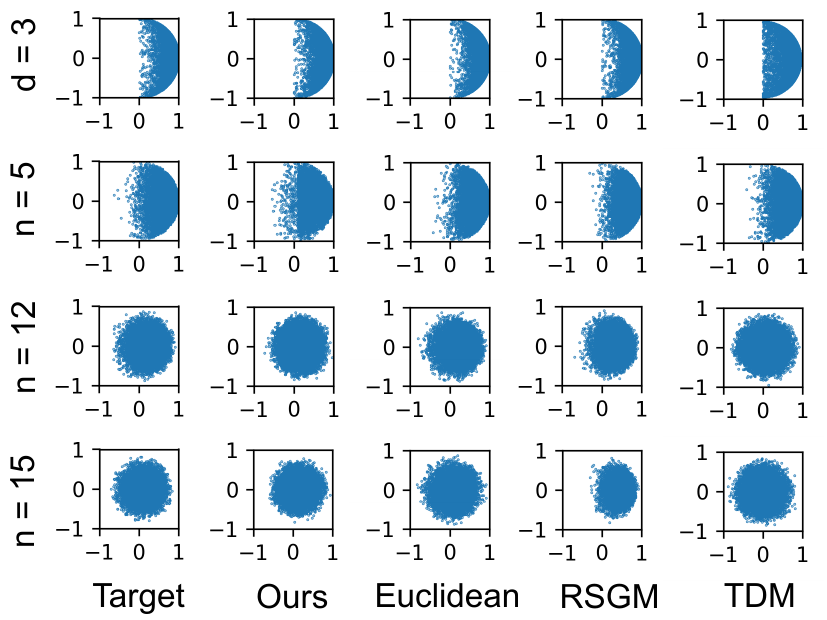}
   \caption{Points generated by different models trained on a Gaussian mixture dataset on  $\mathrm{SO}(n)$ for different values of $n$.}
    \label{SOn}
\end{figure}

\smallskip
\noindent
\textbf{Additional visual results on the unitary group $\mathrm{U}(n)$.}
Visual sample generation results on  $\mathrm{U}(n)$ were shown in Figure \ref{un_table} of Section \ref{sec_empirical} for $n=15$.
 In Figure \ref{un_visual_appendix}, we give visual results for additional values of $n$. 

 \begin{figure} 
    \centering
       \includegraphics[width=0.8\linewidth, trim={0 7cm 0 0}, clip]{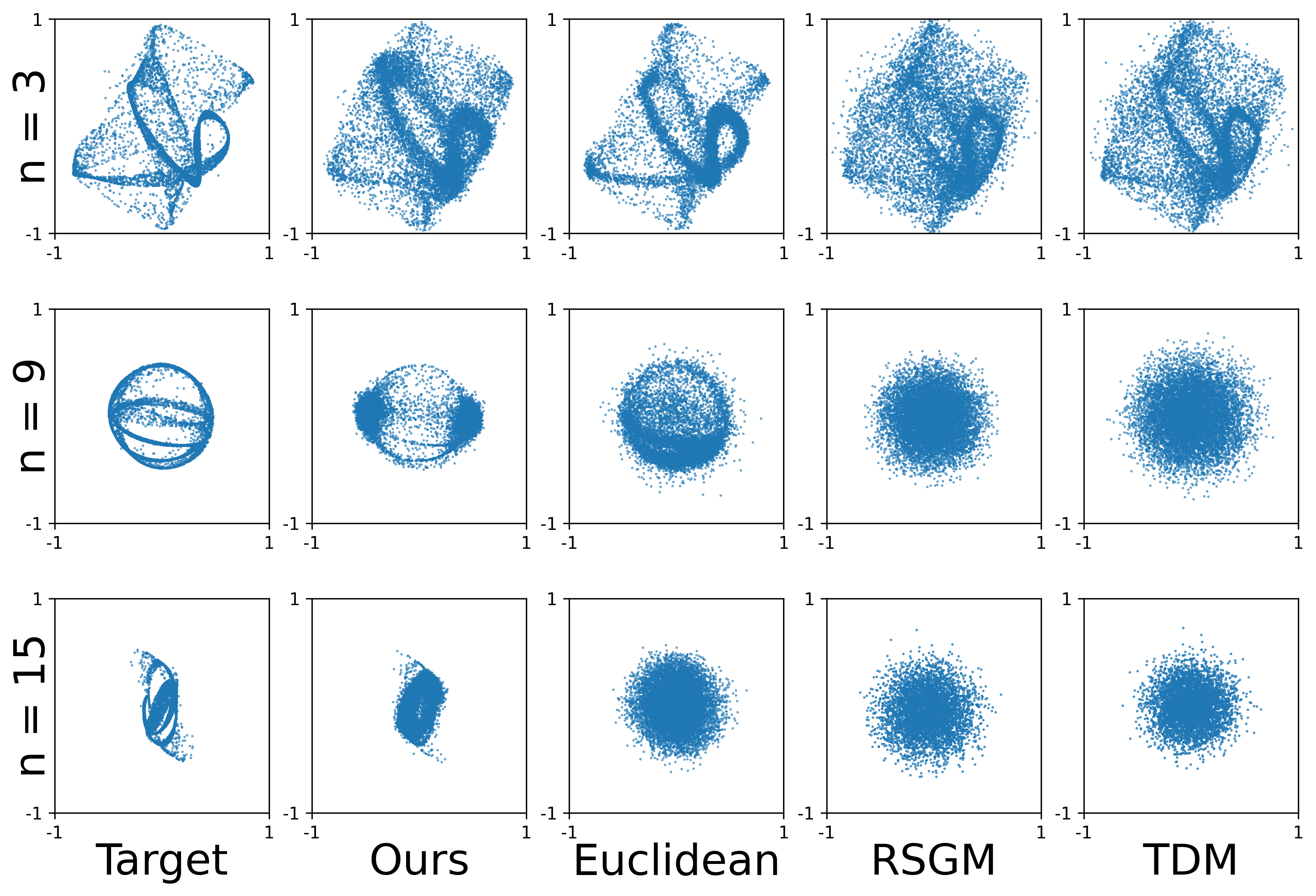}
              \includegraphics[width=0.8\linewidth]{Un_visuals_n15_large.png}

   \caption{Points generated on $\mathrm{U}(n)$ for different values of $n$, when training on datasets comprising time-evolution operators of quantum harmonic oscillators with random potentials.   For $n = 9$ and $n = 15$, we observe that our model generates samples resembling the data distribution, while the Euclidean, RSGM, and TDM models generate lower-quality samples.}
    \label{un_visual_appendix}
\end{figure}

\smallskip
\noindent
\textbf{Runtime on the torus $\mathbb{T}_d$.}
\ref{runtime_torus} gives the per-iteration runtime of the Euclidean model, our model, TDM, and RSGM on the torus, for dimensions $d\in\{2, 10, 50, 100, 1000\}$. We observe that for each of these dimensions, our method has similar runtime to the Euclidean model, whereas RSGM is roughly 9 times slower than the Euclidean model.

\smallskip
\noindent
\textbf{Runtime on the special orthogonal group $\mathrm{SO}(n)$.}
\ref{runtime_SOn} shows the per-iteration runtime of the Euclidean model, our model, RSGM and TDM, on the special orthogonal group $\mathrm{SO}(n)$, for $n\in\{3, 5, 10, 30, 50\}$.
 We observe that our model's per-iteration training runtime remains within a factor of $1.3$ of the Euclidean model for all $n$. However, the per-iteration training runtimes of TDM and RSGM increase more rapidly with dimension and are, respectively, 51 and 66 times greater than the Euclidean model for $n=50$.

  \begin{table}[]
\centering
\caption{Per-iteration training runtime in seconds on the Torus. 
    The manifold-constrained diffusion model with the fastest runtime is in bold; the Euclidean model is in gray for comparison.
      For each dimension $d$, our model achieves a similar runtime to the Euclidean model, whereas RSGM is roughly 9 times slower.} 
     \vspace{4mm}
    \label{runtime_torus}
\resizebox{0.8\textwidth}{!}{%
\begin{tabular}{@{}l@{\hspace{6pt}}llllll@{}}
\hline
\textbf{Method} & $d = 2$        & $d = 10$       & $d = 50$       & $d = 100$ & $d = 1000$  \\ \hline
\color{gray} Euclidean       & $0.16\pm\!.00$ & $0.17\pm\!.00$ & $016 \pm\!.00$ & $0.17\pm\!.00$ & \color{gray} $0.18\pm\!.01$ \color{black} \color{black}  \\
\hdashline
RSGM            & $1.36\pm\!.09$ & $1.35\pm\!.11$ & $1.39\pm\!.07$ & $1.36\pm\!.06$ & $1.42\pm\!.08$\\
\textbf{Ours}            & $\mathbf{0.15\pm\!.01}$ & $\mathbf{0.15\pm\!.01}$ & $\mathbf{0.16\pm\!.01}$ & $\mathbf{0.15\pm\!.01}$ & $\mathbf{0.16\pm\!.01}$\\ \hline
\end{tabular}%
}
\end{table}

  \begin{table}[]
\centering
\caption{Per-iteration training runtime in seconds on $\mathrm{SO}(n)$. 
    The manifold-constrained diffusion model with the fastest runtime is in bold; the Euclidean model is in gray for comparison.
      Our model's runtime remains within a factor of $1.3$ of the Euclidean model for all $n$. Runtimes of TDM and RSGM increase more rapidly with dimension and are 51 and 66 times greater than the Euclidean model for $n=50$.} 
  \vspace{4mm}    
      \label{runtime_SOn}
\resizebox{0.8\textwidth}{!}{%
\begin{tabular}{@{}l@{\hspace{6pt}}llllll@{}}
\hline
\textbf{Method} & $n = 3$        & $n = 5$       & $n = 10$       & $n = 30$ & $n = 50$  \\ \hline
\color{gray} Euclidean       & $0.13\pm\!.01$ & $0.12\pm\!.01$ & $0.13 \pm\!.00$ & $0.12\pm\!.01$ & \color{gray} $0.15\pm\!.00$ \color{black} \color{black}  \\
\hdashline
RSGM            & $0.73\pm\!.01$ & $0.96\pm\!.08$ & $1.18\pm\!.01$ & $2.99\pm\!.12$ & $9.89\pm\!.09$\\
TDM       & $0.62\pm\!.02$ &$0.78\pm\!.05$  & $1.67\pm\!.04$ &$2.85\pm\!.09$  & $7.63\pm\!.12$\\
\textbf{Ours}            & $\mathbf{0.13\pm\!.01}$ & $\mathbf{0.13\pm\!.01}$ & $\mathbf{0.14\pm\!.00}$ & $\mathbf{0.14\pm\!.01}$ & $\mathbf{0.20\pm\!.01}$\\ \hline
\end{tabular}%
}
\end{table}

\section{Challenges encountered when applying Euclidean diffusion for generating points constrained to non-Euclidean symmetric manifolds}\label{sec_Euclidean_counterexamples}

The following examples illustrate why using Euclidean diffusion models to enforce symmetric manifold constraints may be insufficient.

\paragraph{Example 1.} Consider the problem of generating points from a distribution $\mu$ on the $d$-dimensional torus $\mathbb{T}_d=\mathbb{S}_1\times\cdots\times \mathbb{S}_1$, given a dataset $D$ sampled from $\mu$. A naive approach is to map the dataset $D$ from the torus to Euclidean space via the map $\psi$, which maps each point on the torus to its angles in $[0,2\pi)^d\subseteq\mathbb{R}^d$. One can then train a Euclidean diffusion model on the dataset $\psi(D)$.

However, the map $\psi$ can greatly distort the geometry of $\mu$. To see why, let $\mu$ be a  unimodal distribution on $\mathbb{T}_d$ with mode cenetered near $(0,\ldots,0)$. The pushforward of $\mu$ under $\psi$ consists of a distribution with $2^d$ modes, each near the $2^d$ corners of the $d$-cube $[0,2\pi)^d$ (see Figure \ref{fig_torus}). Thus, a Euclidean diffusion model needs to learn a multimodal distribution, which may be much harder than learning a unimodal distribution.

\paragraph{Example 2.} Another example is the problem of generating samples from a distribution on the manifold $\mathrm{SO}(3)$ of rotation matrices. There is a natural map $\psi$ from $\mathrm{SO}(3)$ to $\mathbb{R}^3$ which maps any $M\in\mathrm{SO}(3)$ to its three Euler angles $(a,b,c)\in[-\pi,\pi]\times[-\frac{\pi}{2},\frac{\pi}{2}]\times[-\pi,\pi]\subseteq\mathbb{R}^3$. However, $\psi$ has a singularity at $b=\frac{\pi}{2}$, which may make it harder to learn distributions with a region of high probability density passing through this singularity, as $\psi$ may separate this region into multiple disconnected regions. 

Additionally, it has been observed empirically that applying Euclidean diffusion models to generate Euler angles in $\mathbb{R}^3$ leads to samples of lower quality than those generated by diffusion models on the manifold $\mathrm{SO}(3)$; see e.g. \cite{leach2022denoising}, and \cite{watson2023novo}.

\begin{figure}[h!]
\begin{center}
\includegraphics[scale=0.3]{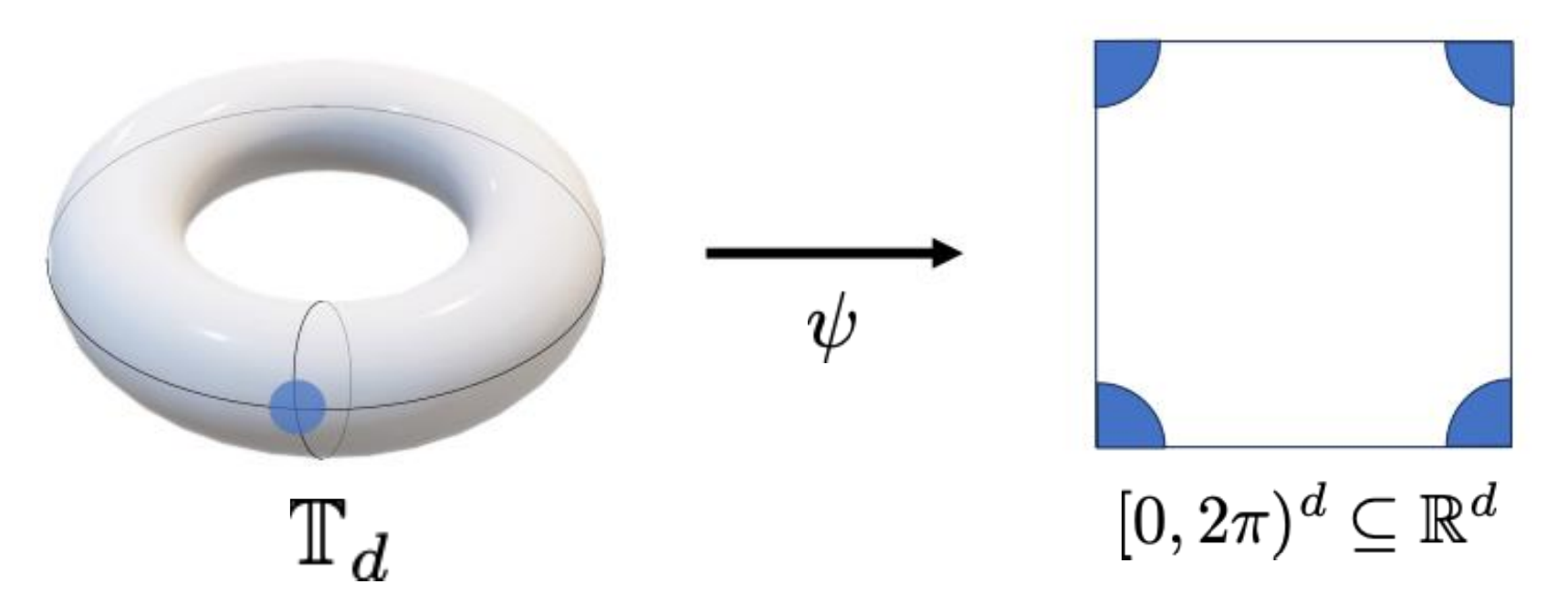}
\caption{A probability density $\mu$ with one mode (blue) on the torus.  The map $\psi$, which maps points in the $d$-dimensional torus  $\mathbb{T}_d$ to Euclidean space $\mathbb{R}^d$, may break up the single mode on the torus into up to $2^d$ separated modes in $\mathbb{R}^d$.
This can make the task of learning the pushforward of the target distribution on $\mathbb{R}^{d}$ much more challenging than the task of learning the original target distribution on the torus, as the distribution in $\mathbb{R}^{d}$ may have exponentially-in-$d$ more modes.}
\label{fig_torus}
\end{center}
\end{figure}

\section{Illustration of our framework for Euclidean space, torus, special orthogonal group, and unitary group}
 \label{appendix_use_cases}

\begin{enumerate}

\item  {\bf Euclidean space $\mathbb{R}^d$.} In the Euclidean case, our algorithm (with the above choice of $\varphi, \psi$) recovers the algorithms of diffusion models on $\mathbb{R}^d$ from prior works (e.g., \cite{ho2020denoising, rombach2022high}).
The forward diffusion is the Ornstein-Uhlenbeck process with SDE $\mathrm{d}Z_t = -\frac{1}{2}Z_t \mathrm{d}t + \mathrm{d}B_t$ initialized at the target distribution $\pi$, where $B_t$ is the standard Brownian motion.
The training objective for the drift term $f(z,t)$ of the reverse diffusion is given by $\|(\hat{z}^\top  \frac{\hat{z}- b e^{-\frac{1}{2}(T-t)}}{e^{-(T-t)}-1}    -  f(\hat{z}, t) \|^2$ where $b$ is a point sampled from the dataset and $\hat{z}$ is a point sampled from $Z_{T-t}| \{Z_0 = b\}$ which is Gaussian distributed as $N(b e^{-\frac{1}{2}(T-t)}, \sqrt{1-e^{-(T-t)}} I_d)$ (see Section \ref{sec_algorithm}).
 The number of arithmetic operations to compute the training objective is therefore the same as for previous diffusion models in Euclidean space.

\item {\bf Torus $\mathbb{T}_d$.} For the torus, the forward and reverse diffusion of our model are the same as the models used in previous diffusion models on the torus \cite{de2022riemannian} \cite{lou2023scaling}.
The Forward diffusion is given by the SDE $\mathrm{d}X_t = -\frac{1}{2}X_t \mathrm{d}t + \mathrm{d}B_t$ on the torus, initialized at the target distribution $\pi$.

 The only difference is in the training objective function.
 To obtain our objective function, we observe that $X_t$ is the projection $X_t = \varphi(Z_t)$ of the Ornstein-Uhlenbeck diffusion on $\mathbb{R}^d$ via our choice of projection map $\varphi$ for the torus.
 The drift term $f$ for the reverse diffusion can be trained by minimizing the objective function
 $\|\hat{z}^\top  \frac{\hat{z}-\psi(b)e^{-\frac{1}{2}(T-t)}}{e^{-(T-t)}-1}  -  f(\varphi(\hat{z}), t) \|^2$, where $\hat{z} \sim N(b e^{-\frac{1}{2}(T-t)}, \sqrt{1-e^{-(T-t)}} I_d)$.
Our objective function can be computed in $O(d)$ arithmetic operations, improving by an exponential factor on the per-iteration training runtime of \cite{de2022riemannian} which relies on an inefficient expansion of the heat kernel which requires an exponential-in-$d$ number of arithmetic operations to compute, and matching the per-iteration training runtime of \cite{lou2023scaling} who derive a more efficient expansion for the heat kernel in the special case of the torus.

  \item {\bf Sphere $\mathbb{S}_{d-1}$.}

{\bf Forward diffusion.} 
 We first choose the projection map $\varphi: \mathbb{R}^d \rightarrow \mathbb{S}_{d-1}$ to be $\varphi(x) = \frac{x}{\|x\|}$ for $x \in \mathbb{S}_{d-1}$, and $\psi :  \mathbb{S}_{d-1} \rightarrow \mathbb{R}^{d}$ to be the usual embedding of the unit sphere into $\mathbb{R}^{d}$.
 We define our forward diffusion to be the projection $X_t = \varphi(Z_t)$ of the Euclidean-space Ornstein-Uhlenbeck diffusion $Z_t$  onto the manifold $\mathcal{M}$, where $Z_t$ is initialized at the pushforward $\psi(\pi)$ of the target distribution $\pi$ onto $\mathbb{R}^d$.
Since the Ornstein-Uhlenbeck distribution $Z_t$ is a Gaussian process, each sample from our forward diffusion can be computed by drawing a single sample from a Gaussian distribution and computing the projection map $\varphi$ once.

 The forward and reverse diffusion of our model on the sphere are different than those of prior diffusion models on the sphere.
 The evolution of our forward diffusion $X_t$ on the sphere is governed by the SDE $\mathrm{d}X_t = \alpha(X_t, t) (- \frac{1}{2} X_t \mathrm{d}t + \mathrm{d}B_t)$ initialized at the target distribution $\pi$,
where the coefficient  $\alpha(t)$ is given by the conditional expectation $ \alpha(X_t, t) := \mathbb{E}\left [\frac{1}{\| Z_t\|} \big | \varphi(Z_t) = X_t\right ]$.
 Our forward (and reverse) diffusion has a (time-varying and) spatially-varying covariance term $ \alpha(X_t, t) \mathrm{d}B_t$ not present in prior models \cite{de2022riemannian} \cite{lou2023scaling}. 
 This covariance term, which accounts for the curvature of the sphere, allows our forward diffusion to be computed as a projection of Euclidean Brownian motion onto the sphere despite the sphere's non-zero curvature.

{\bf Training the model.} The SDE for the reverse diffusion of our model has both a drift and a covariance term.
To train a model $f$ for the drift term, we first sample a point $b$ from the dataset $D$ at a random time $t\in [0,T]$,  and point $\hat{z}$ from the Ornstein-Uhlenbeck diffusion $Z_t$ initialized at $\psi(b)$, which is Gaussian distributed.
 Next, we project this sample $\hat{z}$ to obtain a sample $\varphi(\hat{z})$ from our forward diffusion $X_t$ on the manifold.
 Finally, we plug in the point $\varphi(\hat{z})$, and the datapoint $b$ into the training objective function for the drift term $f$, which is given by the closed-form expression 
  $\left \| \frac{1}{\|\hat{z}\|} (I - \frac{1}{\|\hat{z}\|^2}\hat{z} \hat{z}^\top) \frac{\hat{z}-\psi(b)e^{-\frac{1}{2}(T-t)}}{e^{-(T-t)}-1}  -  f(\varphi(\hat{z}), t) \right \|^2$.
 The model for the drift term $f$ is trained by minimizing the expectation of this objective function over random samples of $b \sim D$ and $\hat{z} \sim Z_t$.
 To learn the SDE of the reverse diffusion, we must also train a model for the spatially-varying covariance term, which is given by a $d \times d$  covariance matrix.
 Learning a dense matrix model for this covariance term would require at least  $d^2$ arithmetic operations.
 However, as a result of the symmetries of the sphere, the covariance matrix has additional structure: it is a multiple  $\alpha(X_t, t)$ of the $d \times d$ identity matrix. 
 Thus, to learn this covariance term, it is sufficient to train a model $\hat{\alpha}(X_t, t)$ for $\alpha(X_t, t)$.
 This can be accomplished by minimizing the objective function
  $(\hat{\alpha}(\varphi(\hat{z}), t) - \frac{1}{\|\hat{z}\|})^2$.
  Evaluating our objective functions for the drift term and covariance terms can thus be accomplished via a single evaluation of the projection map $\varphi(x) = \frac{x}{\|x\|}$, which requires $O(d \log\frac{1}{\delta})$ arithmetic operations to compute within accuracy $\delta > 0$ when generating the input to our training objective function, which is sublinear in the dimension $d^2$ of the covariance term.

   In contrast, the forward diffusion used in prior diffusion models on the sphere \cite{de2022riemannian} \cite{lou2023scaling} cannot be computed as the projection of a Euclidean Brownian motion and must instead be computed by solving an SDE (or probability flow ODE) on the sphere.
 This requires a number of arithmetic operations, which is a higher-order polynomial in the dimension $d$ and in the desired accuracy $\frac{1}{\delta}$ (the order of the polynomial depends on the specific SDE or ODE solver used).
 As their training objective function requires samples from the forward diffusion as input, the cost of computing their objective function is therefore at least a higher-order polynomial in $d$ and $\frac{1}{\delta}$ (for \cite{de2022riemannian} it is exponential in $d$, since their training objective relies on an inefficient expansion for the heat kernel which takes $2^d$ arithmetic operations to compute).

   {\bf Sample generation.} 
 Once the models $f(x,t)$ and $g(x,t)$ for the drift and covariance terms of our reverse diffusion are trained, we use these models to generate samples.
  First, we sample a point $z$ from the stationary distribution of the Ornstein-Uhlenbeck process $Z_t$ on $\mathbb{R}^d$, which is Gaussian distributed.
  Next, we project this point $z$ onto the manifold to obtain a point $y = \varphi(z)$, and solve the SDE  $\mathrm{d}Y_t = f(Y_t, t)\mathrm{d}t + g(Y_t, t)\mathrm{d}B_t$  given by our trained model for the reverse diffusion's drift and covariance over the time interval $[0,T]$, starting at the initial point $y$.
 To simulate this SDE we can use any off-the-shelf numerical SDE solver.
 The point $y_T$ computed by the numerical solver at time $T$ is the output of our sample generation algorithm.

 \item  {\bf Special orthogonal group $\mathrm{SO}(n)$ and unitary group $\mathrm{U}(n)$.}
 
  For the special orthogonal group $\mathrm{SO}(n)$ and unitary group $\mathrm{U}(n)$, the forward and reverse diffusion of our model are also different from those of previous works, as our model's diffusions have a spatially-varying covariance term to account for the non-zero curvature of these manifolds.
 As a result of this covariance term, our forward diffusion can be computed as a projection $\varphi$ of the Ornstein-Uhlenbeck process in $\mathbb{R}^d \equiv \mathbb{R}^{n \times n}$ (or  $\mathbb{C}^{n \times n}$) onto the manifold $\mathrm{SO}(n)$ ($\mathrm{U}(n)$).
  This projection can be computed via a single evaluation of the singular value decomposition of a $n \times n$ matrix, which requires at most $O(n^{\omega}) = O(d^{\frac{\omega}{2}})$ arithmetic operations, where $\omega \approx 2.37$ is the matrix multiplication exponent and $d = n^2$ is the manifold dimension.

 The forward diffusion $U(t) \in \mathrm{SO}(n)$ (or $U(t) \in \mathrm{U}(n)$) of our model is given by the system of stochastic differential equations
\begin{equation}
\mathrm{d} u_i(t) = \sum _{j \in [n], j \neq i} \alpha_{i j}(t) \mathrm{d}B_{ij} u_j(t) - \frac{1}{2}  \sum _{j \in [n], j \neq i} \beta_{i j}(t) u_i(t) \mathrm{d} t,
\end{equation}
where $\alpha_{ij}(t) := \mathbb{E}\left [\frac{1}{\lambda_i - \lambda_j} | \varphi(Z_t) = U(t)\right]$ and  $\beta_{ij}(t) := \mathbb{E}\left[\frac{1}{(\lambda_i - \lambda_j)^2} | \varphi(Z_t) = U(t)\right]$
for every $i,j \in [n]$.

A model for the drift term $f$ for the reverse diffusion can be trained by minimizing the objective function
$\| R - \frac{1}{2}DU -  f(\varphi(\hat{z}), t) \|_F^2$
where $R$ is the matrix with $i$'th column $R_i = \frac{e^{-\frac{1}{2}(T-t)}}{e^{-(T-t)}-1} U (\lambda_i I - \Lambda)^+ U^\ast \psi(b) u_i $ for each $i \in [n]$, and $D$ is the diagonal matrix with $i$'th diagonal entry $D_{ii} = \sum_{j \in [n], j \neq i} \frac{1}{\lambda_i - \lambda_j}$ for each $i \in [n]$. 
Here, $\hat{z} = b e^{-\frac{1}{2}(T-t)} + \sqrt{1-e^{-(T-t)}} G$ where $G$ is a Gaussian random matrix with i.i.d. $N(0,1)$ entries and $U \Lambda U^\ast$ denotes the spectral decomposition of $\hat{z} + \hat{z}^\ast$.

 To learn the SDE of the reverse diffusion, we must also train a model for the covariance term, which is given by a $d \times d = n^2 \times n^2$  covariance matrix.
 To train a model for this covariance term with runtime sublinear in the number of matrix entries $n^4$, we observe that as a result of the symmetries of the orthogonal (or unitary) group, the covariance term in \eqref{eq_DBM_projection} is fully determined by the $n^2$ scalar terms $\alpha_{i j}(t)$ for $i,j \in [n]$ and the $n \times n$ matrix $U$.
 Thus, to learn the covariance term, it is sufficient to train a model $\mathcal{A}(U, t) \in \mathbb{R}^{n \times n}$ for these $n^2$ terms, which can be done by minimizing the objective function
 $\|\mathcal{A}(U, t) - A \|_F^2$,
 where $A$ is the $n \times n$ matrix with $(i,j)$'th entry $A_{ij} = \frac{1}{\lambda_i - \lambda_j}$ for $i,j \in [n]$, and $\lambda_i$ denotes the $i$'th diagonal entry of $\Lambda$.
 
 The training objective function for both the drift and covariance term can thus be computed via a singular value decomposition of an $n\times n$ matrix (and matrix multiplications of $n \times n$ matrices),  which requires at most $O(n^{\omega}) = O(d^{\frac{\omega}{2}})$ arithmetic operations, where $\omega \approx 2.37$ is the matrix multiplication exponent and $d = n^2$ is the manifold dimension.
   In contrast, the training objectives in prior works, including \cite{de2022riemannian} \cite{lou2023scaling}, require an exponential in dimension number of arithmetic operations to compute, as they rely on the heat kernel of the manifold, which lacks an efficient closed-form expression.
  Instead, their training algorithm requires computing an expansion for the heat kernel of these manifolds, which is given as a sum of terms over the $d$-dimensional lattice, and one requires computing roughly $2^d$ of these terms to compute the heat kernel within an accuracy of $O(1)$.

   \end{enumerate}

\section{Generalization to non-symmetric manifolds}
\label{section_non_symmetric}

While our theoretical framework and guarantees are developed for symmetric Riemannian manifolds, it is natural to ask whether the approach can extend to more general geometries. In this section, we outline the minimal set of geometric and analytic conditions required for our guarantees—on runtime, simulation accuracy, and Lipschitz continuity—to continue holding on non-symmetric manifolds. We also illustrate, via a concrete example, how two of these conditions can be satisfied even on non-smooth, non-Riemannian domains, and discuss the challenges that arise in the absence of continuous symmetries.

Our guarantees rely on the following three key properties:

\begin{enumerate}
\item \textbf{Exponential map oracle.} An oracle for computing the exponential map on the manifold $\mathcal{M}$.

\item \textbf{Projection map oracle.} A projection map $\varphi: \mathbb{R}^d \rightarrow \mathcal{M}$, where $d = O(\mathrm{dim}(\mathcal{M}))$, along with efficient computation of its Jacobian $J_\varphi(x)$ and the trace of its Hessian $\mathrm{tr}(\nabla^2 \varphi(x))$, both of which appear in our training objective.

\item \textbf{Lipschitz SDE on $\mathcal{M}$.} The projection $Y_t = \varphi(H_t)$ of the time-reversed Euclidean Brownian motion $H_t$ must satisfy a stochastic differential equation on $\mathcal{M}$ whose drift and diffusion coefficients are $L$-Lipschitz everywhere on $\mathcal{M}$, with $L$ growing at most polynomially in $d$. This is essential for the reverse diffusion process to be simulated accurately and efficiently.
\end{enumerate}

\noindent
Conditions (1) and (2) may hold even when $\mathcal{M}$ lacks the high degree of symmetry assumed in our main results. For example, suppose $\mathcal{M}$ is the boundary of a compact convex polytope $K \subseteq \mathbb{R}^d$, which contains a ball of radius $r > 0$ centered at a point $p$. Although such a polytope is not a smooth manifold due to singularities at vertices and edges, its boundary is composed of piecewise flat $(d-1)$-dimensional faces. Geodesics restricted to a single face are linear and computable efficiently, satisfying the spirit of property (1).

For property (2), one can define a projection $\varphi: \mathbb{R}^d \rightarrow \mathcal{M}$ that maps any $x \in \mathbb{R}^d$ to the point where the ray emanating from $p$ and passing through $x$ intersects the boundary $\mathcal{M}$. This projection can be computed efficiently, e.g., via binary search or ray-casting techniques.

However, property (3) is significantly harder to satisfy in such domains. The drift of the reverse SDE projected onto $\mathcal{M}$ exhibits discontinuities at the vertices and lower-dimensional faces of the polytope. In our analysis, we crucially rely on the continuous symmetries of the manifold to ``smooth out'' such irregularities and to prove average-case Lipschitz continuity (see the discussion on ``average-case'' Lipschitzness on page~6).

Even among smooth Riemannian manifolds, generalizing beyond symmetric spaces remains nontrivial. Examples include surfaces of revolution with varying curvature (e.g., tori with non-uniform cross-sections) or higher-genus manifolds such as a double torus. These lack the homogeneous structure exploited in our proofs and pose new challenges for both analysis and algorithm design. Extending our framework to such settings represents a promising direction for future research.

\section{Notation}\label{app:notation}

\begin{enumerate}

\item \textbf{Tangent space $\mathcal{T}_x \mathcal{M}$.}  
Given a smooth manifold $\mathcal{M}$ and a point $x \in \mathcal{M}$, the tangent space at $x$ is denoted by $\mathcal{T}_x \mathcal{M}$.

\item \textbf{Riemannian manifold.}  
A Riemannian manifold is a smooth manifold $\mathcal{M}$ equipped with a Riemannian metric $g$, which assigns to each point $x \in \mathcal{M}$ a positive definite inner product $g_x : \mathcal{T}_x \mathcal{M} \times \mathcal{T}_x \mathcal{M} \rightarrow \mathbb{R}$.

\item \textbf{Exponential map $\exp(x,v)$.}  
Given $x \in \mathcal{M}$ and $v \in \mathcal{T}_x \mathcal{M}$, there exists a unique geodesic $\gamma$ such that $\gamma(0) = x$ and $\gamma'(0) = v$. The exponential map is defined as $\exp(x, v) := \gamma(1)$, i.e., the point reached by traveling along the geodesic for unit time.

\item \textbf{Parallel transport $\Gamma_{x \rightarrow y}(v)$.}  
For $x, y \in \mathcal{M}$ and $v \in \mathcal{T}_x \mathcal{M}$, $\Gamma_{x \rightarrow y}(v)$ denotes the parallel transport of $v$ along the (unique) distance-minimizing geodesic from $x$ to $y$. This transport yields a vector in $\mathcal{T}_y \mathcal{M}$.

\item \textbf{Geodesic distance $\rho$.}  
The geodesic distance $\rho(x, y)$ between points $x, y \in \mathcal{M}$ is the length of the shortest path (geodesic) connecting them on the manifold.

\item \textbf{Jacobian $J_\varphi$.}  
Let $\varphi: \mathcal{M} \rightarrow \mathcal{N}$ be a differentiable map between Riemannian manifolds. The Jacobian (or differential) at $x \in \mathcal{M}$ is the linear map $J_\varphi : \mathcal{T}_x \mathcal{M} \rightarrow \mathcal{T}_{\varphi(x)} \mathcal{N}$, defined by the directional derivative of $\varphi$ at $x$. In coordinates, $J_\varphi(\partial_{x_i})_j = \partial_{x_i} \varphi_j$, where $\partial_{x_i}$ denotes the $i$th basis vector of $\mathcal{T}_x \mathcal{M}$.  
In the special case $\mathcal{M} = \mathbb{R}^m$ and $\mathcal{N} = \mathbb{R}^n$, $J_\varphi$ is the matrix whose $(i,j)$th entry is $\partial \varphi_i / \partial x_j$.

\item \textbf{Indicator function $\mathbbm{1}_A(x)$.}  
Given a set $A \subseteq \mathcal{X}$, the indicator function $\mathbbm{1}_A : \mathcal{X} \rightarrow \{0,1\}$ is defined by:
\[
\mathbbm{1}_A(x) = 
\begin{cases}
1 & \text{if } x \in A, \\
0 & \text{otherwise}.
\end{cases}
\]

\item \textbf{Total variation distance.} Given two probability measures $\mu$ and $\nu$ on a measurable space $\mathcal{X}$, the total variation distance between them is defined as
\[
\|\mu - \nu\|_{\mathrm{TV}} := \sup_{A \subseteq \mathcal{X}} |\mu(A) - \nu(A)|,
\]
where the supremum is taken over all measurable subsets \( A \subseteq \mathcal{X} \).

\item \textbf{KL divergence.} For probability measures $\mu$ and $\nu$ on a measurable space $\mathcal{X}$, with $\mu \ll \nu$ (i.e., $\mu$ is absolutely continuous with respect to $\nu$), the Kullback–Leibler (KL) divergence from $\nu$ to $\mu$ is defined as
\[
D_{\mathrm{KL}}(\mu \,\|\, \nu) := \int_{\mathcal{X}} \log \left( \frac{\mathrm{d}\mu}{\mathrm{d}\nu}(x) \right) \, \mathrm{d}\mu(x),
\]
where \( \frac{\mathrm{d}\mu}{\mathrm{d}\nu} \) denotes the Radon–Nikodym derivative of $\mu$ with respect to $\nu$.

\item \textbf{Pinsker's inequality.} For any two probability measures $\mu$ and $\nu$,
\[
\|\mu - \nu\|_{\mathrm{TV}} \leq \sqrt{2 D_{\mathrm{KL}}(\mu \,\|\, \nu)}.
\]
This inequality provides an upper bound on the total variation distance in terms of the KL divergence.

\item \textbf{Wasserstein distance $W_k(\mu, \nu)$.}  
Let $\mu$ and $\nu$ be probability measures on a metric space $(\mathcal{M}, \rho)$, and let $k \in \mathbb{N}$. The $k$-Wasserstein distance is defined as:
\[
W_k(\mu, \nu) := \inf_{\pi \in \Phi(\mu, \nu)} \left( \mathbb{E}_{(X,Y) \sim \pi} [\rho^k(X,Y)] \right)^{1/k},
\]
where $\Phi(\mu, \nu)$ denotes the set of all couplings of $\mu$ and $\nu$.

\item \textbf{Operator norm $\|A\|_{2 \rightarrow 2}$.}  
Given a multilinear map $A: V_1 \times \cdots \times V_k \rightarrow W$ between normed vector spaces, the operator norm is:
\[
\|A\|_{2 \rightarrow 2} := \sup_{v_1 \in V_1 \setminus \{0\}, \dots, v_k \in V_k \setminus \{0\}} \frac{\|A(v_1, \dots, v_k)\|_2}{\|v_1\|_2 \cdots \|v_k\|_2}.
\]

\item \textbf{Partial derivative $\frac{\mathrm{d}}{\mathrm{d}U}$.}  
In parameterizations of the form $x = x(U, \Lambda)$, we write $\frac{\mathrm{d}}{\mathrm{d}U} x(U, \Lambda)$ for the derivative with respect to $U \in \mathcal{M}$. For example, if $\mathcal{M} = \mathrm{SO}(n)$ and $x(U, \Lambda) = U \Lambda U^\top$, then this derivative corresponds to projecting $U \Lambda + \Lambda U^\top$ onto the tangent space of $\mathrm{SO}(n)$.

\end{enumerate}

\section{Primer on Riemannian geometry and diffusions on manifolds}\label{app:preliminaries}

Let $\mathcal{M}$ be a topological space equipped with an open cover $\{U_\alpha\}_{\alpha \in \mathcal{A}}$ and a corresponding collection of homeomorphisms $\phi_\alpha : U_\alpha \rightarrow \mathbb{R}^d$.
Each pair $(U_\alpha, \phi_\alpha)$ is called a \emph{chart}, and the collection $\{(U_\alpha, \phi_\alpha)\}_{\alpha \in \mathcal{A}}$ is referred to as an \emph{atlas} for the manifold.
For an optimization-oriented overview of smooth manifolds, geodesics, and differentiability, see~\cite{VishnoiGeodesic}.

We say that $\mathcal{M}$ is a \emph{smooth manifold} if the \emph{transition maps} $\phi_\beta \circ \phi_\alpha^{-1}$ are $C^\infty$-smooth functions on their domain for all overlapping chart pairs $\alpha, \beta \in \mathcal{A}$.

A real-valued function $f: \mathcal{M} \rightarrow \mathbb{R}$ is differentiable at a point $x \in \mathcal{M}$ if it is differentiable in some chart $(U_\alpha, \phi_\alpha)$ containing $x$.
Similarly, a curve $\gamma: [0,1] \rightarrow \mathcal{M}$ is differentiable if $\phi_\alpha(\gamma(t))$ is a differentiable curve in $\mathbb{R}^d$ for all $t$ such that $\gamma(t) \in U_\alpha$.

The derivative of a differentiable curve passing through $x \in \mathcal{M}$ defines a tangent vector at $x$. The collection of all tangent vectors at $x$ is the \emph{tangent space} $\mathcal{T}_x \mathcal{M}$, which is isomorphic to $\mathbb{R}^d$.

A \emph{Riemannian manifold} is a pair $(\mathcal{M}, g)$ consisting of a smooth manifold $\mathcal{M}$ and a smooth function $g$, called the \emph{Riemannian metric}, which assigns to each point $x \in \mathcal{M}$ a positive-definite inner product $g_x : \mathcal{T}_x \mathcal{M} \times \mathcal{T}_x \mathcal{M} \rightarrow \mathbb{R}$.

By the fundamental theorem of Riemannian geometry (see, e.g., Theorem 2.2.2 in \cite{petersen2006riemannian}), there exists a unique torsion-free affine connection $\blacktriangledown$ on $(\mathcal{M}, g)$, known as the \emph{Levi-Civita connection}, which enables isometric parallel transport between tangent spaces.

The Riemannian metric $g$ induces a length on any differentiable curve $\gamma$ via:
\[
\mathrm{length}(\gamma) = \int_0^1 \sqrt{g_{\gamma(t)}(\gamma'(t), \gamma'(t))} \, \mathrm{d}t.
\]
The distance between two points $x, y \in \mathcal{M}$ is defined as the infimum of the lengths of all curves joining them:
\[
\rho(x, y) := \inf_{\gamma(0) = x, \gamma(1) = y} \mathrm{length}(\gamma).
\]
A \emph{geodesic} is a curve $\gamma(t)$ such that parallel transport of the initial velocity $\gamma'(0)$ along $\gamma$ yields the velocity vector $\gamma'(t)$ at all times. Given any initial velocity $v \in \mathcal{T}_x \mathcal{M}$, there exists a unique geodesic $\gamma$ with $\gamma(0) = x$ and $\gamma'(0) = v$. The endpoint at unit time defines the \emph{exponential map}:
\[
\exp(x, v) := \gamma(1).
\]
Given a smooth map $\varphi : \mathcal{M} \rightarrow \mathcal{N}$ between Riemannian manifolds, the \emph{Jacobian} (or differential) at $x \in \mathcal{M}$ is a linear map $J_\varphi : \mathcal{T}_x \mathcal{M} \rightarrow \mathcal{T}_{\varphi(x)} \mathcal{N}$ defined as the directional derivative of $\varphi$ at $x$. In local coordinates, we may write:
\[
J_\varphi(\partial_{x_i})_j = \partial_{x_i} \varphi_j,
\]
where $\partial_{x_i}$ is the $i$th coordinate basis vector in $\mathcal{T}_x \mathcal{M}$ and $\varphi_j$ is the $j$th component function of $\varphi$. In the special case $\mathcal{M} = \mathbb{R}^m$ and $\mathcal{N} = \mathbb{R}^n$, the Jacobian becomes the matrix whose $(i,j)$th entry is $\frac{\partial \varphi_i}{\partial x_j}$.

Riemannian manifolds possess a notion of curvature that is intrinsic to the manifold and independent of any ambient embedding. This curvature is described by the \emph{Riemannian curvature tensor}, a multilinear map that encodes how the manifold bends locally.

At any point \( x \in \mathcal{M} \), the Riemannian curvature tensor is a multilinear map  
\[
R_x : \mathcal{T}_x \mathcal{M} \times \mathcal{T}_x \mathcal{M} \times \mathcal{T}_x \mathcal{M} \rightarrow \mathcal{T}_x \mathcal{M},
\]
which assigns to each pair of tangent vectors \( u, v \in \mathcal{T}_x \mathcal{M} \) a linear operator 
\[
R(u,v): \mathcal{T}_x \mathcal{M} \rightarrow \mathcal{T}_x \mathcal{M}.
\]
This operator acts on a third tangent vector \( w \in \mathcal{T}_x \mathcal{M} \) according to the formula:
\[
R(u,v)w = \blacktriangledown_u \blacktriangledown_v w - \blacktriangledown_v \blacktriangledown_u w - \blacktriangledown_{[u,v]} w,
\]
where \( \blacktriangledown \) is the aformentioned Levi-Civita connection and \( [u,v] \) is the Lie bracket of vector fields \( u \) and \( v \). Intuitively, this expression measures the failure of second covariant derivatives to commute, and hence captures the intrinsic curvature of the manifold.

Let \( \mathcal{M} \) be a Riemannian manifold and let \( x \in \mathcal{M} \) be a point. For any two linearly independent tangent vectors \( u, v \in \mathcal{T}_x \mathcal{M} \), the \emph{sectional curvature} \( K(u,v) \) is defined as the Gaussian curvature of the 2-dimensional surface in \( \mathcal{M} \) obtained by exponentiating the plane spanned by \( u \) and \( v \) at \( x \).
 Formally, the sectional curvature of the plane \( \Pi = \mathrm{span}\{u,v\} \subseteq \mathcal{T}_x \mathcal{M} \) is given by:
\[
K(u,v) := \frac{\langle R(u,v)v, u \rangle}{\|u\|^2 \|v\|^2 - \langle u, v \rangle^2},
\]
where \( R \) is the Riemann curvature tensor, and \( \langle \cdot, \cdot \rangle \) is the Riemannian metric on \( \mathcal{T}_x \mathcal{M} \) and $\| \cdot \|$ its associated 2-norm.

A vector field \( J(t) \) along a geodesic \( \gamma(t) \) on a Riemannian manifold \( \mathcal{M} \) is called a \emph{Jacobi field} if it satisfies the second-order differential equation:
\[
\frac{\mathrm{D}^2 J}{\mathrm{d}t^2} + R(J(t), \gamma'(t)) \gamma'(t) = 0,
\]
where \( \frac{\mathrm{D}}{\mathrm{d}t} \) denotes the covariant derivative along \( \gamma \), and \( R \) is the Riemann curvature tensor.
Jacobi fields describe the infinitesimal variation of geodesics and are used to analyze how nearby geodesics converge or diverge. The behavior of Jacobi fields encodes information about the curvature of the manifold.

A fundamental result relating curvature to the behavior of geodesics is the \emph{Rauch comparison theorem}. It states that the rate at which geodesics deviate from one another depends on the sectional curvature of the manifold. 
Formally, let \( \mathcal{M}_1 \) and \( \mathcal{M}_2 \) be two Riemannian manifolds of the same dimension, and suppose that along corresponding geodesics the sectional curvatures satisfy \( K_1 \leq K_2 \). Then, for Jacobi fields \( J_1(t), J_2(t) \) orthogonal to the geodesics with the same initial length and vanishing initial derivative, we have:
\[
\|J_1(t)\| \geq \|J_2(t)\| \quad \text{for all } t > 0.
\]
Intuitively, this means that geodesics spread apart more quickly in spaces with lower curvature. In particular, manifolds with non-negative sectional curvature constrain the divergence of nearby geodesics, a fact that we use in our analysis of diffusion processes on symmetric manifolds.

Given two probability measures $\mu, \nu$ on $\mathcal{M}$ and an integer $k \in \mathbb{N}$, the $k$-Wasserstein distance between $\mu$ and $\nu$ is:
\[
W_k(\mu, \nu) := \inf_{\pi \in \Phi(\mu, \nu)} \left( \mathbb{E}_{(X,Y) \sim \pi} \left[ \rho^k(X, Y) \right] \right)^{1/k},
\]
where $\Phi(\mu, \nu)$ is the set of all couplings (joint distributions) with marginals $\mu$ and $\nu$.

Diffusions on manifolds can be defined analogously to Euclidean settings, by interpreting $\mathrm{d}B_t$ as infinitesimal Brownian motion in the tangent space (see \cite{hsu2002stochastic}).
In particular, It\^o's Lemma extends to maps $\psi: \mathcal{M} \rightarrow \mathcal{N}$ between Riemannian manifolds via the Nash embedding theorem \cite{nash1956imbedding}, which ensures that any $d$-dimensional Riemannian manifold can be isometrically embedded in $\mathbb{R}^{2d+1}$.

\end{document}